\documentclass[twoside,12pt]{article}

\usepackage{fullpage}
\usepackage{natbib}
\usepackage[hypertexnames=false]{hyperref}
\hypersetup{
	colorlinks,
	linkcolor={red!50!black},
	citecolor={blue!50!black},
	urlcolor={blue!80!black}
}
\usepackage{times}
\usepackage{lastpage}
\usepackage{amsmath,amsfonts,bm}
\usepackage{amsthm}
\usepackage{xcolor}

\usepackage{sty/OCO}
\usepackage[algo2e,ruled]{algorithm2e}
\usepackage{algorithm,algcompatible}
\usepackage{pdflscape}
\usepackage{subfigure}
\usepackage{tabularx}
\usepackage{multirow}

\hyphenation{off-line}

\bibliographystyle{rusnat}

\let\origlog\log  
\let\log\ln 
\let\Tr\tr  

\DeclareRobustCommand{\VAN}[3]{#2} 

\newcommand{\metagradfull}{MetaGrad Full}
\newcommand{\metagradsketch}{MetaGrad Sketch}
\newcommand{\metagraddiag}{MetaGrad Coord}
\newcommand{\rtrick}{\tilde{r}}     %
\newcommand{\Rtrick}{\tilde{R}}     %
\newcommand{\activeset}{\mathcal{A}}     %
\renewcommand{\U}{\mathcal{W}}      %
\newcommand{\approxSigma}{\widetilde{\Sigma}}
\newcommand{\approxgrad}{\grad}  %
\newcommand{\clipgrad}{\bar \grad}  %
\newcommand{\clipsurr}{\bar \surr}  %
\newcommand{\Intersect}{\bigcap}    %

\newcommand{\lambdamax}{\lambda_\text{max}}

\DeclareBoldMathCommand{\X}{X}
\DeclareBoldMathCommand{\V}{V}
\DeclareBoldMathCommand{\W}{W}
\DeclareBoldMathCommand{\bLambda}{\Lambda}
\DeclareBoldMathCommand{\H}{H}
\DeclareBoldMathCommand{\B}{B}
\DeclareBoldMathCommand{\F}{F}
\DeclareBoldMathCommand{\G}{G}
\DeclareBoldMathCommand{\r}{r}
\DeclareBoldMathCommand{\q}{q}
\DeclareBoldMathCommand{\g}{g}
\DeclareBoldMathCommand{\reallyU}{U}
\DeclareBoldMathCommand{\0}{0}
\DeclareMathOperator{\sign}{sign}
\DeclareMathOperator{\shrink}{s}

\newcommand{\sumT}{\sum_{t=1}^T}

\makeatletter
\newcommand{\multiline}[1]{%
  \begin{tabularx}{\dimexpr\linewidth-\ALG@thistlm-1.5cm}[t]{@{}X@{}}
    #1
  \end{tabularx}
}
\makeatother
\newlength{\commentWidth}
\newcommand{\simplecomment}[1]{\ensuremath{\triangleright}~\textit{#1}}
\newcommand{\spacycomment}[1]{\hfill
\makebox[\commentWidth][l]{\simplecomment{#1}}\hspace{0.6cm}\mbox{}}

\newtheorem{theorem}{Theorem}
\newtheorem{lemma}{Lemma}
\newtheorem{definition}{Definition}
\newtheorem{corollary}{Corollary}
\newtheorem{proposition}{Proposition}

\begin{document}

\title{MetaGrad: Adaptation using Multiple Learning Rates\newline in
Online Learning\footnote{An earlier conference version of this paper
appeared at NeurIPS 2016 \citep{metagrad}.}}

\author{Tim van Erven \textit{tim@timvanerven.nl} \\
       Korteweg-de Vries Institute for Mathematics\\
       University of Amsterdam\\
       Science Park 107, 1098 XG Amsterdam, The Netherlands
       \and 
       Wouter M. Koolen \textit{wmkoolen@cwi.nl} \\
       Centrum Wiskunde \& Informatica\\
       Science Park 123, 1098 XG Amsterdam, The Netherlands
       \and
       Dirk van der Hoeven \textit{dirk@dirkvanderhoeven.com} \\
       Mathematical Institute\\
       Leiden University\\
       Niels Bohrweg 1, 2300 RA Leiden, The Netherlands}
       
\maketitle

\begin{abstract}%
We provide a new adaptive method for online convex optimization,
MetaGrad, that is robust to general convex losses but achieves faster
rates for a broad class of special functions, including exp-concave and
strongly convex functions, but also various types of stochastic and
non-stochastic functions without any curvature. We prove this by drawing
a connection to the Bernstein condition, which is known to imply fast
rates in offline statistical learning. MetaGrad further adapts
automatically to the size of the gradients. Its main feature is that it
simultaneously considers multiple learning rates, which are weighted
directly proportional to their empirical performance on the data using a
new meta-algorithm. We provide three versions of MetaGrad. The full
matrix version maintains a full covariance matrix and is applicable to
learning tasks for which we can afford update time quadratic in the
dimension. The other two versions provide speed-ups for high-dimensional
learning tasks with an update time that is linear in the dimension: one
is based on sketching, the other on running a separate copy of the basic
algorithm per coordinate. We evaluate all versions of MetaGrad on
benchmark online classification and regression tasks, on which they
consistently outperform both online gradient descent and AdaGrad.
\end{abstract}

\section{Introduction}

Methods for \emph{online convex optimization} (OCO)
\citep{ShalevShwartz2012,Hazan2016} make it possible to optimize
parameters sequentially, by processing convex functions in a streaming
fashion. This is important in time series prediction where the data are
inherently online; but it may also be convenient to process offline data
sets sequentially, for instance if the data do not all fit into memory
at the same time or if parameters need to be updated quickly when extra
data become available.

The difficulty of an OCO task depends on the convex functions
$f_1,f_2,\ldots,f_T$ that need to be optimized. The argument of these
functions is a $d$-dimensional parameter vector $\w$ from a convex
domain $\U$. Although this is abstracted away in the general framework,
each function $f_t$ usually measures the loss of the parameters on an
underlying example $(\x_t,y_t)$ in a machine learning task. For example,
in classification $f_t$ might be the \emph{hinge loss} $f_t(\w) =
\max\{0,1-y_t \w^\top \x_t\}$ or the \emph{logistic loss} $f_t(\w) =
\log\del*{1 + e^{-y_t \w^\top \x_t}}$, with $y_t \in \{-1,+1\}$. Thus
the difficulty depends both on the choice of loss and on the observed
data.

There are different methods for OCO, depending on assumptions that can
be made about the functions. The simplest and most commonly used
strategy is \emph{online gradient descent} (OGD). OGD updates parameters
$\w_{t+1} = \w_t - \eta_t \nabla f_t(\w_t)$ by taking a step in the
direction of the negative gradient, where the step size is determined by
a parameter $\eta_t$ called the \emph{learning rate}. The goal is to
minimize the \emph{regret}
\[
  R_T^\u = \sum_{t=1}^T f_t(\w_t) - \sum_{t=1}^T f_t(\u)
\]
over $T$ rounds, which measures the difference in cumulative loss
between the online iterates $\w_t$ and the best offline parameters $\u$.
For learning rates $\eta_t \propto 1/\sqrt{t}$, OGD guarantees that the
regret for general convex Lipschitz functions is bounded by
$O(\sqrt{T})$ \citep{Zinkevich2003}. Alternatively, if it is known
beforehand that the functions are of an easier type, then better regret
rates are sometimes possible. For instance, if the functions are
\emph{strongly convex}, then logarithmic regret $O(\log T)$ can be
achieved by OGD with much smaller learning rates $\eta_t \propto 1/t$
\citep{ons}, and, if they are \emph{exp-concave}, then logarithmic
regret $O(d \log T)$ can be achieved by the \emph{Online Newton Step}
(ONS) algorithm \citep{ons}.

This partitions OCO tasks into categories, leaving it to the user to
choose the appropriate algorithm for their setting. Such a strict
partition, apart from being a burden on the user, depends on an
extensive cataloguing of all types of easier functions that might occur
in practice. (See Section~\ref{sec:fastRateExamples} for several ways in
which the existing list of easy functions can be extended.) It also
immediately raises the question of whether there are cases in between
logarithmic and square-root regret (there are, see
Theorem~\ref{thm:Bernstein} in Section~\ref{sec:fastRateExamples}), and
which algorithm to use then. And, third, it presents the problem that
the appropriate algorithm might depend on (the distribution of) the data
(again see Section~\ref{sec:fastRateExamples}), which makes it entirely
impossible to select the right algorithm beforehand. 

These issues motivate the development of \emph{adaptive} methods, which
are no worse than $O(\sqrt{T})$ for general convex functions, but also
automatically take advantage of easier functions whenever possible. An
important step in this direction are the adaptive OGD algorithm of
\citet{BartlettHazanRakhlin2007} and its proximal improvement by
\citet{Do2009}, which are able to interpolate between strongly convex
and general convex functions if they are provided with a data-dependent
strong convexity parameter in each round, and significantly outperform
the main non-adaptive method (i.e.\ Pegasos by
\citealt{Shalev-ShwartzEtAl2011Pegasos}) in
the experiments of \citeauthor{Do2009}. Here we consider a significantly richer
class of functions, which includes exp-concave functions, strongly
convex functions, general convex functions that do not change between
rounds (even if they have no curvature), and stochastic functions whose
gradients satisfy the so-called Bernstein condition, which is well-known
to enable fast rates in offline statistical learning
\citep{BartlettMendelson2006,VanErven2015FastRates,bernfast}.
The latter group can again include functions without curvature, like the
unregularized hinge loss. All these cases are covered simultaneously by
a new adaptive method we call \emph{MetaGrad}, for \underbar{m}ultiple
\underbar{eta} \underbar{grad}ient algorithm. Assuming that the radius
of the domain $\U$ and the $\ell_2$-norms of the gradients $\grad_t =
\nabla f_t(\w_t)$ are both bounded by constants,
Theorem~\ref{thm:mainbound} and Corollary~\ref{cor:roughthm} imply that
MetaGrad's regret is simultaneously bounded by
\begin{equation}\label{eqn:gdBigORegret}
  R_T^\u = O(\sqrt{T \log \log T})
\end{equation}
and by
\begin{equation}\label{eqn:roughmainbound}
R_T^\u
~\le~
\sum_{t=1}^T (\w_t - \u)^\top \grad_t
~=~
O\del*{
\sqrt{
  V_T^\u\,
  d \ln(T/d)
}
+ d \ln(T/d)
}
\end{equation}
for any $\u \in \U$, where
\[
  V_T^\u := \sum_{t=1}^T \del*{(\u - \w_t)^\top \grad_t}^2.
\]
The inequality $R_T^\u \leq \Rtrick_T^\u := \sum_{t=1}^T (\w_t -
\u)^\top \grad_t$ is a direct consequence of convexity of the loss and
holds for any learning algorithm, so the important part of
\eqref{eqn:roughmainbound} is that it bounds $\Rtrick_T^\u$ in terms of
a measure of variance $V_T^\u$ that depends on the distance of the
algorithm's choices $\w_t$ to the optimum $\u$, and which, in favorable
cases, may be significantly smaller than $T$. Intuitively, this happens,
for instance, when there is a stable optimum $\u$ that the algorithm's
choices $\w_t$ converge to. Formal consequences are given in
Section~\ref{sec:fastRateExamples}, which shows that this bound implies
faster than $O(\sqrt{T})$ regret rates, often logarithmic in $T$, for
all functions in the rich class mentioned above. In all cases the
dependence on $T$ in the rates matches what we would expect based on
related work in the literature, and in most cases the dependence on the
dimension $d$ is also what we would expect. Only for strongly convex
functions is there an extra factor $d$. It seems that this is a real
limitation of the method as presented here. In
Section~\ref{sec:conclusion} we discuss a recent extension of MetaGrad
by \citet{WangLuZhang2020} that removes this limitation.

The main difficulty in achieving the regret guarantee in
\eqref{eqn:roughmainbound} is tuning a learning rate parameter $\eta$.
In theory, $\eta$ should be roughly proportional to $1/\sqrt{V_T^\u}$,
but this is not possible using any existing techniques, because the
optimum $\u$ is unknown in advance, and tuning in terms of a uniform
upper bound $\max_\u V_T^\u$ ruins all desired benefits. MetaGrad
therefore runs multiple supporting expert algorithms, one for each
candidate learning rate $\eta$, and combines them with a novel
controller algorithm that learns the empirically best learning rate for
the OCO task in hand. Crucially, the additive regret overhead for
learning the best expert is not of the usual order $O(\sqrt{T})$, which
would dwarf all desired benefits, but only costs a negligible $O(\log
\log T)$.

The experts are instances of exponential weights on the continuous
parameters $\w$ with a quadratic surrogate loss function, which in
particular causes the exponential weights distributions to be
multivariate Gaussian. The resulting updates are closely related to the
ONS algorithm on the original losses, with the twist that here each
expert receives the controller's gradients instead of its own, so only a
single gradient (for the controller) needs to be calculated per round.
We start and stop experts on the fly using a dynamic grid of
exponentially spaced $\eta$-values, which guarantees that at most
$\ceil{\origlog_2 T}$ experts are active at any given time. Since
$\ceil{\origlog_2 T} \leq 30$ as long as $T \leq 10^9$, this seems
computationally acceptable. If not, then the number of experts can be
further reduced at the cost of slightly worse constants in the bound by
spacing the $\eta$ in the grid further apart.

The version of MetaGrad described so far maintains a full covariance
matrix $\F_T = \sum_{t=1}^T \grad_t \grad_t^\top$ of size \mbox{$d \times d$}, where $d$ is the parameter
dimension. This requires $O(d^2)$ computation steps per round
to update, which is prohibitive for large $d$. We therefore also present
two extensions that require less computation: one based on sketching and
one that works coordinatewise. The sketching extension applies the
matrix sketching approach of \citet{luo2017arxiv} to approximate $\F_T$
by a sketch of its top $m-1$ eigenvectors, and requires $O(md)$
amortised update time per round. As shown in
Theorem~\ref{thm:mainsketchbound}, the price we pay for the improved
run-time is that \eqref{eqn:roughmainbound} is replaced by
\[
\Rtrick_T^\u
~=~
O\del*{
\sqrt{
  \del*{V_T^\u + \Omega_{m-1}}
  \,
  d \ln(T/d)
}
+ d \ln(T/d)
},
\]
which includes an extra term $\Omega_{m-1} = \sum_{i=m}^d \lambda_i$ to
account for the remaining eigenvalues in $\F_T$ that are not captured by
the sketch. Thus the hyperparameter $m$ provides a trade-off between
regret and run-time.

Our second extension was inspired by the diagonal version of AdaGrad
\citep{adagrad,McMahanStreeter2010} and runs a separate copy of full
MetaGrad per coordinate, which takes $O(d)$ computation per round, just
like vanilla OGD and AdaGrad. To avoid interactions between coordinates,
we restrict attention to rectangular domains. Whether this restriction
can be lifted is not clear, as discussed in
Section~\ref{sec:metagradcoordanalysis}. The main regret bound for the
coordinatewise extension is obtained by summing the regret bound
\eqref{eqn:roughmainbound} over the coordinates:
\begin{equation}\label{eqn:coordinateintrobound}
\Rtrick_T^\u
~=~
O\del*{
\sum_{i=1}^d \sqrt{
  V_{T,i}^{u_i}\,
  \ln(T)
}
+ d \ln(T)
},
\end{equation}
where $V_{T, i}^{u_i} = \sum_{t=1}^T (u_i - w_{t, i})^2 g_{t, i}^2$ is
the coordinatewise variance. This is established by
Theorem~\ref{th:diag} and Corollary~\ref{cor:coordroughthm}, which also
show that the coordinate extension simultaneously guarantees regret of
order
\begin{equation}\label{eqn:coordintroadagradbound}
\Rtrick_T^\u
~=~
O\del*{
  \sum_{i=1}^d \|g_{1:T,i}\|_2 \sqrt{\log \log T} 
  + \sqrt{d} \log \log T
}
~=~
O\del*{
  \sqrt{d T \log \log T}
},
\end{equation}
where $g_{1:T,i} := (g_{i,1},\ldots,g_{i,T})$. While the full matrix
version of MetaGrad and its sketching approximation naturally favor
parameters $\u$ with small $\ell_2$-norm, the coordinatewise extension
is appropriate for the $\ell_\infty$-norm (i.e., dense parameter
vectors). Since the coordinate version does not keep track of a full
covariance matrix, we cannot expect it to exploit the Bernstein
condition for stochastic gradients in all cases.
Section~\ref{sec:coordBernstein} therefore introduces a more stringent
coordinate Bernstein condition, under which
\eqref{eqn:coordinateintrobound} does always imply fast rates, and
Theorem~\ref{thm:generaltocoordbernstein} gives sufficient conditions
under which the general Bernstein condition implies the coordinate
Bernstein condition. It is appealing that the coordinatewise MetaGrad
extension simultaneously satisfies \eqref{eqn:coordintroadagradbound},
because (up to the $\sqrt{\log \log T}$ factor) this recovers the
diagonal AdaGrad bound of $O(\sum_{i=1}^d \|g_{1:T,i}\|_2)$, which can
take advantage of sparse gradients \citep{adagrad}.

An important practical consideration for OCO algorithms is whether they
can adapt to the Lipschitz-constant of the losses $f_t$, i.e.\ the
maximum norm of the gradients. For instance, this is an important
feature of AdaGrad \citep{adagrad,McMahanStreeter2010}. The MetaGrad
algorithm is also Lipschitz-adaptive in this way. Our approach is a refinement of
the techniques of \citet{lipschitz.metagrad}: whereas their procedure
may occasionally restart the whole MetaGrad algorithm, we only restart
the controller but not the experts. Wherever possible, we further
measure the size of the gradients by the \mbox{(semi-)norm} $\max_{\w
\in \U} |(\w - \w_t)^\top \grad_t|$ instead of the larger $\max_{\w \in
\U} \|\w - \w_t\|_2 \|\grad_t\|_2$. The difference is crucial in
Section~\ref{sec:sketching}, where we consider a time-varying domain
introduced by \citet{luo2017arxiv} in the context of sketching: this
domain is bounded only in the direction of the gradients, so our norms
are under control, but has infinite diameter in all orthogonal
directions.

We conclude the paper with an empirical evaluation in which we compare
our new algorithms (the Full, Sketching and Coordinatewise versions of
MetaGrad) with AdaGrad and OGD on 17 real-world LIBSVM regression and
classification data sets. Our experiments show that the full-matrix
version of MetaGrad beats previous methods in all but one of our
experiments and delivers competitive performance throughout. Moreover,
we see that the sketching extension provides a controlled trade-off
between regret and run-time, while the fastest, coordinatewise version
of MetaGrad still works surprisingly well in the majority of
experiments.

\subsection{Related Work}

If we disregard computational efficiency and omit Lipschitz-adaptivity,
then the guarantee from \eqref{eqn:roughmainbound} can be achieved by finely
discretizing the domain $\U$ and running the Squint algorithm for
prediction with experts, with each discretization point as an expert
\citep{squint}. MetaGrad may therefore also be seen as a computationally
efficient extension of Squint to the OCO setting.

\citet{luo2017arxiv} show a lower bound on the regret of $\Theta(\sqrt{d
T})$ for the time-varying domain mentioned above, and obtain a nearly
matching upper bound of $O(\sqrt{dT} \log T)$ using a variant of ONS.
Our Theorem~\ref{thm:mainbound}, which implies
\eqref{eqn:roughmainbound} when the radius of the domain is bounded, is
actually more general and also covers the time-varying domain. For this
domain it improves on the upper bound of \citeauthor{luo2017arxiv} by
replacing the dependence on $T$ by $V_T^\u$ and by moving the log-factor
into the square root. Section~\ref{sec:composition} provides a detailed
comparison.

As already mentioned, \citet{WangLuZhang2020} extend MetaGrad to adapt
to strongly convex functions. \citet{zhang19:_dual_adapt} further
provide an extension for the case that the optimal parameters $\u$ vary
over time, as measured in terms of the adaptive regret. See also the
closely related extension of Squint for adaptive regret by
\citet{Neuteboom2020}.

Our focus in this work is on adapting to sequences of functions $f_t$
that are easier than general convex functions, but we require an
estimate $\sigma$ of the $\ell_2$-norm of the optimum $\u$ as a
hyperparameter. In contrast, a different line of work designs methods
that can adapt to the norm of $\u$ over all of $\reals^d$, but without
providing adaptivity to the functions $f_t$
\citep{McMahanStreeter2012,Orabona2014,CutkoskyOrabona2018}. It was
thought for some time that these two directions could not be reconciled,
because the impossibility result of \citet{DBLP:conf/colt/CutkoskyB17}
blocks simultaneous adaptivity to both the size of the gradients of the
functions $f_t$ and the norm of $\u$. The perspective has recently
shifted, however, following discoveries of ways to partially circumvent
this lower bound
\citep{KempkaKW19,cutkosky2019artificial,lipcomp.adaptive}.

Another notion of adaptivity is explored in a series of works obtaining
tighter bounds for linear functions $f_t$ that vary little between
rounds, as measured either by their deviation from the mean function or
by successive differences
\citep{hazan2010extracting,GradualVariationInCosts2012,SteinhardtLiang14}.
Such bounds imply super fast rates for optimizing a fixed linear
function, but reduce to slow $O(\sqrt{T})$ rates in the other cases of
easy functions that we consider. Finally, the way MetaGrad's experts
maintain a Gaussian distribution on parameters $\u$ is similar in spirit
to AROW and related confidence weighted methods, as analyzed by
\citep{CrammerEtAl2009AROW} in the mistake bound model.

\subsection{Outline}

We start with the main definitions in the next section. Then
Section~\ref{sec:fastRateExamples} contains an extensive set of examples
where the guarantee from \eqref{eqn:roughmainbound} leads to fast rates,
Section~\ref{sec:metagradfull} presents the Full Matrix version of the
MetaGrad algorithm, and Section~\ref{sec:metagradextensions} describes
the faster sketching and coordinatewise extensions.
Section~\ref{sec:analysisfull} provides the analysis leading to
Theorem~\ref{thm:mainbound} for the Full Matrix version of MetaGrad,
which is a more detailed statement of \eqref{eqn:roughmainbound} with
several quantities replaced by data-dependent versions and with exact
constants. Section~\ref{sec:analysisextensions} extends this analysis to
the two other versions of MetaGrad. Then, in
Section~\ref{sec:experiments}, we compare all versions of MetaGrad to OGD
and to AdaGrad in experiments with several benchmark classification and
regression data sets. We conclude with a discussion of possible further
extensions of MetaGrad in Section~\ref{sec:conclusion}. Finally, most
details of the proofs are postponed to the appendix.

\section{Setup}\label{sec:setup}

We consider algorithms for OCO, which operate according to the protocol
displayed in Protocol~\ref{alg:OCOprotocol}. In each round, the
environment reveals a closed convex domain $\U_t \subset \reals^d$,
which we assume contains the origin $\0$ (if not, it needs to be
translated). In the introduction, we assumed that $\U_t = \U$ was fixed
beforehand, but for the remainder of the paper we allow it to vary
between rounds, which is needed in the context of the sketching version
of MetaGrad (Section~\ref{sec:sketching}). Let $\w_t \in \U_t$ be the
iterate produced by the algorithm in round $t$, let $f_t : \U_t \to
\reals$ be the convex loss function produced by the environment and let
$\grad_t = \nabla f_t(\w_t)$ be the (sub)gradient, which is the feedback
given to the algorithm.\footnote{If $f_t$ is not differentiable at
$\w_t$, any choice of subgradient $\grad_t \in \partial f_t(\w_t)$ is
allowed. Since $f_t$ is convex, there always exists at least one
subgradient when $\w_t$ is in the interior of its domain. Existence of
subgradients on the boundary of $\U_t$ is guaranteed, for instance, if
there exists a finite convex extension of $f_t$ to~$\reals^d$.} The
\emph{regret} over $T$ rounds $R_T^\u$, its linearization $\Rtrick_T^\u$
and our measure of variance $V_T^\u$ are defined as
\begin{align*}
  R_T^\u &= \sum_{t=1}^T \del*{f_t(\w_t) - f_t(\u)},
  & \Rtrick_T^\u &= \sum_{t=1}^T(\w_t - \u)^\top \grad_t,\\
  V_T^\u &= \sum_{t=1}^T \del*{(\u - \w_t)^\top \grad_t}^2
  && \text{with respect to any $\u \in \Intersect_{t=1}^T \U_t$.}
\end{align*}

\begin{algorithm2e}[t]
\renewcommand{\algorithmicrequire}{\textbf{Input:}}
\renewcommand{\algorithmiccomment}[1]{\hfill $\triangleright$~\textit{#1}}
\begin{algorithmic}[1]
\FOR{$t=1,2,\ldots$}
  \STATE Environment reveals convex domain $\U_t \subseteq \reals^d$
  containing the origin $\0$
  \STATE Learner plays $\w_t \in \U_t$
  \STATE Environment chooses a convex loss function $f_t : \U_t \to \reals$
  \STATE Learner incurs loss $f_t(\w_t)$ and observes (sub)gradient $\grad_t = \nabla f_t(\w_t)$
\ENDFOR
\end{algorithmic}
\SetAlgorithmName{Protocol}{Protocol}{List of Protocols}
\caption{Online Convex Optimization with First-order Information}\label{alg:OCOprotocol}
\end{algorithm2e}
\setcounter{algocf}{0}  %

\noindent By convexity of $f_t$, we always have $f_t(\w_t) - f_t(\u)
\leq (\w_t - \u)^\top \grad_t$, which implies the first inequality in
\eqref{eqn:roughmainbound}: $R_T^\u \leq \Rtrick_T^\u$. Finally,
wherever possible we measure the size of the gradient $\grad_t$ in the
following (semi-)norm:
\[
  \|\grad\|_t = \max_{\w \in \U_t} |(\w - \w_t)^\top \grad|,
\]
which takes into account the shape of the domain, and is centered at the
learner's predictions $\w_t$.
This is a norm in the typical case that $\U_t$ has full dimension $d$,
and it is still a semi-norm in general. We note that this norm
is smaller than the usual upper bounds based on H\"older's inequality:
$\|\grad\|_t \leq \|\grad\|_* \max_{\w \in \U_t} \|\w - \w_t\|$ for any
dual norms $\|\cdot\|$ and $\|\cdot\|_*$. The difference becomes
essential in Section~\ref{sec:sketching}, where we consider a domain
$\U_t$ that has an infinite radius $\max_{\w \in \U_t} \|\w - \w_t\|$
in any norm $\|\cdot\|$, but for which $\|\grad_t\|_t$ is still
bounded. MetaGrad depends on (upper bounds on) the sizes of the
gradients per round $b_t$, as well as their running maximum $B_t$:
\begin{align}\label{eqn:rangebound}
  b_t &\geq \|\grad_t\|_t,
  &
  B_t &= \max_{s \leq t} b_s,
\end{align}
with the convention that $B_0 = 0$. We would normally take the best
upper bound $b_t = \|\grad_t\|_t$, except if this is difficult to
compute. In such cases, we may, for example, let $b_t = \|\grad_t\|_2
\max_{\u,\w \in \U_t} \|\u - \w\|_2$. We assume throughout that $B_T >
0$; otherwise the regret is trivially bounded by zero.

\smallskip\noindent
We denote by $\ceil{z}_+ = \max\{\ceil{z},1\}$ the smallest integer that
is at least~$z$ and at least $1$.

\section{Fast Rates Examples}\label{sec:fastRateExamples}

In this section, we motivate our interest in the adaptive bound
\eqref{eqn:roughmainbound} by giving a series of examples in which it
provides fast rates. For simplicity, we will in this section assume that
the domain is fixed: $\U_t = \U$, with bounded radius $D_2 \geq \max_{\u \in
\U} \|\u\|_2$, and that all gradients have length at most
$G_2 \geq \|\g_t\|_2$. The fast rates are all derived from two general
sufficient conditions: one based on the directional derivative of the
functions $f_t$ and one for stochastic gradients that satisfy the
\emph{Bernstein condition}, which is the standard condition for fast
rates in off-line statistical learning. In Appendix~\ref{app:simulations} we provide simple simulations illustrating these conditions, which are exploited by MetaGrad but not by AdaGrad. Proofs are also postponed to
Appendix~\ref{app:MoreFastRateExamplesAndProofs}.

\subsection{Directional Derivative Condition}

In order to control the regret with respect to some point $\u$, the
first condition requires a quadratic lower bound on the curvature of the
functions $f_t$ in the direction of $\u$:
\begin{theorem}\label{thm:curvedfunctions}
Suppose, for a given $\u \in \U$, there exist constants $a,b > 0$ such
that the functions $f_t$ all satisfy
\begin{equation}\label{eqn:curvedfunctions}
  f_t(\u) \geq f_t(\w) + a (\u - \w)^\top \nabla f_t(\w) + b \del*{(\u - \w)^\top \nabla f_t(\w)}^2
  \qquad \text{for all $\w \in \U$.}
\end{equation}
Then any method with regret bound \eqref{eqn:roughmainbound} incurs
logarithmic regret, $R_T^\u = O(d \ln T)$, with respect to $\u$.
\end{theorem}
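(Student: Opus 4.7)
The plan is to reduce the claim to a one-step algebraic manipulation combining the assumed quadratic lower bound with the regret bound \eqref{eqn:roughmainbound}. First I would instantiate the hypothesis \eqref{eqn:curvedfunctions} at $\w = \w_t$, which, after rearranging, gives the per-round inequality
\[
  f_t(\w_t) - f_t(\u) \;\leq\; a (\w_t - \u)^\top \grad_t - b \bigl((\u - \w_t)^\top \grad_t\bigr)^2.
\]
Summing over $t = 1,\ldots,T$ and recognising the definitions of $\Rtrick_T^\u$ and $V_T^\u$ yields the linearised bound
\[
  R_T^\u \;\leq\; a\, \Rtrick_T^\u - b\, V_T^\u.
\]

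Next I would plug in \eqref{eqn:roughmainbound}, i.e.\ $\Rtrick_T^\u \leq C\bigl(\sqrt{V_T^\u\, d \ln(T/d)} + d \ln(T/d)\bigr)$ for some absolute constant $C$. The only remaining obstacle is the square-root term, which still involves the unknown quantity $V_T^\u$. I would handle it with AM-GM in the form $\sqrt{xy} \leq \tfrac{\alpha}{2} x + \tfrac{1}{2\alpha} y$, applied with $x = V_T^\u$, $y = d \ln(T/d)$, and $\alpha = 2b/(aC)$, so that the $V_T^\u$ contribution in $aC \sqrt{V_T^\u d \ln(T/d)}$ is exactly $b\, V_T^\u$ and thus cancels the $-b\, V_T^\u$ term coming from the directional derivative condition.

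After this cancellation, only a multiple of $d \ln(T/d)$ survives, with a constant depending on $a$, $b$, and $C$, giving
\[
  R_T^\u \;\leq\; \left(\frac{a^2 C^2}{4b} + aC\right) d \ln(T/d) \;=\; O(d \ln T),
\]
as required. The main (and only) subtlety is choosing the AM-GM weight to line up precisely with the coefficient $b$ of the quadratic penalty; everything else is bookkeeping, and no use of the specific algorithm is made beyond the regret guarantee \eqref{eqn:roughmainbound}.
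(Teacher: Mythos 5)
Your proposal is correct and follows essentially the same route as the paper's proof: instantiate \eqref{eqn:curvedfunctions} at $\w = \w_t$ and sum to get $R_T^\u \leq a \Rtrick_T^\u - b V_T^\u$, substitute \eqref{eqn:roughmainbound}, and apply AM-GM with the weight chosen so that the $V_T^\u$ term is exactly cancelled by $-b V_T^\u$. The only differences are cosmetic (the paper absorbs $a$ into the constant $C$ before applying AM-GM, whereas you track $a$ explicitly).
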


The case $a = 1$ of this condition was introduced by \citep{ons}, who
show that it is satisfied for all $\u \in \U$ by exp-concave and
strongly convex functions. These are both requirements on the curvature
of $f_t$ that are stronger than mere convexity: $\alpha$-exp-concavity
of $f$ for $\alpha > 0$ means that $e^{-\alpha f}$ is concave or,
equivalently, that $\nabla^2 f \succeq \alpha \nabla f \nabla f^\top$;
$\alpha$-strong convexity means that $\nabla^2 f \succeq \alpha \I$.
We see that $\alpha$-strong convexity implies $(\alpha/\|\nabla
f\|_2^2)$-exp-concavity. The rate $O(d \log T)$ is also what we would
expect by summing the asymptotic offline rate obtained by ridge
regression on the squared loss \citep[Section~5.2]{SrebroEtAl2010},
which is exp-concave. Our extension to general $a > 0$ is technically a minor
step, but it makes the condition much more liberal, because it may then
also be satisfied by functions that do \emph{not} have any curvature.
For example, suppose that $f_t = f$ is a fixed convex function that does
not change with $t$. Then, when $\u^* = \argmin_\u f(\u)$ is the offline
minimizer, we have $(\u^* - \w)^\top \nabla f(\w) \in \intcc{- 2D_2
G_2,0}$, so that
\begin{equation*}
  f(\u^*) - f(\w)
    \geq (\u^* - \w)^\top \nabla f(\w)
    \geq 2 (\u^* - \w)^\top \nabla f(\w) + \frac{1}{2 D_2 G_2} \del*{(\u^* - \w)^\top
    \nabla f(\w)}^2,
\end{equation*}
where the first inequality uses only convexity of $f$. Thus condition
\eqref{eqn:curvedfunctions} is satisfied by \emph{any fixed convex
function}, even if it does not have any curvature at all, with $a =
2$ and $b=1/(2 D_2 G_2)$.

At first sight this may appear to contradict the lower bound of order
$1/\sqrt{T}$ for convergence of the iterates by \cite{Nesterov2004} \citep[see also][]{tibshirani14}, which implies a lower bound of order
$\sqrt{T}$ on the regret. Yet there is no contradiction, as Nesterov's
example requires large dimension $d \ge T$, in which case $O(d \ln T)$
is vacuous. In Nesterov's example, MetaGrad still gets the $\sqrt{T}$
rate up to a $\ln \ln T$ factor, however, because it satisfies
\eqref{eqn:gdBigORegret}.

\subsection{Bernstein Stochastic Gradients}

The possibility of getting fast rates even without any curvature is
intriguing, because it goes beyond the usual strong convexity or
exp-concavity conditions. In the online setting, the case of fixed
functions $f_t = f$ seems rather restricted, however, and may in fact be
handled by offline optimization methods. We therefore seek to loosen
this requirement by replacing it by a stochastic condition on the
distribution of the functions $f_t$. The relation between variance
bounds like \eqref{eqn:roughmainbound} and fast rates in the stochastic
setting is studied in depth by \citep{bernfast}, who obtain
fast rate results both in expectation and in probability. Here we
provide a direct proof only for the expected regret, which allows a
simplified analysis.

Suppose the functions $f_t$ are independent and identically
distributed (i.i.d.), with common distribution $\pr$. Then we say that
the gradients satisfy the \emph{$(B,\beta)$-Bernstein condition} with
respect to the stochastic optimum $\u^* = \argmin_{\u \in \U} \E_{f \sim
\pr}[f(\u)]$ if
\begin{equation}\label{eqn:bernstein}
(\w - \u^*)^\top
\ex_f \sbr*{
  \nabla f(\w) \nabla f(\w)^\top
}
(\w - \u^*)
~\le~
B
\big((\w - \u^*)^\top  \ex_f \sbr*{\nabla f(\w)}\big)^\beta
\qquad \text{for all $\w \in \U$.}
\end{equation}
This is an instance of the well-known Bernstein condition from offline
statistical learning
\citep{BartlettMendelson2006,VanErven2015FastRates}, applied to the
linearized excess loss $(\w - \u^*)^\top \nabla f(\w)$.
As shown in Appendix~\ref{sec:bnst}, imposing the condition for the
linearized excess loss is a weaker requirement than imposing it for the
original excess loss $f(\w) - f(\u^*)$.
\begin{theorem}\label{thm:Bernstein}
If the gradients satisfy the $(B,\beta)$-Bernstein condition for $B > 0$
and $\beta \in (0,1]$ with respect to $\u^* = \argmin_{\u \in \U} \E_{f
\sim \pr}[f(\u)]$, then any method with regret bound
\eqref{eqn:roughmainbound} incurs expected regret
\[
  \E\sbr{R_T^{\u^*}} = O\del*{\del*{B d \ln T}^{1/(2-\beta)}
  T^{(1-\beta)/(2-\beta)} + d\ln T}.
\]
\end{theorem}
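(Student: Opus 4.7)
The plan is to combine the deterministic regret bound \eqref{eqn:roughmainbound} with the Bernstein condition through a self-bounding argument that relates the expected variance $\E[V_T^{\u^*}]$ back to the expected linearized regret $\E[\Rtrick_T^{\u^*}]$.

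First, I would let $X_t = (\w_t - \u^*)^\top \grad_t$. Since $\w_t$ is a function of $f_1,\ldots,f_{t-1}$, conditioning on the past gives $\E[X_t \mid \w_t] = (\w_t - \u^*)^\top \nabla F(\w_t)$ and $\E[X_t^2 \mid \w_t] = (\w_t - \u^*)^\top \E[\nabla f_t(\w_t) \nabla f_t(\w_t)^\top \mid \w_t] (\w_t - \u^*)$, where $F(\w) = \E_{f \sim \pr}[f(\w)]$. Because $\u^*$ minimizes $F$ on $\U$, convexity of $F$ yields $\E[X_t \mid \w_t] \geq F(\w_t) - F(\u^*) \geq 0$, so the Bernstein condition \eqref{eqn:bernstein} can be applied to give $\E[X_t^2 \mid \w_t] \leq B (\E[X_t \mid \w_t])^\beta$ with both sides nonnegative.

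Next, starting from the almost-sure bound $\Rtrick_T^{\u^*} = O(\sqrt{V_T^{\u^*} d \ln T} + d \ln T)$, I would take expectations and twice apply Jensen's inequality. Concavity of $\sqrt{\cdot}$ gives $\E[\sqrt{V_T^{\u^*} d \ln T}] \leq \sqrt{\E[V_T^{\u^*}]\, d \ln T}$. Concavity of $x \mapsto x^\beta$ used together with the tower property gives
\[
\E[V_T^{\u^*}] \;=\; \sum_{t=1}^T \E[X_t^2] \;\leq\; B \sum_{t=1}^T \E\bigl[(\E[X_t\mid \w_t])^\beta\bigr] \;\leq\; B \sum_{t=1}^T (\E[X_t])^\beta \;\leq\; B\, T^{1-\beta} (\E[\Rtrick_T^{\u^*}])^\beta,
\]
where the last step is the power-mean inequality $\sum_t a_t^\beta \leq T^{1-\beta}(\sum_t a_t)^\beta$ for $\beta\in(0,1]$.

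Setting $R = \E[\Rtrick_T^{\u^*}]$, combining the previous two inequalities yields a self-bounding inequality of the form $R \leq K R^{\beta/2} + L$, with $K = O(\sqrt{B d \ln T}\cdot T^{(1-\beta)/2})$ and $L = O(d \ln T)$. Solving this in the standard way (either $R \leq 2L$, or $R^{1-\beta/2} \leq 2K$, giving $R \leq (2K)^{2/(2-\beta)}$) delivers
\[
\E[R_T^{\u^*}] \;\leq\; \E[\Rtrick_T^{\u^*}] \;=\; O\bigl((Bd \ln T)^{1/(2-\beta)} T^{(1-\beta)/(2-\beta)} + d \ln T\bigr).
\]
The main conceptual obstacle is justifying the use of the Bernstein condition under conditioning and confirming nonnegativity of $\E[X_t \mid \w_t]$ so that Jensen can be applied to the concave map $x \mapsto x^\beta$; once that is in place, the rest is a standard self-bounding calculation.
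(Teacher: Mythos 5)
Your proposal is correct, and its core is the same as the paper's: take expectations in \eqref{eqn:roughmainbound}, apply the Bernstein condition conditionally on $\w_t$, and use Jensen twice (for $\sqrt{\cdot}$ and for $x \mapsto x^\beta$) to reach $\E\sbr{\Rtrick_T^{\u^*}} \leq C\sqrt{B\sum_t \del{\E\sbr{\rtrick_t^{\u^*}}}^\beta\, d\ln T} + Cd\ln T$; your explicit check that $\E[\rtrick_t^{\u^*}\mid \w_t]\geq 0$ (via convexity and optimality of $\u^*$) is a welcome detail the paper leaves implicit. Where you diverge is in resolving the resulting implicit inequality: you invoke the power-mean bound $\sum_t a_t^\beta \leq T^{1-\beta}\del{\sum_t a_t}^\beta$ and then solve the self-bounding relation $R \leq K R^{\beta/2} + L$ directly by the case analysis $R \leq \max\{2L, (2K)^{2/(2-\beta)}\}$, whereas the paper repeatedly applies the linearization identity $x^\alpha y^{1-\alpha} = c_\alpha \inf_{\gamma>0}\del{x/\gamma + \gamma^{\alpha/(1-\alpha)}y}$ (three times, with tuned $\gamma_1,\gamma_2$) to move $\E\sbr{\Rtrick_T^{\u^*}}$ to the left-hand side, treating $\beta=1$ as a separate case. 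The two routes are mathematically equivalent — optimizing the paper's $\gamma_2$ reproduces exactly your $T^{1-\beta}R^\beta$ term — but yours is shorter, handles $\beta=1$ uniformly, and is the more standard "self-bounding" presentation, while the paper's linearization keeps explicit constants at every stage (useful if one wants a non-asymptotic statement rather than the $O(\cdot)$ claim).
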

For $\beta=1$, the rate becomes $O(d
\ln T)$, just like for fixed functions, and for smaller $\beta$ it is in
between logarithmic and $O(\sqrt{d T})$.
For instance, the hinge loss on the unit ball with i.i.d.\ data
satisfies the Bernstein condition with $\beta = 1$, which implies an
$O(d \log T)$ rate, albeit with a $B$ that depends on the distribution
of the data. (See Appendix~\ref{app:hingeLossExample}.) In stochastic
optimization for support vector machines, the hinge loss is combined
with an additional $\ell_2$-regularization term. It is sometimes argued
that this term also gives fast rates, because it makes the loss strongly
convex, but the amount of regularization used in practice is typically
too small to get any significant improvements. The present example shows
that, even without adding regularization to the loss, it is possible to
get logarithmic regret.

\section{Full Matrix Version of the MetaGrad Algorithm}
\label{sec:metagradfull}

In this section, we explain the full matrix version of the MetaGrad algorithm:
\metagradfull{}. Computationally more efficient extensions follow
in Section~\ref{sec:metagradextensions}.
\metagradfull{} will be defined by means of the following
\emph{surrogate loss} $\surr_t^\eta(\u)$:
\begin{equation}\label{eq:surrogate}
\surr_t^\eta(\u)
~\df~
\eta(\u - \w_t)^\top \grad_t
+ \big(\eta (\u - \w_t)^\top \grad_t\big)^2
.
\end{equation}
This surrogate loss consists of a linear and a quadratic part, whose
relative importance is controlled by a learning rate parameter $\eta >
0$. The sum of the quadratic parts is what appears in the regret bound
\eqref{eqn:roughmainbound}. They may be viewed as causing a
``time-varying regularizer'' \citep{Orabona2015} or ``temporal
adaptation of the proximal function'' \citep{adagrad}. 

\metagradfull{} is a two-level hierarchical construction: at the top is
a main controller, shown in Algorithm~\ref{alg:controller}, which
manages multiple $\eta$-experts, shown in
Algorithm~\ref{alg:MetaGradExpert}. Each $\eta$-expert produces
predictions for the surrogate loss $\surr_t^\eta$ with its own value of
$\eta$, and the controller is responsible for learning the best $\eta$
by starting and stopping multiple $\eta$-experts on demand, and
aggregating their predictions.

\begin{algorithm2e}[htb]
\setlength{\commentWidth}{7cm}
\renewcommand{\algorithmiccomment}[1]{\spacycomment{#1}}
\renewcommand{\algorithmicrequire}{\textbf{Input:}}
  \caption{MetaGrad Full: Controller}
  \label{alg:controller}
  \begin{algorithmic}[1]
    \FOR{$t=1,2,\ldots$}
    \STATE Receive domain $\U_t$
    \STATE \multiline{Start and stop $\eta$-experts to manage active set $\activeset_t$
    (Equation~\ref{eqn:activeexperts}). Give newly started  $\eta$-experts weight $p_t(\eta)=1$. }
    \IF{Nobody active: $\activeset_t = \emptyset$}
    \STATE Predict $\w_t = \zeros$ 
    \COMMENT {Make a default prediction ~~~~~~~}
    \ELSE
    \STATE Have active $\eta$-experts project onto $\U_t$
    \STATE
    Collect prediction $\w_t^\eta$ for every active $\eta$-expert
    \STATE \label{line:tilted_ewa}
    Predict \[
      \w_t ~=~ \frac{\sum_{\eta \in \activeset_t} p_t(\eta) \eta \w_t^\eta}{\sum_{\eta \in \activeset_t} p_t(\eta) \eta}
    \]
    \ENDIF
    \STATE Receive gradient $\grad_t = \nabla f_t(\w_t)$ and range bound
    $b_t$ (Equation~\ref{eqn:rangebound})
    \STATE Update every active $\eta$-expert with unclipped surrogate
    loss $\surr_t^\eta$
    \IF{No reset needed after round $t$ (Equation~\ref{eqn:resetcondition})}
    \STATE \label{line:expw}
    Update based on
    the clipped surrogate losses (Equation~\ref{eq:clipsurr}):\linebreak
\mbox{}\hspace{2em}
    $p_{t+1}(\eta) = \frac{p_t(\eta) \exp\del*{-
    \clipsurr_t^\eta(\w_t^\eta)}}{\sum_{\eta
    \in \activeset_t} p_t(\eta) \exp\del*{-\clipsurr_t^\eta(\w_t^\eta)}} \del{\sum_{\eta \in \activeset_t} p_t(\eta)}
    \qquad \text{for all $\eta \in \activeset_t$.}$
    \ELSE
    \STATE Set $p_{t+1}(\eta) = 1$ for all $\eta \in \activeset_t$
    \COMMENT{Reset ~~~~~~~}
    \ENDIF
  \ENDFOR
\end{algorithmic}
\end{algorithm2e}

\subsection{Controller}

Online learning of the best learning rate $\eta$ is notoriously
difficult because the regret is non-monotonic over rounds and may have
multiple local minima as a function of~$\eta$ (see
\citep{learning.learning.rate} for a study in the expert setting). The
standard technique is therefore to derive a monotonic upper bound on the
regret and tune the learning rate optimally \emph{for the bound}. In
contrast, our approach, inspired by the approach for combinatorial games
of \citet[Section~4]{squint}, is to weigh the different $\eta$ depending
on their empirical performance using exponential weights with sleeping
experts (line~\ref{line:expw}), except that in the predictions the
weights of the $\eta$-experts are \emph{tilted} by their learning rates
(line~\ref{line:tilted_ewa}), having the effect of giving a larger
weight to larger~$\eta$. Thus we never tune the controller's weights on
learning rates based on any bounds, but always directly in terms of
their empirical performance.

To be able to adapt to the norms of the gradients, the controller
maintains a finite grid $\activeset_t$ of active learning rates $\eta$,
which is dynamically adjusted over time. We will take exponentially
spaced learning rates from the infinite grid
\[
  \mathcal G ~\df~ \setc{2^i}{i \in \mathbb Z},
\]
and the following learning rates are active in round $t$:
\begin{equation}\label{eqn:activeexperts}
  \activeset_t ~\df~ 
  \begin{cases}
  \emptyset & \text{while $B_{t-1} =0$,}\\
  \mathcal G \cap
  \intoc*{
    \frac{1}{2 \del*{\sum_{s=1}^{t-1} b_s \frac{B_{s-1}}{B_s} + B_{t-1}}},
    \frac{1}{2 B_{t-1}}
  } & \text{afterwards.}
  \end{cases}
\end{equation}
This means that $a^\eta$, the first round in which an $\eta$-expert is
active, is
\begin{equation}\label{eqn:wakeuptime}
  a^\eta = \min \setc*{t \in \{1,2,\ldots\}}{
      \eta > \frac{1}{2 \del*{\sum_{s=1}^{t-1} b_s \frac{B_{s-1}}{B_s} + B_{t-1}}}
    }.
\end{equation}
Using that $b_s \frac{B_{s-1}}{B_s} \leq B_{t-1}$, it can be seen that
the number of active learning rates never exceeds $|\activeset_t| \leq
\ceil{\origlog_2 T}$. In the first two rounds, or if there
is a sudden enormous gradient such that $B_{t-1}$ dwarfs
$\sum_{s=1}^{t-1} b_s B_{s-1}/B_s$, it may also happen that
$\activeset_t$ is empty, which signals that all previous rounds were
negligible compared to the last round. In such cases the controller
decides it has not yet learned anything, and makes a default prediction:
$\w_t = \zeros$.

There are two further mechanisms to deal with extreme changes in the
size of the gradients. The first mechanism is that extremely large
gradients may trigger a \emph{reset} of the controller's weights on $\eta$-experts. This
splits the controller's learning process into epochs. When running in an
epoch starting at time $\tau+1$, a reset and new epoch will be triggered
after the first round $t$ such that
\begin{equation}\label{eqn:resetcondition}
  B_t > B_\tau \sum_{s=1}^t \frac{b_s}{B_s}.
\end{equation}
As the sum on the right-hand side will typically grow linearly in $t$,
we only expect a reset to occur when the effective size of the gradients
grows by more than a factor~$t$ compared to the largest size seen before
the start of the epoch. This should normally be very rare except perhaps
for a few initial rounds when $t$ is still small.

The second mechanism to protect against extreme gradients is that the
controller measures performance of the experts by a \emph{clipped}
version of their corresponding surrogate losses: 
\begin{equation}\label{eq:clipsurr}
  \clipsurr_t^\eta(\u) ~\df~
    \eta (\u-\w_t)^\top \clipgrad_t
    + \big(\eta (\u-\w_t)^\top \clipgrad_t\big)^2,
\end{equation}
which are based on the clipped gradients
\[
  \clipgrad_t ~\df~ \frac{B_{t-1}}{B_t} \grad_t.
\]
This is a trick first used by \citet{cutkosky2019artificial}, which
makes the effective sizes of the gradients predictable one round in
advance: $\max_{\u \in \U_t} |\u^\top \clipgrad_t| \leq B_{t-1}$.

\begin{algorithm2e}[htbp]
\setlength{\commentWidth}{7.5cm}
\renewcommand{\algorithmiccomment}[1]{\spacycomment{#1}}
\renewcommand{\algorithmicrequire}{\textbf{Input:}}
\begin{algorithmic}[1]
\REQUIRE \multiline{Learning rate $\eta > 0$, estimate $\sigma > 0$ of comparator norm
$\|\u\|_2$, first active round~$a \equiv a^\eta$}
\STATE Initialize $\check{\w}^\eta_a = \zeros$ and $\bLambda^\eta_a = \tfrac{1}{\sigma^2} \I$
\COMMENT{Invariant: $\bLambda^\eta_{t+1} = \frac{1}{\sigma^2} \I + 2 \eta^2
\sum_{s=a}^t \grad_s \grad_s^\top$}
\STATE Initialize $\Sigma^\eta_a = \sigma^2 \I$
\COMMENT{Invariant: $\Sigma_t^\eta = (\bLambda_t^\eta)^{-1}$}
\FOR{$t=a,a+1,\ldots$}
\STATE Project $\w_t^\eta = \argmin_{\u \in \U_t} (\u - \check{\w}^\eta_t)^\top
\bLambda_t^\eta (\u - \check{\w}^\eta_t)$
\STATE Predict $\w_t^\eta$
\STATE Observe gradient $\grad_t = \nabla f_t(\w_t)$ \COMMENT{Gradient
at \emph{controller} prediction $\w_t$}
\STATE
Update:\linebreak
\mbox{}\hspace{1em}
$
\begin{aligned}[t]
\Sigma^\eta_{t+1} &=
  \Sigma^\eta_t - 
  \frac{2 \eta^2(\Sigma_t^\eta \grad_t) (\grad_t^\top \Sigma_t^\eta)}{1 + 2
    \eta^2 \grad_t^\top \Sigma_t^\eta \grad_t}
  \qquad \simplecomment{Sherman-Morrison}
\\
\bLambda^\eta_{t+1} &= \bLambda^\eta_t + 2 \eta^2 \grad_t \grad_t^\top
\\
\check{\w}_{t+1}^\eta
  &=  \w^\eta_t
    - \del*{1 + 2 \eta (\w_t^\eta - \w_t)^\top \grad_t} \eta
      \Sigma^\eta_{t+1} \grad_t \label{line:md}
\end{aligned}$
\ENDFOR
\end{algorithmic}
\caption{MetaGrad Full: $\eta$-Expert}\label{alg:MetaGradExpert}
\end{algorithm2e}

\subsection{$\eta$-Experts}

Each $\eta$-expert is active for a single contiguous sequence of rounds
for which $\eta \in \activeset_t$. Upon activation, its job is to issue
predictions $\w_t^\eta \in \U_t$ for the (unclipped) surrogate loss
$\surr_t^\eta$ that achieve small regret compared to any $\u \in
\Intersect_{t : \eta \in \activeset_t} \U_t$. This is a standard online
convex optimization task with a quadratic loss function and time-varying
domain. We use continuous exponential weights
with a Gaussian prior, which is a standard approach for quadratic losses
\citep{Vovk2001}, because the corresponding posterior exponential
weights distribution is also Gaussian with mean $\w_t^\eta$ and
covariance matrix $\Sigma_t^\eta = \del*{\frac{1}{\sigma^2} \I + 2 \eta^2
\sum_{s=a}^t \grad_s \grad_s^\top}^{-1}$.
Algorithm~\ref{alg:MetaGradExpert} presents the update equations in a
computationally efficient form. To avoid inverting $\Sigma_t^\eta$, it
maintains its inverse $\bLambda_t^\eta = (\Sigma_t^\eta)^{-1}$ separately.
For a recent overview of continuous
exponential weights see \citep{hoeven2018many}. It can be seen that our
$\eta$-expert algorithm is nearly identical to Online Newton Step (ONS)
\citep{ons}, which is not surprising because ONS is minimizing a
quadratic loss that is nearly identical to our $\surr_t^\eta$. The
differences are that each $\eta$-expert receives the controller's
gradient $\grad_t = \nabla f_t(\w_t)$ instead of its own $\nabla
f_t(\w_t^\eta)$, and that an additional factor $\del*{1 + 2 \eta
(\w_t^\eta - \w_t)^\top \grad_t}$ in line~\ref{line:md} adjusts for the
difference between the $\eta$-expert's parameters $\w_t^\eta$ and the
controller's parameters $\w_t$. MetaGrad is therefore a bona fide
first-order algorithm that only accesses $f_t$ through $\grad_t$.  We
also note that we have chosen the Greedy projections version that
iteratively updates and projects. One might
alternatively consider the Lazy Projection version
\citep[as in][]{Zinkevich2004,Nesterov2009,Xiao2010} that forgets past
projections when updating on new data. Since projections are typically
computationally expensive, we have opted for the Greedy projection
version, which we expect to project less often, since a projected point
seems less likely to update to a point outside of the domain than an
unprojected point.

\subsection{Practical Considerations}
\label{sec:metagradfullpractical}

Although \metagradfull{} is adaptive to the maximum effective size of
the gradients $B_T$, its performance degrades when $B_T$ becomes too
large. In applications, it is therefore important that the domain $\U_t$
is small enough along the direction of $\grad_t$ to keep the effective
gradient size $b_t$ under control.

It is further required to choose the hyperparameter $\sigma$, which is
an estimate of the $\ell_2$-norm of the comparator $\u$.
Theorem~\ref{thm:mainbound} quantifies the trade-off between
underestimating and overestimating this parameter. As discussed below
the theorem, underestimating $\sigma$ always harms the rate. But, for
low-dimensional settings, overestimating $\sigma$ only incurs a
logarithmic penalty, so it is much less expensive to use a too large
value than to use a too small value. For high-dimensional settings the
dependence on $\sigma$ is similar to the usual dependence of Online
Gradient Descent on a guess for $\|\u\|_2$, so the rate deteriorates
linearly when taking $\sigma$ too large.

Finally, we note that there is no gain in pre-processing the data by
scaling all gradients by a fixed constant factor, since the regret bound
in Theorem~\ref{thm:mainbound} already scales linearly with the size of
the gradients. In fact, the \metagradfull{} algorithm itself is almost
invariant under such rescaling, except for the grid $\setc{2^i}{i \in
\mathbb Z}$ in the definition of $\activeset_t$. If one wants to make
the algorithm fully invariant under rescaling, the grid may be replaced
by $\setc{2^i/B_\tau}{i \in \mathbb Z}$, where $\tau$ is the first round
that $B_\tau > 0$. Or, equivalently, one may replace all gradients by
$\grad_t/B_\tau$ for $t \geq \tau$. Since we do not expect any
noticeable difference in performance from this modification, we have
left it out. 

\subsubsection{Run Time}

The run time of \metagradfull{} is dominated by computations for the
$\eta$-experts. Ignoring the projection step, an $\eta$-expert takes
$O(d^2)$ time to update. If there are at most $k$ active $\eta$-experts
in any round, this makes the overall computational effort $O(k d^2)$,
both in time per round and in memory. Since $|\activeset_t| \leq
\ceil{\origlog_2 T}$, it is guaranteed that $k \leq 30$ as long as $T
\leq 10^9$. We note that all $\eta$-experts share the same gradient
$\grad_t$, which is only computed once. We remark that a potential
speed-up is possible by running the $\eta$-experts in parallel. If the
factor $k$ is still considered too large, it is possible to reduce the
size of $|\activeset_t|$ by spacing the learning rates by a factor
larger than $2$, at the cost of a worse constant in the regret bound.

In addition, each $\eta$-expert may incur the cost of a projection,
which depends on the shape of the domain $\U_t$. To get a sense for the
projection cost, we consider the Euclidean ball as a typical example. If
the matrix $\Sigma_t^\eta$ were diagonal, we could project to any
desired precision using a few iterations of Newton's method. Since each
such iteration takes $O(d)$ time, this would be affordable. But for the
non-diagonal $\Sigma_t^\eta$ that occur in the algorithm, we first need
to reduce to the diagonal case by a basis transformation, which takes
$O(d^3)$ to compute using a singular value decomposition. We therefore
see that the projection dwarfs the other run time by an order of
magnitude. This has motivated \citet{luo2017arxiv} to define a different
domain (see Section~\ref{sec:sketching}), for which projections can be
computed in closed form with $O(d)$ computation steps. In this case, the
computation for the projections is negligible and the total
computational complexity is $O(d^2)$ per round. We refer to
\citet{adagrad} for examples of how to compute projections for various
other domains $\U_t$.

\section{Faster Extension Algorithms}
\label{sec:metagradextensions}

As discussed above, \metagradfull{} requires at least $O(d^2)$
computation per round, which makes it slow in high dimensions. We
therefore present two extensions to speed up the algorithm. The first is
a straightforward adaption of the sketching approach of
\citet{luo2017arxiv}, which we apply to approximate the matrix
$\Sigma_t^\eta$ used in each $\eta$-expert. This reduces the computation per
round to $O(kd)$, where $k$ is a hyperparameter that determines the
sketch size. The second extension is to run a separate copy of the
algorithm per dimension, which was inspired by the diagonal version of
AdaGrad \citep{adagrad}. This requires $O(d)$ computation per round.

\subsection{Sketched MetaGrad with Closed-form Projections}
\label{sec:sketching}

\begin{algorithm2e}[htbp]
\setlength{\commentWidth}{7cm}
\renewcommand{\algorithmiccomment}[1]{\spacycomment{#1}}
\renewcommand{\algorithmicrequire}{\textbf{Input:}}
\begin{algorithmic}[1]
\REQUIRE \multiline{Learning rate $\eta > 0$, estimate $\sigma > 0$ of comparator norm
$\|\u\|_2$, first active round~$a \equiv a^\eta$}
\STATE Initialize $\check{\w}^\eta_a = \zeros$
\STATE \multiline{Get $\S_{a-1}^\eta$ and $\H_{a-1}^\eta$ from
initialisation of Frequent Directions Sketching
\newline Algorithm~\ref{alg:fdsketch}}
\FOR{$t=a,a+1,\ldots$}
\STATE Observe feature vector $\x_t$
\STATE Obtain $\w_t^\eta$ by projection \eqref{eq:projection}
\STATE Issue prediction $\w_t^\eta$
\STATE Observe gradient $\grad_t = \nabla f_t(\w_t)$ \COMMENT{Gradient
  at \emph{controller} prediction $\w_t$ ~~~~~}
\STATE Send $\grad_t$ to Frequent Directions Sketching
Algorithm~\ref{alg:fdsketch} and receive $\S_t^\eta$ and $\H_t^\eta$ 
\STATE Update $\check{\w}^\eta_{t+1}$ as per \eqref{eq:sketched.update}
\ENDFOR
\end{algorithmic}
\caption{Sketched $\eta$-Expert}\label{alg:MetaGrad.Sketched.Expert}
\end{algorithm2e}

\noindent
In this section, we are mixing matrices of different dimensions. The
identity matrix $\I_d \in \reals^d$ and the all-zeros matrix $\0_{a
\times b} \in \reals^{a \times b}$ are therefore annotated with
subscripts to make their dimensions explicit.

\citet{luo2017arxiv} develop several sketching approaches for Online
Newton Step, which transfer directly to our $\eta$-experts. They combine
these with a computationally efficient choice of the domain that applies
to loss functions of the form $f_t(\w) = h_t(\w^\top \x_t)$, where the
input vectors $\x_t \in \reals^d$ are assumed to be known at the start
of round $t$, but the convex functions $h_t : \reals \to \reals$ become available only after the prediction has been made. They then choose the domain to be
\begin{equation}\label{eqn:luodomain}
  \U_t = \{\w : |\w^\top \x_t| \leq C\}
  \qquad \text{for a fixed constant $C$.}
\end{equation}
Let $a^\eta$ be the round in which the $\eta$-expert is first activated
and define $\G_t^\eta = (\grad_{a^\eta}, \hdots, \grad_t)^\top \in
\reals^{(t-a^\eta+1) \times d}$, such that $\Sigma_{t+1}^\eta =
(\frac{1}{\sigma^2} \I_d + 2 \eta^2 (\G_t^\eta)^\top \G_t^\eta)^{-1}$.
The idea of sketching is to replace $\Sigma_{t+1}^\eta \in \mathbb R^{d \times d}$ by an
approximation
\[
  \approxSigma_{t+1}^\eta
    = \del*{\tfrac{1}{\sigma^2} \I_d + 2 \eta^2 (\S^\eta_t)^\top \S^\eta_t}^{-1},
\]
where $\S^\eta_t \in \reals^{k \times d}$ for a given \emph{sketch size}
$k$ that can be much smaller than $d$, so that $(\S^\eta_t)^\top \S^\eta_t$ has rank at most $k$.
Abbreviating
\begin{equation}\label{eq:sketching.ghat}
\approxgrad_t^\eta = \del*{1 + 2 \eta (\w_t^\eta - \w_t)^\top
\grad_t} \eta \grad_t,
\end{equation}
we then need to compute
\begin{align*}
\w_t^\eta 
  ~&=~ \argmin_{\u \in \U_t} ~~ (\u - \check{\w}^\eta_t)^\top
(\approxSigma_t^\eta)^{-1} (\u - \check{\w}^\eta_t)
      \tag{projection}\\
\check{\w}_{t+1}^\eta
  ~&=~  \w^\eta_t - \approxSigma^\eta_{t+1} \approxgrad_t^\eta.
      \tag{update}
\end{align*}
The key to an efficient implementation of these steps is to rewrite
$\approxSigma_{t+1}^\eta$ using the Woodbury identity
\citep{golub2012matrix}:
\[
  \approxSigma_{t+1}^\eta
    = \sigma^2 (\I_d - 2 \eta^2 (\S^\eta_t)^\top (\tfrac{1}{\sigma^2} \I_k + 2\eta^2 \S^\eta_t (\S^\eta_t)^\top)^{-1}
    \S^\eta_t)
    = \sigma^2 (\I_d - 2 \eta^2 (\S^\eta_t)^\top \H_t^\eta \S^\eta_t),
\]
where we have introduced the abbreviation
\begin{equation}\label{eq:H}
  \H_t^\eta = (\tfrac{1}{\sigma^2}
  \I_k + 2\eta^2 \S^\eta_t (\S^\eta_t)^\top)^{-1}
  .
\end{equation}
Let $\shrink_C(y) = \sign(y)\max\{|y| - C, 0\}$. By Lemma~1
of \citet{luo2017arxiv}, the projection step then becomes 
\begin{equation}\label{eq:projection}
  \w_t^\eta
    = \check{\w}^\eta_t
      - \frac{\shrink_C(\x_t^\top \check{\w}^\eta_t)}
             {(\x_t^\top \x_t - 2\eta^2 \x_t^\top (\S^\eta_{t-1})^\top \H_{t-1}^\eta \S^\eta_{t-1} \x_t)}
      (\x_t - 2\eta^2 (\S^\eta_{t-1})^\top \H_{t-1}^\eta \S^\eta_{t-1} \x_t),
\end{equation}
and the update step can be written (with $\approxgrad_t^\eta$ as in Equation~\ref{eq:sketching.ghat}) as
\begin{equation}\label{eq:sketched.update}
  \check{\w}_{t+1}^\eta
    = \w^\eta_t - \sigma^2 (\approxgrad_t^\eta - 2\eta^2 (\S^\eta_t)^\top \H_t^\eta \S^\eta_t \approxgrad_t^\eta).
\end{equation}
Assuming that $\S^\eta_t$ and $\H_t^\eta$ can be efficiently maintained, the
operations involving $\S^\eta_t \x_t$ or $\S^\eta_t \approxgrad_t^\eta$ require $O(kd)$
computation time and matrix-vector products with $\H_t^\eta$ can be performed
in $O(k^2)$ time. As noted by \citet{luo2017arxiv}, both of these are
only a factor~$k$ more than the $O(d)$ time required by first-order
methods. They describe two sketching techniques to maintain $\S^\eta_t$ and
$\H_t^\eta$, each requiring $O(kd)$ storage and $O(kd)$ amortised computation time per round. The first technique is based on Frequent
Directions (FD) sketching; the other one on Oja's algorithm. We adopt
the FD approach, which comes with a guaranteed bound on the regret.
\citet{luo2017arxiv} further develop an extension of FD for sparse
gradients, and yet another option in the literature is the Robust Frequent
Directions sketching method of \citet{nothaipengluo2017robust}.

\subsubsection{Frequent Directions Sketching}

Some sketching approaches are randomized, but Frequent Directions
sketching \citep{GhashamiEtAl2016} is a deterministic method. The
simplest version \citep[Algorithm~2]{luo2017arxiv} performs a singular
value decomposition (SVD) of $\S^\eta_t$ every round at the cost of $O(k^2
d)$ computation time, but there also exists a refined epoch-based
version which only performs an SVD once per epoch. Each epoch takes $m$
rounds and $k=2m$, leading to an amortised runtime of $O(kd)$ per round.
We describe here the epoch version, adapted from Algorithm~6 of
\citet{luo2017arxiv} and summarized in Algorithm~\ref{alg:fdsketch}.

\begin{algorithm2e}[htbp]
\renewcommand{\algorithmicrequire}{\textbf{Input:}}
\renewcommand{\algorithmiccomment}[1]{\hfill \ensuremath{\triangleright}~\textit{#1}}
  \caption{Frequent Directions Sketching}
  \label{alg:fdsketch}
  \begin{algorithmic}[1]
\REQUIRE \multiline{Sketch rank~$m$, first active round~$a \equiv a^\eta$}
    \STATE Initialize $\S^\eta_{a-1} = \0_{2m \times d}$, and $\H_{a-1}^\eta = \sigma^2
    \I_{2m}$.
    \FOR{$t=a,a+1,\ldots$}
      \STATE Receive $\grad_t$
      \STATE Let $\tau = (t-a) \bmod (m+1)$ and write $\grad_t^\top$ to row $(m+\tau)$ of $\S^\eta_{t-1}$ to obtain $\tilde \S$
      \IF{$\tau < m$}
        \STATE Set $\S^\eta_t = \tilde \S$
        \STATE \multiline{Let $\e \in \reals^{2m}$ be the basis vector in direction
        $m+\tau$
        \newline and $\q = 2\eta^2 (\tilde \S \grad_t - \frac{\grad_t^\top
        \grad_t}{2} \e)$}
        \STATE Update $\H_t^\eta = \tilde \H - \frac{\tilde \H \e
        \q^\top \tilde \H}{1+ \q^\top \tilde \H \e}$, where $\tilde \H =
        \H_{t-1}^\eta - \frac{\H_{t-1}^\eta \q \e^\top
        \H_{t-1}^\eta}{1+\e^\top \H_{t-1}^\eta \q}$
      \ELSE
        \STATE \multiline{From the SVD of $\tilde \S$, compute the top-$m$
        singular values $\sigma_1 \geq \cdots \geq \sigma_m$ and
        corresponding right-singular vectors as 
        $\V
        \in \reals^{d \times m}$}
        \STATE Set $\S^\eta_t = \begin{pmatrix}
          \diag(\sigma_1^2-\sigma_m^2,\ldots,\sigma_m^2-\sigma_m^2)^{1/2} \V^\top
          \\
          \0_{m \times d}
        \end{pmatrix}
        $
        \STATE Set $\H_t^\eta = \diag(\frac{1}{\sigma^{-2} + 2\eta^2
        (\sigma_1^2-\sigma_m^2)},\ldots,\frac{1}{\sigma^{-2} + 2\eta^2 (\sigma_m^2-\sigma_m^2)},
        \frac{1}{\sigma^{-2}}, \ldots, \frac{1}{\sigma^{-2}})$
      \ENDIF
    \ENDFOR
  \end{algorithmic}
\end{algorithm2e}

Recall that $(\S^\eta_t)^\top \S^\eta_t$ is an approximation of
$(\G_t^\eta)^\top \G_t^\eta$. At
the start of each epoch, we have the invariant that only the first $m-1$
rows of $\S^\eta_t$ contribute to this approximation and the remaining $m+1$
rows are filled with zeros. During the $\tau$-th round in any epoch we
first write the incoming gradient $\grad_t^\top$ to row $m+\tau$ of
$\S^\eta_{t-1}$ to obtain an intermediate result $\tilde \S$. If we are not
yet in the last round of the epoch (i.e.\ $\tau < m$), then we simply
set $\S^\eta_t = \tilde \S$, and use \eqref{eq:H} to see that
\[
  (\H_t^\eta)^{-1} = (\H_{t-1}^\eta)^{-1} + \q \e^\top + \e \q^\top,
\]
where $\e \in \reals^{2m}$ is the basis vector in direction $m+\tau$ and
$\q = 2\eta^2 (\tilde \S \grad_t - \frac{\grad_t^\top \grad_t}{2}
\e)$. It
follows that we can compute $\H_t^\eta$ from $\H_{t-1}^\eta$ using two
rank-one updates with the Sherman-Morrison formula:
\[
  \H_t^\eta
    = \tilde \H -
        \frac{\tilde \H \e \q^\top \tilde \H}{1+ \q^\top \tilde \H \e},
        \text{ where }
  \tilde \H 
    = \H_{t-1}^\eta - \frac{\H_{t-1}^\eta \q \e^\top
    \H_{t-1}^\eta}{1+\e^\top \H_{t-1}^\eta \q}.
\]
Otherwise, if we are in the last round of the epoch (i.e.\ $\tau = m$),
the invariant is restored by eigen decomposing $\tilde \S^\top \tilde
\S$ into $\W \bLambda \W^\top$, where $\bLambda =
\diag(\lambda_1,\ldots,\lambda_{2m})$ contains the potentially non-zero
eigenvalues in non-decreasing order $\lambda_1 \geq \cdots \geq
\lambda_{2m}$ and the columns of $\W \in \reals^{d \times 2m}$ contain
the corresponding eigenvectors. Then we set $\S^\eta_t = \diag(\lambda_1 -
\lambda_m, \ldots, \lambda_m - \lambda_m, 0, \ldots, 0)^{1/2} \W^\top$.
Since the rows of $\S^\eta_t$ are now orthogonal,
\begin{align*}
  \H_t^\eta
    &= (\tfrac{1}{\sigma^2} \I_{2m} + 2\eta^2 \S^\eta_t (\S^\eta_t)^\top)^{-1}\\
    &= \diag\Big(\frac{1}{\sigma^{-2} + 2\eta^2(\lambda_1 - \lambda_m)}, \ldots,
    \frac{1}{\sigma^{-2} + 2\eta^2(\lambda_m - \lambda_m)}, \frac{1}{\sigma^{-2}},
    \ldots, \frac{1}{\sigma^{-2}}\Big)
\end{align*}
is a diagonal matrix.

\subsubsection{Implementation Details}

When implementing the FD procedure, we can calculate the eigen
decomposition of $\tilde \S^\top \tilde \S$ via an SVD of $\tilde \S$,
which can be performed in $O(m^2 d)$ computation steps. The eigenvalues
$\lambda_i$ then correspond to the squared singular values $\sigma_i^2$
of $\tilde \S$, and $\W$ contains the corresponding right-singular
vectors. In fact, we only need the top-$m$ singular values and the
corresponding $m$ right-singular vectors $\V \in \reals^{d \times m}$ to
compute $\S^\eta_t = \diag(\lambda_1 - \lambda_m, \ldots, \lambda_m -
\lambda_m, 0, \ldots, 0)^{1/2} \W^\top = \diag(\sigma_1^2 - \sigma_m^2,
\ldots, \sigma_m^2 - \sigma_m^2)^{1/2} \V^\top$.

\subsubsection{Practical Considerations}

Sketching introduces an extra hyperparameter $k = 2m$, which controls
the sketch size. The sketch keeps track of $m-1$ dimensions, so in
theory we expect that larger $k$ provides a better approximation of the
full version of MetaGrad, at the cost of more computation. We indeed
observe this in practice in the experiments in
Section~\ref{sec:experiments}.

\subsection{Coordinate MetaGrad}\label{sec:coordgrad}

\citet{adagrad} introduce a full and a diagonal version of their AdaGrad
algorithm. The diagonal version, which is the version that is widely
used in applications, may be interpreted as running a copy of online
gradient descent \citep{Zinkevich2003} for each dimension separately,
with a separate data-dependent tuning of the step size per dimension.
This approach of running a separate copy per dimension can be applied to
any online learning algorithm, and works out as follows.

We output a joint prediction $\w_t = (w_{t,1},\ldots,w_{t,d})^\top$,
where each $w_{t,i}$ is the output of the copy of the algorithm for
dimension $i$. Each of these copies gets as inputs the $1$-dimensional
losses $f_{t,i}(w) = w g_{t,i}$, where $g_{t,i}$ is the $i$-th component
of the joint gradient $\grad_t = \nabla f_t(\w_t)$. This works because
the linearized regret decomposes per dimension:
\[
  \sum_{t=1}^T (\w_t - \u)^\top \grad_t
    = \sum_{i=1}^d \sum_{t=1}^T \del{f_{t,i}(w_{t,i}) - f_{t,i}(u_i)},
\]
so our joint linearized regret is simply the sum of the linearized
regrets per dimension.

One limitation of this approach, if we apply it as is, is that the
domain cannot introduce dependencies between the dimensions, so we are
limited to rectangular domains:
\[
   \U^\text{rect}_t = \{ \w \in \reals^d \mid -D_{t,i} \leq w_i \leq
   D_{t,i} \text{ for } i = 1,\ldots,d\},
\]
with our only freedom consisting of choosing the side lengths $D_{t,i}$.

\subsubsection{Practical Considerations}
\label{sec:metagradcoordpractical}

The bounds $b_t$ on the gradients now become a separate bound per
dimension:
\[
  b_{t,i}
    \df \max_{w_i \in [-D_{t,i},D_{t,i}]} |(w_i -
    w_{t,i})g_{t,i}|
    = (D_{t,i} + |w_{t,i}|)|g_{t,i}|,
  \qquad
  B_{t,i} = \max_{s \leq t} b_{s, i}. 
\]
Running a copy of MetaGrad per dimension potentially introduces a
separate hyperparameter $\sigma_i$ per dimension $i$. Like
\citet{adagrad}, we reduce the complexity of hyperparameter tuning by
letting $\sigma_i = \sigma$ be the same for all dimensions. In line with
the discussion in Section~\ref{sec:metagradfullpractical}, the
recommended setting for $\sigma$ then becomes (an overestimate of) the
$\ell_\infty$-norm of the comparator $\u$. If no specific domain is
required and the components of the gradients are approximately
standardized, it is also generally sufficient to set the dimensions of
the rectangular domain to $D_{t,i} = D_\infty$ for a fixed parameter
$D_\infty$.

\section{Analysis of the Full Matrix Version of MetaGrad}\label{sec:analysisfull}

Recall that MetaGrad runs multiple instances of a baseline
``$\eta$-expert'' algorithm, each with a different candidate tuning of the
learning rate $\eta$. A controller then aggregates the predictions of
these $\eta$-experts
and manages their lifetimes to always have the required
tuning present.
The \metagradfull{} $\eta$-experts are Exponentially Weighted Average
forecasters starting from a Gaussian prior and taking in our quadratic
surrogate losses.
In turn, the controller is a specialists (aka sleeping experts) algorithm to deal with the starting and retiring of $\eta$-experts. When measured in the surrogate loss, the controller ensures a uniform regret bound w.r.t.\ each $\eta$-expert. Yet in the original loss, which is not scaled by $\eta$, this results in a non-uniform regret guarantee, obtaining especially small regret when the best learning rate turns out to be high.
Finally, our approach for adapting to the Lipschitz constant is speculative. Starting at zero, we monitor the implied Lipschitz constant of the incoming gradients. If it is increasing slowly, the controller is able to accommodate the overshoots in a lower-order term. If it makes a large jump, then the controller may need to reset. We do so by resetting the controller weights without changing the state of the affected $\eta$-experts.

\subsection{Controller}

Let us introduce
the concept of expiration to capture when $\eta$-experts become inactive
and are never used again:
\begin{definition}\label{def:expired}
  We say that $\eta \in \mathcal G$ is \emph{expired} after $T$ rounds (or, equivalently, after round $T$) if $\eta > \frac{1}{2 B_{T-1}}$.
\end{definition}
Note that expiration can be checked \emph{before} the round happens (it is ``predictable''). All learning rates used by Algorithm~\ref{alg:controller} by means of the active set $\activeset_t$ \eqref{eqn:activeexperts} are not expired. Also note the ``lifecycle'' of any fixed learning rate $\eta$. It starts inactive unexpired. Then it becomes active unexpired. And finally it expires, after which it loses all relevance.

For the controller, we prove that its behavior approximates that of any $\eta$-expert not expired, when measured in the $\eta$ surrogate loss \eqref{eq:surrogate}.

\begin{lemma}[Controller Surrogate Regret Bound]\label{lem:controllerbd}
  For any learning rate $\eta \in \mathcal G$ not expired after $T$ rounds and any comparator $\u \in \bigcap_{t=1}^T \U_t$, \metagradfull{} ensures
\[
  R_T^\eta(\u)
  ~\le~
  \underbrace{
    \frac{1}{2}
    +  2 \eta B_T
    }_\text{tiny}
    + 2 \underbrace{
      \ln \ceil*{2 \origlog_2 \del*{\sum_{t=1}^{T-1}
          \frac{b_t }{B_t} + 1}}_+
    }_\text{specialist regret for epoch, $O(\ln \ln T)$}
  + \underbrace{
      \sum_{t = a^\eta}^T \del*{
        \surr_t^\eta(\w_t^\eta) - \surr_t^\eta(\u)
      }
    }_\text{$\surr^\eta$-regret of $\eta$-expert w.r.t.\ $\u$}
    ,
  \]
  where we interpret the last sum as $0$ if $a^\eta > T$.
\end{lemma}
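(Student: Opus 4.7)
The starting point is that the tilted averaging rule $\w_t = \sum_{\eta\in\activeset_t} p_t(\eta)\eta\,\w_t^\eta / \sum_{\eta\in\activeset_t} p_t(\eta)\eta$ makes $\surr_t^\eta(\w_t)=0$ identically in $\eta$ (both the linear and quadratic pieces of \eqref{eq:surrogate} vanish at $\u=\w_t$), and hence $R_T^\eta(\u) = -\sum_{t=1}^T\surr_t^\eta(\u)$. I would split $t$ into preamble rounds $t<a^\eta$ and active rounds $t\geq a^\eta$, and on the active rounds insert the expert's play $\w_t^\eta$ to extract the expert-regret term on the right-hand side:
\[
R_T^\eta(\u) \;=\; -\!\!\sum_{t<a^\eta}\!\!\surr_t^\eta(\u) \;-\!\!\sum_{t\geq a^\eta}\!\!\surr_t^\eta(\w_t^\eta) \;+\!\sum_{t\geq a^\eta}\!\bigl[\surr_t^\eta(\w_t^\eta)-\surr_t^\eta(\u)\bigr].
\]
The last summand is the expert's $\surr^\eta$-regret on the right-hand side of the lemma; the first two are what remains to be bounded.

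Since the controller actually updates weights using the \emph{clipped} surrogates, I would further split $-\sum_{t\geq a^\eta}\surr_t^\eta(\w_t^\eta) = -\sum_{t\geq a^\eta}\clipsurr_t^\eta(\w_t^\eta) + \sum_{t\geq a^\eta}\bigl[\clipsurr_t^\eta(\w_t^\eta)-\surr_t^\eta(\w_t^\eta)\bigr]$. The clipping gap $\grad_t-\clipgrad_t=(1-B_{t-1}/B_t)\grad_t$ telescopes in $B_t-B_{t-1}$ after bounding $|(\w_t^\eta-\w_t)^\top\grad_t|\leq \|\grad_t\|_t\leq b_t\leq B_t$ (and analogously for the quadratic term), yielding an $O(\eta B_T)$ contribution. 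The preamble sum is handled analogously using $-\surr_t^\eta(\u)\leq \eta(\w_t-\u)^\top\grad_t$ together with the defining inequality of $a^\eta$, which caps $\eta(\sum_{s<a^\eta-1} b_s B_{s-1}/B_s + B_{a^\eta-2})$ at $\tfrac{1}{2}$. Combined, these elementary estimates deliver the ``tiny'' term $\tfrac{1}{2}+2\eta B_T$.

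The heart of the proof is bounding $-\sum_{t\geq a^\eta}\clipsurr_t^\eta(\w_t^\eta)$ by the specialist-regret term. Within each epoch (between resets) I would establish the \emph{tilted mixability} inequality
\[
\sum_{\eta\in\activeset_t} p_t(\eta)\exp\bigl(-\clipsurr_t^\eta(\w_t^\eta)\bigr) \;\leq\; \sum_{\eta\in\activeset_t} p_t(\eta),
\]
by applying the scalar inequality $e^{-x-x^2}\leq 1-x$ (valid for $|x|\leq\tfrac{1}{2}$) with $x=\eta(\w_t^\eta-\w_t)^\top\clipgrad_t$. The bound $|x|\leq \eta B_{t-1}\leq\tfrac{1}{2}$ holds since $\|\clipgrad_t\|_t\leq B_{t-1}$ (by the clipping definition) and $\eta\in\activeset_t$ is non-expired; the surviving linear piece then cancels under the $p_t$-weighted sum because the tilted-mean identity gives $\sum_\eta p_t(\eta)\,\eta(\w_t^\eta-\w_t)=0$. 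Plugging this into the controller's exponential-weights update yields $p_{t+1}(\eta)\geq p_t(\eta)\exp(-\clipsurr_t^\eta(\w_t^\eta))$; telescoping within an epoch gives $-\sum_{t}\clipsurr_t^\eta(\w_t^\eta)\leq\ln N$, where $N$ is the number of distinct experts ever active in that epoch. Summing over all epochs, and bounding both the number of epochs and $N$ via the grid spacing of $\mathcal G$ and the active-set definition (both scaling like $\log_2(\sum_{t} b_t/B_t + 1)$), yields the announced $2\ln\lceil 2\log_2(\sum_{t=1}^{T-1} b_t/B_t + 1)\rceil_+$ overhead.

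The main obstacle, in my view, is the bookkeeping across reset boundaries: one must verify that a reset re-initialises only the controller's weights while leaving each $\eta$-expert's internal state intact, so that the expert-regret term on the right-hand side runs unbroken from $a^\eta$ to $T$, and that the per-epoch logarithmic penalties combine into a single factor $2\ln\lceil 2\log_2(\cdot)\rceil_+$ rather than scaling linearly in the number of resets. A secondary subtlety is that the clipping rewrite above must be carried out with \emph{signed} differences so that the preamble and clipping corrections align exactly to give $2\eta B_T$ rather than a larger multiple.
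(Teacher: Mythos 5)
Your decomposition (preamble before $a^\eta$, controller-vs-expert on $[a^\eta,T]$, expert-vs-$\u$), the clipping correction telescoping to $\eta B_T$, the $\tfrac12$ bound on the pre-wakeup rounds from the definition of $a^\eta$, and the per-epoch specialist argument via the prod bound $e^{-x-x^2}\le 1-x$ combined with the tilted-mean identity all match the paper's proof in substance. The genuine gap is in how you compose across epochs. You propose to sum the per-epoch $\ln N$ penalties over \emph{all} epochs and control the number of epochs ``via the grid spacing of $\mathcal G$ and the active-set definition (both scaling like $\origlog_2(\sum_t b_t/B_t+1)$)''. This does not work: the number of resets is not bounded by any function of $\sum_t b_t/B_t \le T$ alone (it is governed by how fast $B_t$ grows relative to the reset threshold, i.e.\ by the ratio of the largest to the earliest gradient scale, which can be arbitrarily large even for small $T$), and even if it were, multiplying a per-epoch logarithm by the number of epochs could never produce the stated bound, whose specialist term carries the constant $2$ exactly. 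The paper's key idea, which your proposal is missing, is Lemma~\ref{lem:tiny.past}: the reset condition \eqref{eqn:resetcondition} itself implies that the \emph{total} clipped surrogate loss accumulated in all epochs before the last two is at most $\eta B_{\tau_2}\le \eta B_T$, so the $\ln\ceil*{2\origlog_2(\cdot)}_+$ penalty is paid only for the final two epochs — this is where both the factor $2$ in front of the log term and the second $\eta B_T$ in the ``tiny'' term come from. Your accounting, which attributes both $\eta B_T$'s to clipping and preamble alignment, cannot be made to close without this step.

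A secondary inaccuracy: in the preamble you bound $-\surr_t^\eta(\u)\le \eta(\w_t-\u)^\top\grad_t$ with the \emph{unclipped} gradient but then invoke the cap $\eta\del*{\sum_s b_s B_{s-1}/B_s + B_{t-1}}\le\tfrac12$, which only controls sums of the clipped magnitudes $b_s B_{s-1}/B_s$ (indeed $b_1$ alone can be huge while $b_1 B_0/B_1=0$). The fix is what the paper does in Lemma~\ref{lem:controller.cheap.clipping}: apply the clipping correction on all rounds $[1,T]$, including the preamble, so the correction still telescopes to a single $\eta B_T$ and the $\tfrac12$ bound is then applied to the clipped surrogate $\clipsurr_t^\eta$. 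This is repairable bookkeeping, but as written your preamble bound does not follow from the wakeup condition.
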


The proof is in Appendix~\ref{sec:controllerbdpf}. It follows the MetaGrad analysis of \citet{lipschitz.metagrad}, including the range clipping technique due to \citet{cutkosky2019artificial}, and the reset technique of \citet{lipschitz.metagrad}, which in particular ensures that whenever a reset occurs, the accumulated regret up until the \emph{previous} reset is tiny. As such, we only have to pay for the controller regret for the last two epochs.

We further streamline the approach by using a standard specialists
(sleeping experts) algorithm on a discrete grid of $\eta$-experts with
$\eta \in \mathcal G$ as our controller algorithm. Of note here is our
use of a uniform prior on $\mathcal G$, which is improper in the sense
that it does not sum to one. Improperness does not cause any problems,
because the prior is automatically renormalized on the sets of active
learning rates $\activeset_1,\activeset_2,\ldots$ We also employ a slightly tightened
measure $b_t$ of the effective loss range.

To make further progress, we need to make use of the details of the $\eta$-experts.

\subsection{Full $\eta$-Experts}

Next we establish an $O(d \log T)$ regret bound in terms of the surrogate
loss for each \metagradfull{} $\eta$-expert. The $\eta$-experts
implement the exponentially weighted average forecaster for the
quadratic losses $\surr_t^\eta$ starting from a Gaussian prior.
Alternatively, they may be viewed as instances of mirror descent with a
time-varying quadratic regularizer. The exponentially weighted average
forecaster was previously used for a different quadratic loss arising in
linear regression by \cite{Vovk2001}. Mirror descent for the general
quadratic case goes back (at least) to \cite{ons}. Although they do
not separate the analysis for general quadratic losses from the
reduction of exp-concave losses to quadratics, the ideas
are clearly present. The explicit analysis by \citet{metagrad} includes
an unnecessary range restriction, which was subsequently removed by
\citet{hoeven2018many}. As pointed out by \citet{luo2017arxiv}, the
extension to time-varying domains is trivial.

\begin{lemma}[Surrogate regret bound]\label{lem:surrogateregret}
  Consider the \metagradfull{} $\eta$-expert in Algorithm~\ref{alg:MetaGradExpert} with learning rate $\eta \le \frac{1}{2 B_T}$ starting from time $a^\eta$. Its surrogate regret after round $T \ge a^\eta$ w.r.t.\ any comparator $\u \in \bigcap_{t=a^\eta}^T \U_t$ is bounded by
  \begin{equation*}
    \sum_{t=a^\eta}^T \del*{
      \surr_t^\eta(\w_t^\eta)
      - \surr_t^\eta(\u)
    }
    ~\le~
    \frac{1}{2 \sigma^2} \norm{\u}^2_2
    + \ln \det \del*{\I + 2 \eta^2 \sigma^2  \sum_{t=a^\eta}^T \grad_t\grad_t^\top}
    .
  \end{equation*}
\end{lemma}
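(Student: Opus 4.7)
The plan is to treat the $\eta$-expert as Online Mirror Descent with a time-varying quadratic regularizer, exploiting the fact that $\surr_t^\eta$ is itself an exact quadratic in $\u$ with Hessian $2\eta^2 \grad_t \grad_t^\top$, which is precisely the increment $\bLambda_{t+1}^\eta - \bLambda_t^\eta$ tracked by the algorithm. Substituting $\nabla \surr_t^\eta(\w_t^\eta) = \eta\bigl(1 + 2\eta(\w_t^\eta - \w_t)^\top \grad_t\bigr)\grad_t$ into line~8 of Algorithm~\ref{alg:MetaGradExpert} reveals that $\check{\w}_{t+1}^\eta = \w_t^\eta - \Sigma_{t+1}^\eta \nabla\surr_t^\eta(\w_t^\eta)$, so the algorithm is mirror descent in the $\bLambda_{t+1}^\eta$-metric, with $\w_t^\eta$ the Mahalanobis projection of $\check{\w}_t^\eta$ onto $\U_t$. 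Since $\0 \in \U_{a^\eta}$ and $\check{\w}_{a^\eta}^\eta = \0$, we also have $\w_{a^\eta}^\eta = \0$.

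I would then run a standard Online Newton Step argument. Taylor expansion is exact on quadratics:
\[
\surr_t^\eta(\w_t^\eta) - \surr_t^\eta(\u) = \nabla\surr_t^\eta(\w_t^\eta)^\top(\w_t^\eta - \u) - \eta^2\bigl((\u - \w_t^\eta)^\top \grad_t\bigr)^2.
\]
For the linear piece, use $\nabla\surr_t^\eta(\w_t^\eta) = \bLambda_{t+1}^\eta(\w_t^\eta - \check{\w}_{t+1}^\eta)$, the polarization identity in the $\bLambda_{t+1}^\eta$-inner product, and the generalized Pythagorean inequality $\|\check{\w}_{t+1}^\eta - \u\|_{\bLambda_{t+1}^\eta}^2 \ge \|\w_{t+1}^\eta - \u\|_{\bLambda_{t+1}^\eta}^2$ for $\u \in \U_{t+1}$. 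The algebraic identity $\|\w_t^\eta - \u\|_{\bLambda_{t+1}^\eta}^2 = \|\w_t^\eta - \u\|_{\bLambda_t^\eta}^2 + 2\eta^2\bigl((\w_t^\eta - \u)^\top \grad_t\bigr)^2$ makes the newly generated $+\eta^2$-term exactly cancel the $-\eta^2$-term from Taylor. What is left is pure Bregman telescoping plus a stability term:
\[
\surr_t^\eta(\w_t^\eta) - \surr_t^\eta(\u) \le \tfrac{1}{2}\|\w_t^\eta - \u\|_{\bLambda_t^\eta}^2 - \tfrac{1}{2}\|\w_{t+1}^\eta - \u\|_{\bLambda_{t+1}^\eta}^2 + \tfrac{1}{2}\nabla\surr_t^\eta(\w_t^\eta)^\top \Sigma_{t+1}^\eta \nabla\surr_t^\eta(\w_t^\eta).
\]

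Summing from $t = a^\eta$ to $T$ telescopes the Bregman terms down to $\tfrac{1}{2}\|\u\|_{\bLambda_{a^\eta}^\eta}^2 = \tfrac{1}{2\sigma^2}\|\u\|_2^2$ (the final term is nonpositive and dropped). For the stability sum, I factor $\nabla\surr_t^\eta(\w_t^\eta) = \eta c_t \grad_t$ with $c_t = 1 + 2\eta(\w_t^\eta - \w_t)^\top \grad_t$. The hypothesis $\eta \le 1/(2B_T)$ combined with $|(\w_t^\eta - \w_t)^\top \grad_t| \le \|\grad_t\|_t \le b_t \le B_T$ gives $c_t^2 \le 4$. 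The remaining $\sum \eta^2 \grad_t^\top \Sigma_{t+1}^\eta \grad_t$ is handled by the matrix determinant lemma and the scalar inequality $x/(1+x) \le \ln(1+x)$: writing $x_t = 2\eta^2 \grad_t^\top \Sigma_t^\eta \grad_t$, the Sherman--Morrison identity gives $2\eta^2 \grad_t^\top \Sigma_{t+1}^\eta \grad_t = x_t/(1+x_t)$ and $1 + x_t = \det\bLambda_{t+1}^\eta / \det \bLambda_t^\eta$, so telescoping yields $\sum 2\eta^2 \grad_t^\top \Sigma_{t+1}^\eta \grad_t \le \ln\det(\sigma^2 \bLambda_{T+1}^\eta) = \ln\det\bigl(\I + 2\eta^2 \sigma^2 \sum_t \grad_t \grad_t^\top\bigr)$. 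Combining $c_t^2 \le 4$ with the factor $\tfrac{1}{2}\cdot\tfrac{1}{2}$ reproduces the stated $\ln\det(\cdot)$ constant exactly.

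The main obstacle is the correction factor $c_t$, which would equal $1$ if the expert received its own gradient $\nabla f_t(\w_t^\eta)$ but is inflated here because MetaGrad feeds all experts the controller's gradient $\grad_t = \nabla f_t(\w_t)$. The hypothesis $\eta \le 1/(2B_T)$ is precisely tuned to pin $c_t$ inside $[0,2]$, and the matching of the surrogate's Hessian $2\eta^2 \grad_t \grad_t^\top$ with the regularizer increment $\bLambda_{t+1}^\eta - \bLambda_t^\eta$ is what lets the $-\eta^2\bigl((\u - \w_t^\eta)^\top\grad_t\bigr)^2$ Taylor term cancel without leaving any residual.
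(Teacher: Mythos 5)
Your proof is correct and follows essentially the same route as the paper: the paper delegates the per-round mirror-descent/ONS step (telescoping Bregman terms plus stability term) to the cited proof of Theorem~2 of \citet{ons}, and then, exactly as you do, bounds the correction factor by $4$ using $\eta \le \frac{1}{2B_T}$ and telescopes the stability sum into the log-determinant (your Sherman--Morrison plus $x/(1+x)\le\ln(1+x)$ argument is the standard proof of the log-det inequality the paper invokes via Lemma~12 of \citet{ons}). The only difference is that you work out the cited step explicitly, including the exact cancellation of the quadratic Taylor term against the regularizer increment, which matches the paper's intended argument.
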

We note that the condition on $\eta$ in the lemma is slightly stricter than not being expired (Definition~\ref{def:expired}), which only requires $\eta \le \frac{1}{2 B_{T-1}}$. The reason is that the $\eta$-expert operates off the \emph{unclipped} surrogate loss and gradients.

\begin{proof}
  The $\eta$-expert algorithm implements the exponentially weighted average forecaster with $\surr_t^\eta$ as the quadratic loss, unit learning rate, and with greedy projections (of the mean) onto $\U_t$. By \cite[Proof of Theorem 2]{ons}, we obtain that
\[
  \sum_{t=a^\eta}^T \del*{
    \surr_t^\eta(\w_t^\eta)
    - \surr_t^\eta(\u)
  }
~\le~
\frac{\norm{\u}_2^2}{2 \sigma^2}
+
\frac{1}{2} \sum_{t=a^\eta}^T  \grad_t'^\top \Sigma^\eta_{t+1} \grad_t',
\]
where $\grad_t' = \eta\del*{1+2 \eta \tuple*{\w_t-\w_t^\eta, \grad_t}} \grad_t$ and where we recall that $(\Sigma^\eta_{t+1})^{-1} = \frac{1}{\sigma^2} \I + 2 \eta^2 \sum_{s=a^\eta}^t \grad_s \grad_s^\top$. Expanding, we obtain
\[
  \grad_t'^\top \Sigma^\eta_{t+1} \grad_t'
  ~=~
  \frac{1}{2}\del*{1+2 \eta \tuple*{\w_t-\w_t^\eta, \grad_t}}^2 \cdot 2
  \eta^2 \grad_t^\top \del*{\frac{1}{\sigma^2} \I + 2 \eta^2
  \sum_{s=a^\eta}^t \grad_s \grad_s^\top}^{-1}  \grad_t.
\]
Now we may use that
\begin{equation}\label{eqn:thefactor}
  \frac{1}{2} \del*{1+2 \eta \tuple*{\w_t-\w_t^\eta, \grad_t}}^2 \le \frac{1}{2} (1+2 \eta b_t)^2 \le \frac{1}{2} (1+1)^2 = 2
\end{equation}
by the assumed upper bound on $\eta$. Moreover, abbreviating $\A =
\frac{1}{\sigma^2} \I + 2 \eta^2 \sum_{s=a^\eta}^t \grad_s \grad_s^\top$
and $\B = \frac{1}{\sigma^2} \I + 2 \eta^2 \sum_{s=a^\eta}^{t-1} \grad_s
\grad_s^\top$, concavity of the log determinant implies that
\begin{align*}
  & 2 \eta^2 \grad_t^\top \del*{\frac{1}{\sigma^2} \I + 2 \eta^2 \sum_{s=a^\eta}^t \grad_s \grad_s^\top}^{-1}  \grad_t
   = \tr\del*{\A^{-1}\del*{\A - \B}}
  \le \log \frac{\det(\A)}{\det(\B)}\\
  & ~=~
  \ln \det \del*{\frac{1}{\sigma^2} \I + 2 \eta^2 \sum_{s=a^\eta}^t \grad_s \grad_s^\top}
  - \ln \det \del*{\frac{1}{\sigma^2} \I + 2 \eta^2 \sum_{s=a^\eta}^{t-1} \grad_s \grad_s^\top}
.
\end{align*}
(Lemma~12 of \citet{ons} provides a detailed proof of this inequality.)
Summing over rounds and telescoping, we find
\[
  \frac{1}{2} \sum_{t=a^\eta}^T  \grad_t'^\top \Sigma^\eta_{t+1} \grad_t'
  ~\le~
  \ln \det \del[\Big]{\I + 2 \eta^2 \sigma^2 \sum_{t=a^\eta}^T \grad_t \grad_t^\top}
\]
and obtain the result.
\end{proof}

\subsection{Composition (bounding the actual regret)}
\label{sec:composition}

To complete the analysis of \metagradfull{}, %
we put the regret bounds for the controller and $\eta$-experts together.
We then optimize $\eta$ over the grid $\mathcal G$ to get our main
result. For the purpose of this section, let us define the
\emph{gradient covariance matrix} 
and
\emph{essential horizon} by
\begin{equation}\label{eq:covm.and.horz}
  \F_T ~\df~ \sum_{t=1}^T \grad_t\grad_t^\top
  \qquad
  \text{and}
  \qquad
  Q_T ~\df~ \sum_{t=1}^{T-1} \frac{b_t }{B_t} + 1
  .
\end{equation}

\begin{theorem}[Grid point regret]\label{thm:untuned.regret}
  \metagradfull{} guarantees that the linearized regret w.r.t.\ any
  comparator $\u \in \bigcap_{t=1}^T \U_t$ is at most
  \[
    \Rtrick_T^\u
    ~\le~
    \eta V_T^\u
    + \frac{
      \ln \det \del*{\I + 2 \eta^2 \sigma^2  \F_T}
      + \frac{1}{2 \sigma^2} \norm{\u}^2_2
      + 2 \ln \ceil*{2 \origlog_2 Q_T}_+
      + \frac{1}{2}
    }{\eta}
    + 2 B_T,
  \]
  simultaneously for all $\eta \in \mathcal G$ such that $\eta \le
  \frac{1}{2 B_T}$.
\end{theorem}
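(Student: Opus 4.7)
The plan is to compose the two preceding ingredients. I would fix any $\eta \in \mathcal G$ with $\eta \le 1/(2B_T)$ (so $\eta$ is not expired after round $T$) and any $\u \in \bigcap_{t=1}^T \U_t$, and first apply Lemma~\ref{lem:controllerbd} to bound $R_T^\eta(\u)$ by $\tfrac12 + 2\eta B_T + 2 \ln \ceil*{2\origlog_2 Q_T}_+$ plus the $\eta$-expert's surrogate regret from round $a^\eta$ onward. Because $\eta \le 1/(2B_T)$, the hypothesis of Lemma~\ref{lem:surrogateregret} is met, so that expert regret is at most $\tfrac{1}{2\sigma^2}\|\u\|_2^2 + \ln\det\bigl(\I + 2\eta^2 \sigma^2 \sum_{t=a^\eta}^T \grad_t \grad_t^\top\bigr)$. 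Since $\sum_{t=a^\eta}^T \grad_t \grad_t^\top \preceq \F_T$, monotonicity of $\log\det$ on the PSD cone lets me replace this by $\ln\det(\I + 2\eta^2 \sigma^2 \F_T)$ at no cost.

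The key algebraic step is to unpack what $R_T^\eta(\u)$ really is. By definition it is the controller's surrogate regret $\sum_{t=1}^T \bigl(\surr_t^\eta(\w_t) - \surr_t^\eta(\u)\bigr)$, and a direct inspection of \eqref{eq:surrogate} shows that both the linear and the quadratic parts of $\surr_t^\eta$ vanish when evaluated at $\w_t$, i.e.\ $\surr_t^\eta(\w_t) = 0$. Hence
\[
R_T^\eta(\u) ~=~ -\sum_{t=1}^T \surr_t^\eta(\u) ~=~ \eta \, \Rtrick_T^\u - \eta^2 V_T^\u,
\]
using the definitions of $\Rtrick_T^\u$ and $V_T^\u$ from Section~\ref{sec:setup}. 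Substituting this identity on the left of the combined bound, rearranging to isolate $\eta \Rtrick_T^\u$, and dividing by $\eta > 0$ yields precisely the claimed inequality. Since the derivation carries through verbatim for every unexpired grid point $\eta$, the bound holds simultaneously for all such $\eta$.

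The main obstacle is really just spotting the cancellation $\surr_t^\eta(\w_t)=0$, which converts the controller's surrogate regret into the exact expression $\eta \Rtrick_T^\u - \eta^2 V_T^\u$; everything else is substitution and dividing by $\eta$. A minor point worth verifying is that Lemma~\ref{lem:surrogateregret} applies to the \emph{unclipped} gradients and therefore asks for $\eta \le 1/(2B_T)$, which is slightly stronger than the unexpired condition $\eta \le 1/(2B_{T-1})$ of Definition~\ref{def:expired}; this is precisely the hypothesis imposed on $\eta$ in the statement of the theorem, so no extra work is required.
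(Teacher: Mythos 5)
Your proposal is correct and follows essentially the same route as the paper: combine Lemma~\ref{lem:controllerbd} with Lemma~\ref{lem:surrogateregret} (extending the $\ln\det$ term from $\sum_{t=a^\eta}^T$ to $\F_T$ by monotonicity), observe that $\surr_t^\eta(\w_t)=0$ so the surrogate regret equals $\eta \Rtrick_T^\u - \eta^2 V_T^\u$, then rearrange and divide by $\eta$, with the $2\eta B_T$ term becoming the $2B_T$ in the statement. Your remark about the $\eta \le 1/(2B_T)$ condition matching the hypothesis of Lemma~\ref{lem:surrogateregret} is exactly the point the paper also makes.
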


\begin{proof}
  Combining the controller and $\eta$-expert surrogate regret bounds
  from Lemma~\ref{lem:controllerbd} and Lemma~\ref{lem:surrogateregret}, we obtain
  \begin{align*}
    \sum_{t=1}^T \del*{
      \surr_t^\eta(\w_t)
      - \surr_t^\eta(\u)
      }
    &~\le~
    \frac{1}{2}
    + 2 \eta B_T
    + 2
    \ln \ceil*{2 \origlog_2 \del*{\sum_{t=1}^{T-1}
      \frac{b_t }{B_t} + 1}}_+
    \\
    &\quad + \frac{1}{2 \sigma^2} \norm{\u}^2_2
    + \ln \det \del*{\I + 2 \eta^2 \sigma^2  \sum_{t=1}^T \grad_t\grad_t^\top}
    .
  \end{align*}
  The definition of the surrogate loss \eqref{eq:surrogate} gives
  $
    \surr_t^\eta(\w_t)
    - \surr_t^\eta(\u)
    =
    \eta (\w_t- \u)^\top \grad_t
    - \big(\eta (\u - \w_t)^\top \grad_t\big)^2
    $
    and the theorem follows by reorganising and dividing by $\eta$.
\end{proof}

The final step is to properly select the learning rate $\eta \in \mathcal G$ in the regret bound Theorem~\ref{thm:untuned.regret}. This leads to our main result. The proof is in Appendix~\ref{appx:composition}.
\begin{theorem}[MetaGrad Full Regret Bound]\label{thm:mainbound}
  For all $\u \in \bigcap_{t=1}^T
  \U_t$ the linearized regret of \metagradfull{} is simultaneously bounded by
  \begin{equation}\label{eqn:mainbound}
    \Rtrick_T^\u
    ~\le~
    \frac{5}{2} \sqrt{V_T^\u (\tfrac{1}{2 \sigma^2} \norm{\u}^2_2 + Z_T)} + 5
    B_T (\tfrac{1}{2 \sigma^2} \norm{\u}^2_2 + Z_T) + 2 B_T,
  \end{equation}
  where 
    $Z_T =
    \rk(\F_T) \ln \del*{1 + \frac{\sigma^2\sum_{t=1}^T \|\grad_t\|_2^2}{2
    B_T^2 \rk(\F_T)}}
    + 2 \ln \ceil*{2 \origlog_2 T}_+
    + \frac{1}{2}$,
  and by
  \[
    \Rtrick_T^\u
    ~\le~
    \frac{5}{2} \sqrt{\Big(V_T^\u
    + 2 \sigma^2 \sum_{t=1}^T \|\grad_t\|_2^2
    \Big)
    \Big(
    \tfrac{1}{2 \sigma^2} \norm{\u}^2_2
    + Z'_T
    \Big)
    } + 5 B_T \Big(
    \tfrac{1}{2 \sigma^2} \norm{\u}^2_2
    + Z'_T
    \Big)
    + 2 B_T,
  \]
  where $Z'_T = 2 \ln \ceil*{2 \origlog_2 T}_+ + \frac{1}{2}$.
\end{theorem}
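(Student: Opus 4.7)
The plan is to start from the fixed-grid-point bound of Theorem~\ref{thm:untuned.regret}, first eliminate all $\eta$-dependence inside the numerator, and then pick the best learning rate $\eta \in \mathcal G$ subject to $\eta \le \tfrac{1}{2 B_T}$.

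First I would handle the determinant term in the numerator. For the rank-sensitive claim, I would use the standard concavity bound $\ln \det(\I + A) \le \rk(A)\,\ln\del{1 + \tr(A)/\rk(A)}$ applied to $A = 2\eta^2 \sigma^2 \F_T$, so that $\rk(A) = \rk(\F_T)$ and $\tr(A) = 2 \eta^2 \sigma^2 \sum_{t=1}^T \|\grad_t\|_2^2$, and then invoke the constraint $\eta \le \tfrac{1}{2 B_T}$ \emph{inside} the logarithm to replace $2 \eta^2$ by $\tfrac{1}{2 B_T^2}$. Combined with the trivial estimate $Q_T \le T$ (which follows from $b_t/B_t \le 1$), this yields precisely the quantity $Z_T$, now $\eta$-free. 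For the second claim, I would instead use only the looser linearization $\ln \det(\I + A) \le \tr(A)$, picking up a surplus of $2 \eta^2 \sigma^2 \sum_t \|\grad_t\|_2^2$ which I would move in with the $\eta V_T^\u$ term to form the augmented variance $V_T^\u + 2 \sigma^2 \sum_t \|\grad_t\|_2^2$, leaving only $Z'_T$ in the numerator.

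After this preprocessing both claims reduce to the abstract inequality
\[
  \Rtrick_T^\u ~\le~ \eta\,\widetilde V + \frac{C}{\eta} + 2 B_T,
\]
valid simultaneously for all $\eta \in \mathcal G$ with $\eta \le \tfrac{1}{2 B_T}$, where $\widetilde V$ and $C$ are non-negative $\eta$-free quantities that differ between the two claims. The unconstrained minimizer is $\eta^\star = \sqrt{C/\widetilde V}$, achieving value $2\sqrt{\widetilde V C}$. I would split on whether $\eta^\star \le \tfrac{1}{2 B_T}$. In the balanced case, rounding $\eta^\star$ to the nearest element of $\mathcal G$ gives a ratio $c = \eta/\eta^\star$ with $c$ or $1/c$ in $[\tfrac 1 2, 1]$, so that $\eta\widetilde V + C/\eta = (c + 1/c)\sqrt{\widetilde V C} \le \tfrac{5}{2}\sqrt{\widetilde V C}$. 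In the capped case $\eta^\star > \tfrac{1}{2 B_T}$, I pick the largest grid point with $\eta \le \tfrac{1}{2 B_T}$, which lies in $[\tfrac{1}{4 B_T}, \tfrac{1}{2 B_T}]$; the bound then becomes $\widetilde V/(2 B_T) + 4 B_T C$, and the defining inequality $\widetilde V < 4 B_T^2 C$ of this regime turns this into an $O(B_T C)$ quantity that matches the stated constant~$5$. Adding the residual $2 B_T$ from Theorem~\ref{thm:untuned.regret} gives the two displayed inequalities.

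The main obstacle is the log-determinant step for the rank-sensitive claim: the bound must be $\eta$-free yet still rank-sensitive enough to produce $Z_T$ rather than the cruder $Z'_T$. This is precisely why the range constraint $\eta \le \tfrac{1}{2 B_T}$ must be invoked \emph{inside} the logarithm, before the tuning step---otherwise one unavoidably inherits the $2 \eta^2 \sigma^2 \sum_t \|\grad_t\|_2^2$ term after the square root, which one then cannot separate from $V_T^\u$. Everything else is a standard balancing computation together with grid bookkeeping to absorb the factor of two between consecutive learning rates in $\mathcal G$.
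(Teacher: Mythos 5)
Your plan follows essentially the same route as the paper: the paper proves Theorem~\ref{thm:mainbound} by feeding Theorem~\ref{thm:untuned.regret} (with $Q_T \le T$) into the abstract Theorem~\ref{thm:abstractoptimize}, whose proof does exactly your two preprocessing steps --- the rank/trace log-determinant bound of Lemma~\ref{lem:matrixinequalities} with $\eta \le \tfrac{1}{2B_T}$ pushed inside the logarithm for the first claim, and the linearization $\ln\det(\I+\M)\le\Tr(\M)$ merged into the variance term for the second --- followed by the grid optimization of Lemma~\ref{lem:optimize.eta} with the same case split on whether $\hat\eta=\sqrt{C/\widetilde V}$ exceeds $\tfrac{1}{2B_T}$. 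Two bookkeeping points in your write-up fall slightly short of the stated constants, though. First, in the capped case you bound the two terms at different endpoints of $[\tfrac{1}{4B_T},\tfrac{1}{2B_T}]$, giving $\tfrac{\widetilde V}{2B_T}+4B_T C\le 6B_T C$ rather than the claimed $5B_T C$; the paper instead observes that in this regime $\eta\mapsto\eta\widetilde V+C/\eta$ is decreasing on the whole interval (its unconstrained minimizer lies to the right of $\tfrac{1}{2B_T}$), so the worst grid position is the single point $\eta=\tfrac{1}{4B_T}$, yielding $\tfrac{\widetilde V}{4B_T}+4B_T C< B_T C+4B_T C=5B_T C$. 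Second, in the balanced case you should round $\hat\eta$ \emph{down} to the grid (worst case $c=\eta/\hat\eta=\tfrac12$, still giving $c+1/c\le\tfrac52$), since rounding up to the nearest grid point may produce an $\eta>\tfrac{1}{2B_T}$ for which Theorem~\ref{thm:untuned.regret} is not available. With these two one-line repairs your argument coincides with the paper's proof.
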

Here the rank $\rk(\F_T) \leq d$ plays the role of an effective
dimension. If the eigenvalues of $\F_T$ satisfy a decay condition, then
a more refined bound on $Z_T$ is possible, as can be seen from the
proof. The recommended tuning is to set $\sigma$ to (an upper bound on)
$\|\u\|_2$. For this case, we obtain the following corollary, which is
proved in Appendix~\ref{app:fulldiagcorollaries}:
\begin{corollary}\label{cor:roughthm}
Suppose the domain $\U_t = \U$ is fixed with finite radius
$D_2:=\max_{\u \in \U} \|\u\|_2$, and we tune $\sigma=D_2$. Then, if all
gradients are uniformly bounded by $\|\grad_t\|_2 \leq G_2$, the
linearized regret of \metagradfull{} with respect to any $\u \in \U$ is
bounded by
\begin{equation}\label{eqn:roughbound}
\Rtrick_T^\u
~=~
O\del*{
\sqrt{
  V_T^\u\,
  d \ln\frac{D_2 G_2 T}{d}
}
+ D_2 G_2 d \ln\del*{\frac{D_2 G_2 T}{d}}
},
\end{equation}
and it is simultaneously bounded by
\begin{equation*}
\Rtrick_T^\u
~=~
O\del*{
  D_2 G_2 \sqrt{T \log \log T}
}.
\end{equation*}
\end{corollary}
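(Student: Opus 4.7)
My approach is to plug $\sigma = D_2$ into Theorem~\ref{thm:mainbound} and repeatedly use the uniform bounds $\|\u\|_2 \le D_2$, $\|\grad_t\|_2 \le G_2$ to simplify each data-dependent quantity. For the $\sigma$-dependent part, the choice $\sigma = D_2$ immediately yields $\frac{1}{2\sigma^2}\|\u\|_2^2 \le \frac{1}{2} = O(1)$. Using H\"older's inequality, the effective gradient size satisfies $\|\grad_t\|_t \le \max_{\w \in \U}\|\w - \w_t\|_2 \cdot \|\grad_t\|_2 \le 2 D_2 G_2$, hence $B_T \le 2 D_2 G_2$. This immediately controls the additive $B_T$ terms outside the square root and makes the multiplicative $B_T$ prefactor a $D_2 G_2$ constant as required.

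For the first bound \eqref{eqn:roughbound}, the main work is simplifying $Z_T$. I would first use $\rk(\F_T) \le d$ together with the monotonicity of $x \mapsto x \ln(1 + C/x)$ (which follows from $\ln(1+y) \ge y/(1+y)$) to replace the effective rank by $d$, obtaining
\[
  Z_T ~\le~ d \ln\del*{1 + \tfrac{D_2^2 \sum_{t=1}^T \|\grad_t\|_2^2}{2 B_T^2 d}} + 2\ln \ceil*{2 \origlog_2 T}_+ + \tfrac{1}{2}.
\]
Then plug in $\sum_{t=1}^T \|\grad_t\|_2^2 \le T G_2^2$ and absorb constants to get $Z_T = O(d \ln(D_2 G_2 T/d))$. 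Substituting into the first inequality of Theorem~\ref{thm:mainbound} and using $B_T \le 2 D_2 G_2$ in the additive term then delivers \eqref{eqn:roughbound}.

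The second bound comes out more directly from the second inequality of Theorem~\ref{thm:mainbound}, since $Z'_T = O(\log \log T)$ has no $d$-dependence. With $\sigma = D_2$, the variance-like term inside the square root becomes $V_T^\u + 2 D_2^2 \sum_{t=1}^T \|\grad_t\|_2^2 \le V_T^\u + 2 D_2^2 G_2^2 T$. The trivial bound $V_T^\u \le \sum_{t=1}^T b_t^2 \le T B_T^2 \le 4 D_2^2 G_2^2 T$ lets me absorb $V_T^\u$ into the second summand, giving $O(D_2^2 G_2^2 T)$ overall, and after taking the square root with the $\sqrt{\log\log T}$ factor, I obtain $O(D_2 G_2 \sqrt{T \log \log T})$. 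The additive $B_T(\frac{1}{2} + Z'_T) + 2 B_T$ is dominated by this.

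The main obstacle, and the only step that needs care, is the treatment of $B_T$ in the logarithm of the first bound: a naively small $B_T$ makes the log argument blow up. The saving grace is that the log is multiplied by $d$ (inside a square root with $V_T^\u$, or by $B_T$ in the standalone term), so I would handle the small-$B_T$ regime by splitting on whether $B_T^2 d \ge T$ (in which case the log is $O(1)$) or $B_T^2 d < T$ (in which case the factor $B_T$ tames the $-\ln B_T$ growth via boundedness of $B_T \ln(1/B_T)$ on $(0, 2D_2G_2]$). Either way, both terms collapse to the claimed $d \ln(D_2 G_2 T/d)$ rate up to absolute constants, completing the corollary.
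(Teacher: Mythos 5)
Your overall route is the same as the paper's: substitute $\sigma = D_2$ into Theorem~\ref{thm:mainbound}, bound $\frac{1}{2\sigma^2}\|\u\|_2^2 \le \tfrac12$, $B_T \le 2 D_2 G_2$, $\rk(\F_T) \le d$ and $\sum_{t=1}^T \|\grad_t\|_2^2 \le G_2^2 T$, and read off the two displays. Your derivation of the second bound, $O(D_2 G_2 \sqrt{T \log\log T})$, via $V_T^\u \le T B_T^2 \le 4 D_2^2 G_2^2 T$ and $Z'_T = O(\log\log T)$, is complete and is exactly what the paper leaves implicit.

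The gap is in the corner case you flag yourself: small $B_T$ in the first bound. Your taming device, boundedness of $B_T \ln(1/B_T)$, only works for the term $5 B_T Z_T$, where the offending logarithm is multiplied by $B_T$. It does not cover $\tfrac52 \sqrt{V_T^\u Z_T}$: there the logarithm is multiplied by $V_T^\u$, and the only way to bring in a factor $B_T^2$ is the inequality $V_T^\u \le T B_T^2$, which you invoke only for the second bound. Moreover, even with that inequality, the crude bound $B_T^2 \ln(1/B_T^2) \le \mathrm{const}$ leaves a remainder of order $T d$ under the square root, i.e.\ an extra $\sqrt{Td}$ in the regret, which is not in general absorbed by \eqref{eqn:roughbound} (take $V_T^\u$ small and $d \ll T$). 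The paper's proof avoids this by splitting the logarithm with extra powers of $T$, namely $\ln\big(1+\tfrac{D_2^2 G_2^2 T}{c B_T^2 d}\big) \le \ln\big(\tfrac{D_2^2 G_2^2 T^3}{d}\big) + \ln\big(\tfrac{d}{D_2^2 G_2^2 T^3}+\tfrac{1}{c B_T^2 T^2}\big)$, bounding the second logarithm by its argument, and then applying $V_T^\u \le T B_T^2$ (and, for the $B_T Z_T$ term, $x \ln x \ge -1/e$), so that the leftover terms are $O(d^2/T^2 + d/T)$ rather than $O(Td)$. Your case split on whether $B_T^2 d \ge T$ can be made to work, but in the small-$B_T$ case you must combine $V_T^\u \le T B_T^2$ with a decomposition of this quantitative kind; as stated, the argument for the $\sqrt{V_T^\u Z_T}$ term is missing.
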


\subsubsection{Sensitivity to $\sigma$-Tuning}

The recommended tuning for $\sigma$ is to set it to (an upper bound on)
$\|\u\|_2$. In the first result of Theorem~\ref{thm:mainbound}, which
covers the regime that $\rk(\F_T)$ is relatively small compared to $T$,
the effect of overestimating $\|\u\|_2$ is minor, because $Z_T$ depends
only logarithmically on $\sigma$. In the second result of the Theorem,
however, which covers the high-dimensional setting, the effect of $\sigma$ is
similar to the usual dependence of Online Gradient Descent on a guess
for $\|\u\|_2$ \citep{Zinkevich2003,ShalevShwartz2012} and taking
$\sigma$ much larger affects the rate linearly. In both regimes,
underestimating $\|\u\|_2$ when tuning $\sigma$ may degrade
performance.

\subsubsection{An Undesirable Reparametrization}

If underestimating $\|\u\|_2$ is a major concern, then it is possible to
reparametrize in terms of a new tuning parameter $\alpha > 0$ by setting
$\sigma = \frac{1}{\sqrt{\alpha \eta}}$, as done by
\citet{luo2017arxiv}. This means that each $\eta$-expert is now using a
different choice for $\sigma$, but all our results up to
Theorem~\ref{thm:untuned.regret} still go through. Optimizing $\eta$
then leads to the following variant of Theorem~\ref{thm:mainbound},
proved in Appendix~\ref{appx:composition}:
\begin{theorem}[The Road Not Taken]\label{thm:roadnottaken}
  Suppose we tune each $\eta$-expert in \metagradfull{} with $\sigma =
  1/\sqrt{\alpha \eta}$ for a given $\alpha > 0$. Then for all $\u \in
  \bigcap_{t=1}^T \U_t$ its linearized regret is simultaneously bounded
  by
  \[
    \Rtrick_T^\u ~\le~ \frac{5}{2} \sqrt{V_T^\u Z_T } + 5 B_T Z_T
    + \frac{\alpha}{2} \norm{\u}^2_2 + 2 B_T,
  \]
  where 
    $Z_T =
      \rk(\F_T) \ln (1 + \frac{\sum_{t=1}^T \|\grad_t\|_2^2}{B_T \alpha \rk(\F_T)})
      + 2 \ln \ceil*{2 \origlog_2 T}_+
      + \frac{1}{2}$,
  and by
  \[
    \Rtrick_T^\u
    ~\le~
    \frac{5}{2} \sqrt{V_T^\u Z'_T} + 5 B_T Z'_T
    + \frac{2}{\alpha} \sum_{t=1}^T \|\grad_t\|_2^2
    + \frac{\alpha}{2} \norm{\u}^2_2
    + 2 B_T,
  \]
  where $Z'_T = 2 \ln \ceil*{2 \origlog_2 T}_+ + \frac{1}{2}$.
\end{theorem}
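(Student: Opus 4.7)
The plan is to specialize Theorem~\ref{thm:untuned.regret} to the reparametrization $\sigma = 1/\sqrt{\alpha\eta}$ and then optimize $\eta$ over the grid $\mathcal G$. Substituting yields $2\eta^2\sigma^2 = 2\eta/\alpha$, so the log-determinant appearing in Theorem~\ref{thm:untuned.regret} becomes $L(\eta) := \ln\det\bigl(\I + \tfrac{2\eta}{\alpha}\F_T\bigr)$. It also makes $\tfrac{1}{2\sigma^2}\|\u\|_2^2 = \tfrac{\alpha\eta}{2}\|\u\|_2^2$, which after division by $\eta$ becomes the promised $\eta$-independent term $\tfrac{\alpha}{2}\|\u\|_2^2$. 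Combined with the trivial estimate $Q_T \le T$ to simplify the controller term, the starting bound reads, for every admissible $\eta \in \mathcal G$ (i.e.\ $\eta \le 1/(2B_T)$),
\[
\Rtrick_T^\u
\le
\eta V_T^\u
+ \frac{L(\eta) + 2\ln\ceil*{2\origlog_2 T}_+ + \tfrac{1}{2}}{\eta}
+ \tfrac{\alpha}{2}\|\u\|_2^2
+ 2 B_T.
\]

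For the first bound of the theorem I would control $L(\eta)$ by AM--GM on the eigenvalues of $\F_T$, combined with monotonicity and admissibility:
\[
L(\eta)
\le \rk(\F_T)\ln\Bigl(1 + \tfrac{2\eta\,\tr \F_T}{\alpha\,\rk(\F_T)}\Bigr)
\le \rk(\F_T)\ln\Bigl(1 + \tfrac{\sum_t\|\grad_t\|_2^2}{\alpha B_T\,\rk(\F_T)}\Bigr),
\]
using $\tr \F_T = \sum_t \|\grad_t\|_2^2$. This is exactly the rank-dependent summand of $Z_T$, so the task reduces to the canonical tuning problem $\eta V_T^\u + Z_T/\eta$ plus the already-isolated $\eta$-free terms $\tfrac{\alpha}{2}\|\u\|_2^2 + 2 B_T$.

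For the second bound I would instead use the loose but $\eta$-linear estimate $\ln\det(\I + M) \le \tr(M)$ for PSD $M$, giving $L(\eta) \le \tfrac{2\eta}{\alpha}\sum_t\|\grad_t\|_2^2$. Dividing by $\eta$ pulls out the $\eta$-free additive term $\tfrac{2}{\alpha}\sum_t\|\grad_t\|_2^2$ that appears in the statement, leaving only $Z'_T = 2\ln\ceil{2\origlog_2 T}_+ + \tfrac{1}{2}$ inside the fraction to balance against $\eta V_T^\u$.

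Finally, selecting $\eta$ from the dyadic grid $\mathcal G$ subject to $\eta \le 1/(2B_T)$ is the routine step that proves Theorem~\ref{thm:mainbound} in Appendix~\ref{appx:composition}, and I would invoke it verbatim: the continuous optimum is $\eta^\star = \sqrt{Z/V_T^\u}$ with $Z \in \{Z_T, Z'_T\}$; if $\eta^\star \le 1/(2B_T)$ snap to the nearest admissible grid point, paying a multiplicative factor of at most $5/2$ on the leading term $\sqrt{V_T^\u Z}$; otherwise use the largest admissible $\eta = 1/(2B_T)$, and the case hypothesis $V_T^\u < 4 B_T^2 Z$ absorbs $\eta V_T^\u + Z/\eta$ into a constant multiple of $B_T Z$, producing the $5 B_T Z$ contribution. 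The main obstacle, though purely mechanical, is bookkeeping of these two regimes together with the exact constants; since only the substitution $\sigma = 1/\sqrt{\alpha\eta}$ has changed relative to Theorem~\ref{thm:mainbound}, the argument transfers without any new idea.
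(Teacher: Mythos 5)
Your proposal is correct and follows essentially the same route as the paper: substitute $\sigma = 1/\sqrt{\alpha\eta}$ into Theorem~\ref{thm:untuned.regret}, bound the log-determinant via the two inequalities of Lemma~\ref{lem:matrixinequalities} (the rank/Jensen bound for the first claim, the trace bound pulling out the $\eta$-free term $\tfrac{2}{\alpha}\sum_t\|\grad_t\|_2^2$ for the second), and then invoke the grid-optimization Lemma~\ref{lem:optimize.eta}, whose two-regime argument you describe accurately. The only cosmetic difference is that you apply the rank bound before using $\eta\le 1/(2B_T)$, whereas the paper first bounds $2\eta/\alpha \le 1/(B_T\alpha)$ and then applies the rank bound; by monotonicity these steps commute.
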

This result is of a similar flavor as Theorem~\ref{thm:mainbound} if we
set $\alpha = 1/\|\u\|_2^2$ in the first inequality and $\alpha =
2 \sqrt{\sum_{t=1}^T \|\grad_t\|_2^2}/\|\u\|_2$ in the second
inequality. A potential gain is that tuning $\alpha$ may be easier in
case of the first inequality: the term $\frac{\alpha}{2} \|\u\|_2^2$ may
not be dominant even if our choice of $\alpha$ is significantly off from
the optimal tuning. But we pay significantly for this convenience,
because there no longer exists a single choice for $\alpha$ that works
both for the first and the second inequality simultaneously, which is
why we do not advocate this reparametrization in terms of $\alpha$.

\subsubsection{The Computationally Efficient Domain from
Section~\ref{sec:sketching}}

A further important case to consider is when $\U_t$ is the computationally
efficient domain from \eqref{eqn:luodomain}, for which the diameter is
not bounded. This domain presumes that losses take the form $f_t(\w) =
h_t(\w^\top \x_t)$ for a convex function $h_t$. Under the Lipschitz
assumption that $|h'_t(z)| \leq L$ for all $|z| \leq C$,
\citet{luo2017arxiv} show a lower bound of $\Theta(\sqrt{dT})$ on the
worst-case regret. They further obtain a (nearly) matching upper bound
of 
\[
  R_T^\u = O\del*{\sqrt{d T} \log \tfrac{\sum_{t=1}^T \|\grad_t\|_2^2}{\alpha} + \alpha \|\u\|_2^2}
  \qquad \text{for all $\u \in \bigcap_{t=1}^T \U_t$}
\]
with a variant of ONS, where $\alpha > 0$ is a tuning parameter similar
to the $\alpha$ in Theorem~\ref{thm:roadnottaken}. The first results of
Theorems~\ref{thm:mainbound} and \ref{thm:roadnottaken} improve on this
in that they improve the dependence on $T$ to $V_T^\u \leq L^2 C^2 T$,
they only depend on the effective dimension via $\rk(\F_T) \leq d$ and
the logarithmic factor is moved inside the square root.

\section{Analysis of the Faster Extension Algorithms}
\label{sec:analysisextensions}
In this section analyse the sketched and coordinate-wise versions of MetaGrad.

\subsection{Sketching: Analysis}
We will refer to the Frequent Directions sketching version of MetaGrad
as \metagradsketch{}. Its analysis with sketch size $k=2m$ proceeds like
the analysis of the full matrix version, except that we obtain a
different bound for the $\eta$-expert regret. This bound depends on the
spectral decay of $\F_T = \sum_{t=1}^T \grad_t \grad_t^\top$. Let
$\lambda_i$ be the $i$-th eigenvalue of $\F_T$ and define $\Omega_q =
\sum_{i=q+1}^d \lambda_i$. Then the surrogate regret of the
$\eta$-expert algorithm with FD sketching is bounded as follows:
\begin{lemma}\label{lem:FreqD}
  Consider the sketching version of the MetaGrad $\eta$-expert algorithm
  with sketch size parameter $m$, learning rate $\eta \le \frac{1}{2
  B_T}$, and starting from time $a^\eta$. Its surrogate regret after
  round $T \ge a^\eta$ w.r.t.\ any comparator $\u \in
  \bigcap_{t=a^\eta}^T \U_t$ is bounded by 
  \begin{equation*}
  \begin{split}
  \sum_{t=a^\eta}^T \del*{\surr_t^\eta(\w_t^\eta) -  \surr_t^\eta(\u)}
    \leq \frac{1}{2D^2}||\u||_2^2
        + \log(\det(\I + 2 \eta^2 \sigma^2 (\S^\eta_T)^\top \S^\eta_T))
        + \frac{2 \eta^2 \sigma^2 m \Omega_q}{m-q}
  \end{split}
  \end{equation*}
  for any $q = 0,\ldots, m-1$.
\end{lemma}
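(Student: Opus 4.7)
My plan is to adapt the proof of Lemma~\ref{lem:surrogateregret} to the sketched setting, tracking the additional error that arises from Frequent Directions' periodic SVD truncations. The starting point is the same exponentially-weighted-average regret bound for quadratic losses with Gaussian prior and greedy projections, but now applied with the approximate covariance $\approxSigma_t^\eta = (\tfrac{1}{\sigma^2}\I_d + 2\eta^2 (\S^\eta_{t-1})^\top \S^\eta_{t-1})^{-1}$ instead of the full $\Sigma_t^\eta$. This yields
\[
\sum_{t=a^\eta}^T \bigl(\surr_t^\eta(\w_t^\eta) - \surr_t^\eta(\u)\bigr)
~\le~ \frac{\|\u\|_2^2}{2\sigma^2} + \frac{1}{2}\sum_{t=a^\eta}^T \grad_t'^\top \approxSigma_{t+1}^\eta \grad_t',
\]
with $\grad_t' = \eta(1 + 2\eta\langle \w_t-\w_t^\eta, \grad_t\rangle)\grad_t$. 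The assumption $\eta \le \tfrac{1}{2B_T}$ absorbs the scalar factor exactly as in \eqref{eqn:thefactor}, reducing the task to bounding $2\eta^2 \sum_t \grad_t^\top \approxSigma_{t+1}^\eta \grad_t$.

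Next I would telescope this sum via the log-det inequality of Lemma~12 of \citet{ons}, writing $\B_t = \tfrac{1}{\sigma^2}\I_d + 2\eta^2 (\S^\eta_t)^\top \S^\eta_t$. In a \emph{non-truncation} round, $\B_t = \B_{t-1} + 2\eta^2 \grad_t\grad_t^\top$, so the standard step gives $2\eta^2 \grad_t^\top \B_t^{-1} \grad_t \le \log\bigl(\det(\B_t)/\det(\B_{t-1})\bigr)$ and contributes cleanly to the telescope. In a \emph{truncation} round (end of an FD epoch), the sketch additionally subtracts a PSD matrix $\Delta_t := (\tilde\S)^\top\tilde\S - (\S^\eta_t)^\top\S^\eta_t$; for the log-det telescope to continue we have to ``pay back'' this truncation, which I bound using the elementary inequality $\log\det(\A + 2\eta^2\sigma^2\Delta_t) - \log\det(\A) \le 2\eta^2\sigma^2 \Tr(\A^{-1}\Delta_t) \le 2\eta^2\sigma^2 \Tr(\Delta_t)$ applied to $\A \succeq \I$. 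Telescoping across the whole horizon then gives
\[
2\eta^2 \sum_{t=a^\eta}^T \grad_t^\top \approxSigma_{t+1}^\eta \grad_t
~\le~ \log\det\bigl(\I + 2\eta^2\sigma^2 (\S^\eta_T)^\top \S^\eta_T\bigr) + 2\eta^2\sigma^2 \sum_{t \in \mathcal{T}} \Tr(\Delta_t),
\]
where $\mathcal{T}$ is the set of rounds in which an SVD truncation occurs.

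The main obstacle is the final step: bounding $\sum_{t \in \mathcal{T}} \Tr(\Delta_t)$ by $\tfrac{m\,\Omega_q}{m-q}$ uniformly over $q = 0,\ldots,m-1$. This is exactly the defining guarantee of Frequent Directions: each truncation zeroes out rank-$m$ worth of the smallest singular directions of the current $\tilde\S$, and by the standard telescoping argument of \citet{GhashamiEtAl2016} (also used in Lemma~7 of \citet{luo2017arxiv}), the cumulative spectral mass removed satisfies $\sum_{t\in\mathcal{T}} \Tr(\Delta_t) \le \tfrac{m}{m-q}\Omega_q$ for every $q < m$, where $\Omega_q = \sum_{i=q+1}^d \lambda_i(\F_T)$ is the tail spectrum of the true gradient covariance. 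I would invoke this bound as a black box on the sequence of gradients fed to the sketch; the only mildly delicate point is checking that the normalization conventions ($2\eta^2$ factor and the fact that the row added each round is $\grad_t^\top$ rather than a normalized version) do not distort the tail bound, after which substituting into the previous display produces the claimed $\tfrac{2\eta^2\sigma^2 m\Omega_q}{m-q}$ error term and completes the proof.
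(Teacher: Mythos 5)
Your proposal is correct and follows essentially the same route as the paper, which for this lemma simply defers to the sketched-ONS analysis of Theorem~3 of \citet{luo2017arxiv} as adapted in Chapter~4 of \citet{deswarte2018linear}: the EWA/ONS bound run with the sketched covariance (where the mismatch between the algorithm's regularizer and the surrogate's curvature at truncation rounds only produces a negative, droppable term --- worth stating explicitly), a log-det telescope whose per-truncation correction is bounded by $2\eta^2\sigma^2\tr(\Delta_t)$ using $\approxSigma \preceq \sigma^2 \I$, and the Frequent Directions cumulative-shrinkage guarantee of \citet{GhashamiEtAl2016}. Two details remain to be pinned down when you write it out: the sketch only processes $\grad_{a^\eta},\ldots,\grad_T$, so FD gives the eigenvalue tail of $\sum_{t=a^\eta}^T \grad_t\grad_t^\top$ and you need Weyl's inequality to replace it by $\Omega_q$ (a step the paper flags explicitly); and each truncation shrinks a sketch with up to $2m$ nonzero rows, so $\tr(\Delta_t)$ can be as large as $2m\,\delta_t$ with $\sum_t \delta_t \le \Omega_q/(m-q)$, and you must check that the precise FD bookkeeping (as in \citealt{luo2017arxiv}) still lands on the stated constant $m\Omega_q/(m-q)$ rather than $2m\Omega_q/(m-q)$.
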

Compared to Lemma~\ref{lem:surrogateregret}, we see that
$\sum_{t=a^\eta}^T \grad_t \grad_t^\top = (\G_T^\eta)^\top \G_T^\eta$ in the logarithmic term has been
replaced by its sketching approximation $(\S^\eta_T)^\top \S^\eta_T$. We therefore
pay logarithmically for the top $m$ directions, which are captured by
the sketch. What we lose is the rightmost term of order $O(\eta^2
\Omega_q)$, which corresponds to the remaining $d-q$ directions that are
not captured.

The proof of Lemma~\ref{lem:FreqD} is a straightforward adaptation of
the proof of Theorem~3 by \citet{luo2017arxiv}. For the details, we
refer to Chapter~4 of \citet{deswarte2018linear}, with three minor
remarks: the first is that Deswarte imposes a slightly stricter upper bound on
$\eta$, which allows him to bound $\frac{1}{2} \del*{1+2 \eta
\tuple*{\w_t-\w_t^\eta, \grad_t}}^2 \le 1$, whereas we get an upper
bound of~$2$ from \eqref{eqn:thefactor} and therefore obtain a final
result that is a factor of $2$ larger. The second remark is that our
$\eta$-expert algorithm is started in round $a^\eta$ instead of
round~$1$, leading to a bound involving $\Omega_q^\eta = \sum_{i=q+1}^d
\lambda_i^\eta$, where $\lambda_i^\eta$ is the $i$-th eigenvalue of
$\sum_{t=a^\eta}^T \grad_t \grad_t^\top$. For simplicity, we immediately use Weyl's inequality to bound $\Omega_q^\eta
\leq \Omega_q$, because the difference is minor. Finally, we have
described the fast version of FD sketching, which corresponds to
Algorithm~6 of \citet{luo2017arxiv} instead of the simpler slow version
in their Algorithm~2. They and Deswarte consider the slow version in
their analysis, but this makes no difference for the proof because the
fast algorithm satisfies the same guarantees \citep{GhashamiEtAl2016}.
Analogously with Theorem~\ref{thm:untuned.regret}, we find:
\begin{theorem}[Sketching Grid Point Regret]\label{thm:untuned.sketchregret} Let $\eta
\in \mathcal G$ be such that $\eta \le \frac{1}{2 B_T}$. Then
\metagradsketch{} with sketch size parameter $m$ guarantees that the
linearized regret w.r.t.\ any comparator $\u \in \bigcap_{t=1}^T \U_t$
is at most
  \begin{align*}
    \Rtrick_T^\u
    &~\le~
    \eta V_T^\u
    + \frac{2 \eta \sigma^2 m \Omega_q}{m-q}
      + 2 B_T
    \\
    &
      \qquad + \frac{
      \ln \det \del*{\I + 2 \eta^2 \sigma^2 (\S^\eta_T)^\top \S^\eta_T}
      + \frac{1}{2 \sigma^2} \norm{\u}^2_2
      + 2 \ln \ceil*{2 \origlog_2 Q_T}_+
      + \frac{1}{2}
      }{\eta}
  \end{align*}
  for any $q = 0,\ldots, m-1$. Recall that $Q_T$ is defined in \eqref{eq:covm.and.horz}.
\end{theorem}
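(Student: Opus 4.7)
The plan is to follow exactly the same two-step recipe used to establish Theorem~\ref{thm:untuned.regret}, simply swapping the full-matrix $\eta$-expert surrogate regret bound (Lemma~\ref{lem:surrogateregret}) for the sketched $\eta$-expert bound (Lemma~\ref{lem:FreqD}). The controller is unchanged and the condition $\eta \le \frac{1}{2 B_T}$ (which in particular implies $\eta$ is not expired in the sense of Definition~\ref{def:expired}) is exactly what is required to apply both Lemma~\ref{lem:controllerbd} and Lemma~\ref{lem:FreqD}.

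First, I would chain the two bounds. Lemma~\ref{lem:controllerbd} gives
\[
\sum_{t=1}^T \del*{\surr_t^\eta(\w_t) - \surr_t^\eta(\w_t^\eta)}
~\le~ \tfrac{1}{2} + 2\eta B_T + 2 \ln \ceil*{2 \origlog_2 Q_T}_+
+ \sum_{t=a^\eta}^T \del*{\surr_t^\eta(\w_t^\eta) - \surr_t^\eta(\w_t^\eta)},
\]
and adding to Lemma~\ref{lem:FreqD} (applied to any $q \in \{0,\ldots,m-1\}$) and the controller's $\eta$-expert surrogate bound yields, for every comparator $\u \in \bigcap_{t=1}^T \U_t$,
\[
\sum_{t=1}^T \del*{\surr_t^\eta(\w_t) - \surr_t^\eta(\u)}
~\le~ \tfrac{1}{2} + 2 \eta B_T + 2 \ln \ceil*{2 \origlog_2 Q_T}_+ + \tfrac{1}{2\sigma^2}\|\u\|_2^2 + \ln \det \del*{\I + 2 \eta^2 \sigma^2 (\S^\eta_T)^\top \S^\eta_T} + \frac{2 \eta^2 \sigma^2 m \Omega_q}{m-q}.
\]

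Next, I would use the definition \eqref{eq:surrogate} of the surrogate loss to rewrite the left-hand side in terms of the linearized regret. Since
\[
\surr_t^\eta(\w_t) - \surr_t^\eta(\u)
~=~ \eta (\w_t - \u)^\top \grad_t - \big(\eta (\u - \w_t)^\top \grad_t\big)^2,
\]
summing over $t$ gives $\eta \Rtrick_T^\u - \eta^2 V_T^\u$ on the left-hand side. Rearranging, moving $\eta^2 V_T^\u$ to the right-hand side, and dividing through by $\eta > 0$ produces precisely the claimed bound, with the extra sketching term $\frac{2 \eta \sigma^2 m \Omega_q}{m-q}$ arising from dividing $\frac{2\eta^2 \sigma^2 m \Omega_q}{m-q}$ by $\eta$, and with the log-determinant now evaluated at the sketched matrix $(\S^\eta_T)^\top \S^\eta_T$ rather than $\F_T$.

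I do not anticipate any real obstacle: the argument is essentially mechanical, because Lemma~\ref{lem:FreqD} has been tailored to plug into this composition in the same shape as Lemma~\ref{lem:surrogateregret}. The only point that merits care is verifying that the hypothesis $\eta \le \frac{1}{2 B_T}$ is indeed strong enough to invoke Lemma~\ref{lem:FreqD} (so that the factor $\tfrac{1}{2}(1 + 2\eta \tuple*{\w_t - \w_t^\eta, \grad_t})^2 \le 2$ used inside that lemma is controlled), and checking that the freedom to choose $q \in \{0,\ldots,m-1\}$ inside Lemma~\ref{lem:FreqD} is preserved by the composition, so that the final bound holds simultaneously for all such $q$.
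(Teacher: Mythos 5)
Your proposal is correct and follows exactly the route the paper intends: Theorem~\ref{thm:untuned.sketchregret} is obtained by the same composition as Theorem~\ref{thm:untuned.regret}, i.e.\ adding the controller bound of Lemma~\ref{lem:controllerbd} to the sketched $\eta$-expert bound of Lemma~\ref{lem:FreqD}, rewriting $\surr_t^\eta(\w_t)-\surr_t^\eta(\u)=\eta(\w_t-\u)^\top\grad_t-\big(\eta(\u-\w_t)^\top\grad_t\big)^2$, and dividing by $\eta$, with the choice of $q$ carried through. The only blemish is a typo in your first display, whose restatement of Lemma~\ref{lem:controllerbd} ends in the trivially zero sum $\sum_{t=a^\eta}^T\big(\surr_t^\eta(\w_t^\eta)-\surr_t^\eta(\w_t^\eta)\big)$ instead of the expert's surrogate regret against $\u$; your subsequent combined inequality is nevertheless the correct one.
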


As shown in Appendix~\ref{appx:composition}, optimizing $\eta$ and bounding $(\S^\eta_T)^\top \S^\eta_T$ appropriately leads to
the following final result:
\begin{theorem}[MetaGrad Sketching Regret Bound]\label{thm:mainsketchbound}
  For all $\u \in \bigcap_{t=1}^T \U_t$ the linearized regret of
  \metagradsketch{} with sketch size parameter $m$ is simultaneously
  bounded by
  \begin{align*}
    \Rtrick_T^\u
    ~&\le~
    \frac{5}{2} \sqrt{\del[\Big]{V_T^\u + \frac{2 \sigma^2 m \Omega_q}{m-q}}
    \del[\Big]{\tfrac{1}{2 \sigma^2} \norm{\u}^2_2 + Z_T}} + 5
    B_T (\tfrac{1}{2 \sigma^2} \norm{\u}^2_2 + Z_T) + 2 B_T,
  \intertext{where 
    $Z_T =
    2m \ln \del*{1 + \frac{\sigma^2 \sum_{t=1}^T \|\grad_t\|_2^2}{4
    B_T^2 m}}
    + 2 \ln \ceil*{2 \origlog_2 T}_+
    + \frac{1}{2} = O(m \ln T)$,
  and by}
    \Rtrick_T^\u
    ~&\le~
    \frac{5}{2} \sqrt{\Big(V_T^\u
    + 2 \sigma^2 \sum_{t=1}^T \|\grad_t\|_2^2
    + \frac{2 \sigma^2 m \Omega_q}{m-q}
    \Big)
    \Big(
    \tfrac{1}{2 \sigma^2} \norm{\u}^2_2
    + Z'_T
    \Big)
    }\\
    &\quad + 5 B_T \Big(
    \tfrac{1}{2 \sigma^2} \norm{\u}^2_2
    + Z'_T
    \Big)
    + 2 B_T
  \end{align*}
  for any $q = 0,\ldots, m-1$, where $Z'_T = 2 \ln \ceil*{2 \origlog_2 T}_+
  + \frac{1}{2}$.
\end{theorem}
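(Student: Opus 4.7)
The plan is to adapt the composition argument used for Theorem~\ref{thm:mainbound} to the sketching case, starting from the per-$\eta$ guarantee in Theorem~\ref{thm:untuned.sketchregret} and then optimizing $\eta$ over the dyadic grid $\mathcal G$. Writing
\[
    \Rtrick_T^\u ~\le~ \eta \tilde V + \frac{L(\eta) + K_\u}{\eta} + 2 B_T,
\]
where $\tilde V = V_T^\u + \frac{2\sigma^2 m \Omega_q}{m-q}$, $K_\u = \frac{1}{2\sigma^2}\|\u\|_2^2 + 2 \ln \ceil*{2\origlog_2 Q_T}_+ + \frac{1}{2}$, and $L(\eta) = \ln\det(\I + 2\eta^2 \sigma^2 (\S^\eta_T)^\top \S^\eta_T)$, the whole task reduces to (i) replacing the data-dependent $L(\eta)$ by an explicit upper bound and (ii) performing the grid tuning of $\eta$ subject to the feasibility constraint $\eta \le \frac{1}{2 B_T}$.

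For step (i) I would use the fact that $(\S_T^\eta)^\top \S_T^\eta$ has rank at most $2m$, together with the Frequent Directions guarantee $\|\S_T^\eta\|_F^2 \le \|\G_T^\eta\|_F^2 \le \sum_{t=1}^T \|\grad_t\|_2^2$. The first bound on $L(\eta)$ comes from the arithmetic--geometric mean inequality applied to the (at most $2m$) nonzero eigenvalues, giving
\[
    L(\eta) ~\le~ 2m \ln\del*{1 + \frac{\eta^2 \sigma^2 \sum_{t=1}^T \|\grad_t\|_2^2}{m}},
\]
which, combined with $\eta \le \frac{1}{2 B_T}$ and $Q_T \le T$, yields exactly the $Z_T$ of the first stated bound. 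The second bound on $L(\eta)$ comes from the elementary inequality $\ln\det(\I+M) \le \tr(M)$, giving $L(\eta) \le 2\eta^2 \sigma^2 \sum_t \|\grad_t\|_2^2$; moving this term out of the $\frac{1}{\eta}$ factor turns it into a contribution $2\eta \sigma^2 \sum_t \|\grad_t\|_2^2$ that can be absorbed into the coefficient of $\eta$, producing the $Z'_T$ version.

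For step (ii), both upper bounds have the schematic form $\Rtrick_T^\u \le \eta A + \frac{C}{\eta} + 2B_T$ with $A$ containing $\tilde V$ (plus possibly the $2\sigma^2\sum \|\grad_t\|_2^2$ term for the second bound) and $C = K_\u + Z_T$ (or $K_\u + Z'_T$). The unconstrained optimum is $\eta^\star = \sqrt{C/A}$. If $\eta^\star \le \frac{1}{2 B_T}$, I pick the largest grid point $\eta \in \mathcal G$ below $\eta^\star$, losing at most a factor of $2$ in each of the two terms and obtaining the $\frac{5}{2}\sqrt{AC}$ contribution (after combining $2\sqrt{AC}$ with the grid slack). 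If $\eta^\star > \frac{1}{2 B_T}$, I pick $\eta = \frac{1}{2 B_T}$ (which lies in $\mathcal G$ up to a factor of two); because the unconstrained minimum is not achievable, I bound $A\eta + C/\eta \le 2C/\eta = 4 B_T C$, producing the $5 B_T (\tfrac{1}{2\sigma^2}\|\u\|_2^2 + Z_T)$ additive overhead. The simultaneous form of the bound then follows because Theorem~\ref{thm:untuned.sketchregret} already holds for every $\eta \in \mathcal G$ simultaneously, so neither $\u$ nor $q$ needs to be fixed in advance.

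The main obstacle I expect is bookkeeping rather than conceptual: getting the precise constants $\frac52$ and $5$ requires careful treatment of the grid discretization loss (factor~$2$ between consecutive grid points) and of the boundary case where $\eta^\star$ violates $\eta \le \frac{1}{2 B_T}$. A secondary subtlety is ensuring the $\Omega_q$ term is treated correctly when it is bundled into the coefficient $A$; one has to confirm that the $\eta \cdot \frac{2\sigma^2 m \Omega_q}{m-q}$ term in Theorem~\ref{thm:untuned.sketchregret} always appears with a positive coefficient so that it is legitimately absorbed into $\tilde V$ before the square-root trade-off, which it is by construction. Once these two technicalities are handled, the argument is essentially a parallel of the proof of Theorem~\ref{thm:mainbound} already referenced in Appendix~\ref{appx:composition}.
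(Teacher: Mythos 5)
Your proposal is correct and follows essentially the same route as the paper, which proves this theorem by feeding Theorem~\ref{thm:untuned.sketchregret} (with $V = V_T^\u + \tfrac{2\sigma^2 m \Omega_q}{m-q}$, $\rk((\S_T^\eta)^\top\S_T^\eta)\le 2m$, $\tr((\S_T^\eta)^\top\S_T^\eta)\le\sum_t\|\grad_t\|_2^2$ and $Q_T\le T$) into an abstract composition result whose proof is exactly your two ingredients: the rank/trace log-det bounds of Lemma~\ref{lem:matrixinequalities} and the constrained grid optimization of Lemma~\ref{lem:optimize.eta}. The only detail to tighten is the boundary case $\eta^\star > \tfrac{1}{2B_T}$, where the worst admissible grid point is near $\tfrac{1}{4B_T}$ rather than $\tfrac{1}{2B_T}$, so one bounds $\tfrac{A}{4B_T} + 4B_T C < 5B_T C$ using $A < 4B_T^2 C$ instead of the cruder $2C/\eta$ step you sketched—precisely the bookkeeping you flagged.
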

Compared to Theorem~\ref{thm:mainbound},
we see the additional term involving~$\Omega_q$, which corresponds to
the directions not captured by the sketch. We also see that $\rk(\F_T)
\leq d$ got replaced by $2m$ in the definition of $Z_T$. This comes from
the analogous upper bound $\rk((\S^\eta_T)^\top \S^\eta_T) \leq 2m$.

\subsection{Coordinate MetaGrad: Analysis}
\label{sec:metagradcoordanalysis}

The analysis of the coordinate version of MetaGrad, which we call 
\metagraddiag{}, is straightforward as we can simply apply the regret
bound of \metagradfull{} to each dimension and add up the bounds:
\begin{theorem}\label{th:diag}
Let $V_{T, i}^{u_i} = \sum_{t=1}^T (u_i - w_{t, i})^2 g_{t, i}^2$. For any $\u \in \bigcap_{t=1}^T
  \U_t^\textnormal{rect}$, the linearized regret of \metagraddiag{} is simultaneously bounded by 
\begin{equation}\label{eqn:coordV}
\Rtrick_T^\u \leq \sum_{i = 1}^d  \Big\{\frac{5}{2} \sqrt{V_{T,
i}^{u_i}(\tfrac{1}{2 \sigma^2} u_i^2 + Z_{T, i})} + 5 B_{T, i} (\tfrac{1}{2
\sigma^2} u_i^2+ Z_{T, i}) + 2 B_{T, i}\Big\},
\end{equation}
where 
    $Z_{T, i} ~=~
    \ln \del*{1 + \frac{\sigma^2\sum_{t=1}^T g_{t, i}^2}{8 B_{T, i}^2}}
    + 2 \ln \ceil*{2 \origlog_2 T}
    + \frac{1}{2}$,
  and by
\begin{equation}\label{eqn:coordhighdim}
    \Rtrick_T^\u
    ~\le~
     \sum_{i = 1}^d \Big\{ \frac{5}{2} \sqrt{\Big(V_{T, i}^{u_i}
    + 2 \sigma^2 \sum_{t=1}^T g_{t, i}^2
    \Big)
    \Big(
    \tfrac{1}{2 \sigma^2} u_i^2
    + Z'_T
    \Big)
    } + 5 B_{T, i} \Big(
    \tfrac{1}{2 \sigma^2} u_i^2
    + Z'_T
    \Big)
    + 2 B_{T, i}\Big\},
\end{equation}
where $Z'_T = 2 \ln \ceil*{2 \origlog_2 T} + \frac{1}{2}$.
\end{theorem}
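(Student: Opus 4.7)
The plan is to reduce to Theorem~\ref{thm:mainbound} applied once per coordinate, then sum the resulting bounds. Recall that \metagraddiag{} runs $d$ independent copies of \metagradfull{}: the $i$-th copy optimizes the one-dimensional loss $f_{t,i}(w) = w g_{t,i}$ over the interval $[-D_{t,i}, D_{t,i}]$, using $g_{t,i}$ (the $i$-th entry of $\grad_t = \nabla f_t(\w_t)$) as its gradient feedback.

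First, I would decompose the linearized regret across coordinates,
\[
\Rtrick_T^\u
~=~ \sum_{t=1}^T (\w_t-\u)^\top \grad_t
~=~ \sum_{i=1}^d \sum_{t=1}^T (w_{t,i}-u_i) g_{t,i}
~=~ \sum_{i=1}^d \Rtrick_{T,i}^{u_i},
\]
where $\Rtrick_{T,i}^{u_i}$ is the linearized regret of the $i$-th 1D instance. The rectangular form of $\U_t^{\text{rect}}$ is what makes this useful: the constraint $\u \in \bigcap_t \U_t^{\text{rect}}$ is exactly the conjunction of the per-coordinate constraints $u_i \in \bigcap_t [-D_{t,i}, D_{t,i}]$, so the $d$ instances act on genuinely decoupled problems and any comparator feasible in the joint problem is simultaneously feasible in every coordinate subproblem.

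Next I would invoke Theorem~\ref{thm:mainbound} on each one-dimensional instance. In one dimension, $\F_{T,i} = \sum_t g_{t,i}^2$ is a scalar, so $\rk(\F_{T,i}) \leq 1$; the comparator norm $\|\u\|_2$ becomes $|u_i|$; and $V_{T,i}^{u_i} = \sum_t (u_i - w_{t,i})^2 g_{t,i}^2$, $B_{T,i}$, and $\sum_t g_{t,i}^2$ take the roles of $V_T^\u$, $B_T$, and $\sum_t \|\grad_t\|_2^2$, respectively. Substituting into the two simultaneous inequalities of Theorem~\ref{thm:mainbound} produces one-dimensional bounds of exactly the form appearing inside the sums of \eqref{eqn:coordV} and \eqref{eqn:coordhighdim}, with $Z_{T,i}$ and $Z'_T$ arising by direct substitution (the exact constant inside the logarithm defining $Z_{T,i}$ follows by plugging $\rk=1$ into the main theorem, possibly after an absolute-constant loosening). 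Summing over $i=1,\ldots,d$ produces both claimed bounds, and their simultaneity is preserved because each per-coordinate copy of \metagradfull{} itself satisfies both of its guarantees at once.

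The main thing to check carefully is that Theorem~\ref{thm:mainbound} really does apply when the 1D gradient stream $(g_{t,i})_t$ fed to the $i$-th instance is not the gradient of its own 1D loss at its own prediction $w_{t,i}$, but the $i$-th entry of $\nabla f_t$ evaluated at the \emph{joint} point $\w_t$. This is not a genuine obstacle: the proof of Theorem~\ref{thm:mainbound} is indifferent to how the gradient sequence is produced -- it only uses the stream that the controller observes and shares with its $\eta$-experts -- and the coordinate construction simply exposes a legitimate (adversarial) online gradient sequence to each instance. With this in hand, the theorem follows by mechanically aggregating the 1D bounds.
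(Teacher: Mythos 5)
Your proposal is correct and is essentially the paper's own proof: the paper decomposes the linearized regret coordinatewise exactly as you do and then applies Theorem~\ref{thm:mainbound} to each one-dimensional copy (with $\rk(\F_{T,i}) \le 1$, comparator norm $|u_i|$, range $B_{T,i}$, and $\sum_t g_{t,i}^2$ in place of $\sum_t \|\grad_t\|_2^2$), summing over $i$; your worry about the gradient stream is moot because each coordinate loss $f_{t,i}(w) = w\, g_{t,i}$ is linear, so $g_{t,i}$ is its gradient at any point, including $w_{t,i}$. The only caveat is the constant inside the logarithm of $Z_{T,i}$: direct substitution of $\rk = 1$ and $\eta \le 1/(2B_{T,i})$ yields $\ln\big(1 + \sigma^2 \sum_t g_{t,i}^2/(2 B_{T,i}^2)\big)$ rather than the stated denominator $8 B_{T,i}^2$, a discrepancy that already exists between Theorem~\ref{thm:mainbound} and the statement of Theorem~\ref{th:diag} in the paper itself, so your ``absolute-constant'' hedge is exactly the right note.
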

The recommended tuning is to set $\sigma$ to (an upper bound on)
$\|\u\|_\infty$. For this case, we obtain the following corollary, which
is proved in Appendix~\ref{app:fulldiagcorollaries}:
\begin{corollary}\label{cor:coordroughthm}
Suppose the domain is a fixed rectangle: $\U_t = \U^\textnormal{rect}$,
and we tune $\sigma=D_\infty:=\max_{\u \in \U^\textnormal{rect}}
\|\u\|_\infty$ based on the size of the domain. Let $g_{1:T,i} :=
(g_{i,1},\ldots,g_{i,T})$. Then the linearized regret of \metagraddiag{}
with respect to any $\u \in \U^\textnormal{rect}$ is bounded by
\begin{equation}\label{eqn:roughcoordbound}
\Rtrick_T^\u
~=~
O\del*{
  \sum_{i=1}^d
\sqrt{
  V_{T,i}^{u_i}\,
  \ln(D_\infty G_\infty T)
}
+ D_\infty G_\infty d \ln(D_\infty G_\infty T)
},
\end{equation}
provided that $\|\grad_t\|_\infty \leq G_\infty$ for all $t$, and it is
simultaneously bounded by
\begin{equation}\label{eqn:roughhighdimcoordbound}
\Rtrick_T^\u
~=~
O\del*{
  D_\infty \sum_{i=1}^d \|g_{1:T,i}\|_2 \sqrt{\log \log T} 
  + D_\infty G_2 \sqrt{d} \log \log T
}
~=~
O\del*{
  D_\infty G_2 \sqrt{d T \log \log T}
},
\end{equation}
provided that $\|\grad_t\|_2 \leq G_2$ for all $t$.
\end{corollary}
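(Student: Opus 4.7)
The plan is to apply the two bounds of Theorem~\ref{th:diag} coordinate by coordinate with the recommended tuning $\sigma = D_\infty$, use the rectangular-domain and coordinatewise Lipschitz assumptions to simplify each term, and then sum over $i = 1, \ldots, d$. Several per-coordinate simplifications hold uniformly. Since $\u, \w_t \in \U^{\text{rect}}$ we have $|u_i|, |w_{t,i}| \leq D_\infty$, so $\tfrac{1}{2\sigma^2} u_i^2 \leq \tfrac{1}{2}$; and the per-round range bound $b_{t,i} = (D_\infty + |w_{t,i}|)|g_{t,i}|$ lies between $D_\infty |g_{t,i}|$ and $2 D_\infty G_\infty$, which gives $B_{T,i} \leq 2 D_\infty G_\infty$ and (from the lower bound) $|g_{s,i}| \leq B_{T,i}/D_\infty$ for every $s$.

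For the first bound \eqref{eqn:roughcoordbound}, the inequality $|g_{s,i}| \leq B_{T,i}/D_\infty$ controls the logarithmic factor $Z_{T,i}$ via $\frac{\sigma^2 \sum_t g_{t,i}^2}{8 B_{T,i}^2} \leq T/8$, so that $Z_{T,i} = O(\log T) \subseteq O(\log(D_\infty G_\infty T))$. Plugging these estimates into \eqref{eqn:coordV} yields, for each $i$, a contribution of $O\bigl(\sqrt{V_{T,i}^{u_i} \log(D_\infty G_\infty T)} + D_\infty G_\infty \log(D_\infty G_\infty T)\bigr)$, and summation over $i$ gives \eqref{eqn:roughcoordbound}.

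For the high-dimensional bound \eqref{eqn:roughhighdimcoordbound}, I would substitute $\sigma = D_\infty$ into \eqref{eqn:coordhighdim} and use $V_{T,i}^{u_i} \leq 4 D_\infty^2 \|g_{1:T,i}\|_2^2$ to see that the square-root term for coordinate $i$ is $O(D_\infty \|g_{1:T,i}\|_2 \sqrt{Z'_T}) = O(D_\infty \|g_{1:T,i}\|_2 \sqrt{\log\log T})$. The remaining $B_{T,i}(\tfrac12 + Z'_T) + 2 B_{T,i}$ pieces contribute, after summation over $i$ and absorbing $B_{T,i} \leq 2 D_\infty G_2$, a lower-order term bounded by $O(D_\infty G_2 \sqrt{d}\,\log\log T)$, giving the first form in \eqref{eqn:roughhighdimcoordbound}. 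The final equality then follows from Cauchy--Schwarz applied to the main term: $\sum_{i=1}^d \|g_{1:T,i}\|_2 \leq \sqrt{d\,\sum_{i=1}^d \|g_{1:T,i}\|_2^2} = \sqrt{d\,\sum_{t=1}^T \|\grad_t\|_2^2} \leq G_2 \sqrt{dT}$, after which the lower-order term is dominated.

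No step presents a genuine obstacle; the only mild subtlety is the lower bound $b_{t,i} \geq D_\infty |g_{t,i}|$, which prevents $B_{T,i}$ from appearing in the denominator of the logarithm in an uncontrolled way and keeps $Z_{T,i} = O(\log T)$. Everything else reduces to substituting the hypotheses into Theorem~\ref{th:diag} and applying Cauchy--Schwarz once at the end.
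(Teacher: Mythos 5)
Your overall strategy is the paper's: instantiate Theorem~\ref{th:diag} with $\sigma=D_\infty$, simplify each coordinate, sum over $i$, and finish with Cauchy--Schwarz/Jensen. For \eqref{eqn:roughcoordbound}, though, you control $Z_{T,i}$ by a different device than the paper: the lower bound $b_{t,i}\ge D_\infty|g_{t,i}|$, which caps the argument of the logarithm at $T/8$. The paper instead reuses the one-dimensional specialization of the corner-case computation in the proof of Corollary~\ref{cor:roughthm}, which needs only the upper bounds $B_{T,i}\le 2D_\infty G_\infty$ and $V_{T,i}^{u_i}\le B_{T,i}^2 T$ together with $x\log x\ge -1/e$, and in particular never needs a lower bound on $b_{t,i}$. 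Your shortcut is clean, but it silently assumes the rectangle has half-width $D_\infty$ in every coordinate: the corollary only assumes a fixed rectangle, so in coordinate $i$ one has $b_{t,i}=(D_{t,i}+|w_{t,i}|)|g_{t,i}|\ge D_{t,i}|g_{t,i}|$ with possibly $D_{t,i}\ll D_\infty$, in which case your bound on the logarithm becomes $\ln\bigl(1+D_\infty^2 T/(8D_{t,i}^2)\bigr)$, which is not $O(\ln(D_\infty G_\infty T))$; to cover general rectangles you would have to add back exactly the $B_{T,i}\log(1/B_{T,i})$-type corner-case handling that the paper's route provides for free.

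The more substantive problem is the lower-order term of \eqref{eqn:roughhighdimcoordbound}. From your own chain, bounding each $B_{T,i}\le 2D_\infty G_2$ and summing the pieces $5B_{T,i}(\tfrac12+Z'_T)+2B_{T,i}$ over $i=1,\dots,d$ gives $O(D_\infty G_2\, d\log\log T)$, and the asserted $O(D_\infty G_2\sqrt d\log\log T)$ is a non sequitur, since $d$ is not $O(\sqrt d)$. No sharpening of the per-coordinate constant can fix this inside your argument: if each $\grad_t$ is supported on a single coordinate with magnitude $G_2$, cycling through the coordinates, then $\sum_{i=1}^d B_{T,i}=\Theta(d\,D_\infty G_2)$, so the $\sqrt d$ cannot be extracted from \eqref{eqn:coordhighdim} by these estimates alone. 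To be fair, the paper's own treatment of this step is equally terse (it invokes $B_{T,i}\le 2D_\infty\max_t\|\grad_t\|_1\le 2D_\infty G_2\sqrt d$ per coordinate, which, summed over $i$, is even weaker), but as written your derivation establishes the first display of \eqref{eqn:roughhighdimcoordbound} only with $d\log\log T$ in place of $\sqrt d\log\log T$. The final form $O(D_\infty G_2\sqrt{dT\log\log T})$ survives whenever $d\log\log T\le T$, and your concluding Cauchy--Schwarz step for the main term is exactly the paper's Jensen argument.
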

The first result of the corollary, \eqref{eqn:roughcoordbound}, is
sufficient to obtain fast rates under a coordinate version of the
Bernstein condition, which is discussed below.
The second result, \eqref{eqn:roughhighdimcoordbound}, shows that we
simultaneously recover the
regret bound of order $\tilde O\del*{\sum_{i=1}^d D_\infty \sum_{i=1}^d
\|g_{1:T,i}\|_2}$ that is the main feature of the diagonal version of
AdaGrad. As pointed out by \citet{adagrad}, the norms $\|g_{1:T,i}\|_2$
can be significantly smaller than $T$ when the gradients are sparse, and
the dependence on $D_\infty$ is appropriate when the optimal parameters
$\u$ form a dense vector. When the gradients are not sparse the bound
degrades to $\tilde O\del*{D_\infty \sqrt{dT}}$, which is optimal over
rectangular domains under an $\ell_2$ or even $\ell_1$ bound on the gradients:
if we encounter $T/d$ axis-aligned gradients per dimension, then each
dimension can contribute $\Omega(\sqrt{T/d})$ to the regret, which gives
$\Omega(\sqrt{dT})$ regret in total.

\subsubsection{Open Problem: Restricted Domains}

Most online learning methods can deal with arbitrary convex domains
using projections, but we have presented Coordinate MetaGrad only for
rectangular domains. Can it be extended to other domains, preferably
without incurring any significant computational overhead? One approach
we have tried is to apply the black-box reduction of
\citet{CutkoskyOrabona2018}, which would run \metagraddiag{} with fake
gradients $\tilde \grad_t$ to obtain iterates $\tilde{\w}_t$ from a
rectangular domain, which are then projected onto the true domain $\U_t$
to obtain final iterates $\w_t$. Formally, this reduction goes through,
but it leads to a regret bound in which the terms $V_{T,i}^{u_i}$ are
replaced by unsatisfactory surrogates $\tilde V_{T,i}^{u_i}$ that are
measured in terms of the fake gradients $\tilde g_{t,i}$ and the wrong,
unprojected parameters $\tilde{w}_{t,i}$. This can partially be remedied,
because the reduction guarantees that
\begin{equation}\label{eqn:faketorealgrad}
  \|\tilde \grad_t\|_* \leq \|\grad_t\|_*,
\end{equation}
where $\|\cdot\|_*$ is the dual norm of the norm $\|\cdot\|$ that is
used to project $\tilde{\w}_t$ onto the domain. If $f_t(\w) = h_t(\w^\top
\x_t)$ for a convex function $h_t$ and $\x_t \in \reals^d$ is available
before we choose $\w_t$, then $\nabla f_t(\w) = h'_t(\w^\top \x_t) \x_t$
and we can project with the norm $\|\w\|_{\x_t} = \sum_{i=1}^d |x_{t,i}|
|w_i|$, which leads to the dual norm $\|\grad\|_{\x_t,*} = \max_i
\frac{|g_i|}{|x_{t,i}|}$. Plugging this into \eqref{eqn:faketorealgrad}
and simplifying, we then find that 
\[
  |\tilde g_{t,i}| \leq |g_{t,i}|   \qquad \text{for $i=1,\ldots,d$,}
\]
which implies that
\[
  \tilde V_{T,i}^{u_i}
    = \sum_{t=1}^T (u_i - \tilde{w}_{t,i})^2 \tilde g_{t,i}^2
    \leq \sum_{t=1}^T (u_i - \tilde{w}_{t,i})^2 g_{t,i}^2.
\]
We can thus get rid of the dependence on the fake gradients, but the
dependence on the wrong iterates $\tilde{\w}_t$ remains, so in the end
the black-box reduction only gets us half of the way. This is
unsatisfying, because the wrong iterates $\tilde{\w}_t$ do not lead to
fast rates under the coordinate Bernstein condition described below. It is
an open problem whether there exists another (computationally efficient)
approach that fully preserves the original regret bounds from
Corollary~\ref{cor:coordroughthm} for non-rectangular domains, and
therefore does achieve these fast rates. In light of this open problem,
it is interesting to remark that the black-box reduction can be made to
work for MetaGrad Full, as exploited by
\citet[Theorem~10]{lipschitz.metagrad}.

\subsubsection{Coordinate Bernstein Condition}
\label{sec:coordBernstein}

Since the coordinate version of MetaGrad does not keep track of a full
covariance matrix $\Sigma_T$, we cannot expect it to exploit the
Bernstein condition in all cases. An appropriate modification is the
following \emph{coordinate $(B,\beta)$-Bernstein condition}:
\begin{equation}\label{eqn:coordbernstein}
 \sum_{i=1}^d (w_i - u_i^*)^2
\ex_f \sbr*{
  [\nabla f(\w)]_i^2
}
~\le~
B
\big((\w - \u^*)^\top  \ex_f \sbr*{\nabla f(\w)}\big)^\beta
\qquad \text{for all $\w \in \U$,}
\end{equation}
where we have again assumed that the domain $\U_t = \U$ does not vary
between rounds, and that the losses $f_t$ are independent, identically
distributed. The following theorem, proved in
Appendix~\ref{app:MoreFastRateExamplesAndProofs}, is analogous to
Theorem~\ref{thm:Bernstein}: it shows that, under the coordinate
Bernstein condition, the coordinate version of MetaGrad achieves fast
rates:
\begin{theorem}\label{thm:coordBernstein}
If the gradients satisfy the coordinate $(B,\beta)$-Bernstein condition for $B > 0$
and $\beta \in (0,1]$ with respect to $\u^* = \argmin_{\u \in \U} \E_{f
\sim \pr}[f(\u)]$, then any method with regret bound
\eqref{eqn:roughcoordbound} incurs expected regret
\[
  \E\sbr{R_T^{\u^*}} = O\del*{\del*{B d \ln T}^{1/(2-\beta)}
  T^{(1-\beta)/(2-\beta)} + d\ln T}.
\]
\end{theorem}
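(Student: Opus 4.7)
The proof plan closely parallels that of Theorem~\ref{thm:Bernstein}, with Cauchy--Schwarz added to convert the coordinatewise sum of square roots in \eqref{eqn:roughcoordbound} into a single square root, after which the coordinate Bernstein condition \eqref{eqn:coordbernstein} slots in the same way the usual Bernstein condition does in the full-matrix case.

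First I would invoke convexity to write $R_T^{\u^*} \le \Rtrick_T^{\u^*}$ and apply the bound \eqref{eqn:roughcoordbound} from Corollary~\ref{cor:coordroughthm}, yielding
\[
  \Rtrick_T^{\u^*} ~\le~ C \sum_{i=1}^d \sqrt{V_{T,i}^{u_i^*}\,\ln T} + C' d\ln T
\]
for constants $C,C'$ that absorb the $D_\infty G_\infty$ factors. Next, Cauchy--Schwarz on the coordinate sum gives $\sum_{i=1}^d \sqrt{V_{T,i}^{u_i^*}} \le \sqrt{d\sum_{i=1}^d V_{T,i}^{u_i^*}}$, so after taking expectations and applying Jensen's inequality to the concave $\sqrt{\cdot}$,
\[
  \E[\Rtrick_T^{\u^*}] ~\le~ C\sqrt{d\ln T\,\E\!\bigl[\textstyle\sum_{i=1}^d V_{T,i}^{u_i^*}\bigr]} + C'd\ln T .
\]

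The key step is then to translate the coordinate variance $\E[\sum_i V_{T,i}^{u_i^*}]$ into the expected linearized regret using the coordinate Bernstein condition. Since the losses $f_t$ are i.i.d.\ and $\w_t$ is measurable with respect to $f_1,\ldots,f_{t-1}$, conditioning on $\w_t$ and using $\E[\nabla f_t(\w_t)\mid \w_t] = \nabla F(\w_t)$ with $F(\w) := \E_f[f(\w)]$, the condition \eqref{eqn:coordbernstein} gives
\[
  \E\!\Bigl[\sum_{i=1}^d (u_i^* - w_{t,i})^2 g_{t,i}^2 \,\Big|\, \w_t\Bigr]
  ~\le~ B\bigl((\w_t - \u^*)^\top \nabla F(\w_t)\bigr)^\beta .
\]
Summing over $t$, taking expectations, and then applying Jensen's inequality to the concave map $x\mapsto x^\beta$ (since $\beta\in(0,1]$),
\[
  \E\!\Bigl[\sum_{i=1}^d V_{T,i}^{u_i^*}\Bigr]
  ~\le~ B\, T^{1-\beta}\Bigl(\sum_{t=1}^T \E[(\w_t-\u^*)^\top \nabla F(\w_t)]\Bigr)^{\!\beta}
  ~=~ B\,T^{1-\beta}\,\bigl(\E[\Rtrick_T^{\u^*}]\bigr)^{\beta},
\]
where the last equality uses the tower rule once more.

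Writing $R := \E[\Rtrick_T^{\u^*}]$ and substituting back yields the self-bounding inequality
\[
  R ~\le~ C\sqrt{Bd\ln T\cdot T^{1-\beta}}\,R^{\beta/2} + C'd\ln T .
\]
Solving this fixed-point inequality in the standard way (either $R \le 2C'd\ln T$, or $R^{1-\beta/2} \le 2C\sqrt{Bd\ln T\cdot T^{1-\beta}}$) gives
\[
  R ~=~ O\bigl((Bd\ln T)^{1/(2-\beta)}\,T^{(1-\beta)/(2-\beta)} + d\ln T\bigr),
\]
which is the claimed bound on $\E[R_T^{\u^*}]$. The only delicate point, and the one to double-check, is the measurability bookkeeping ensuring that the Bernstein condition can be applied conditionally on $\w_t$ and then aggregated via the tower property; everything else is routine Cauchy--Schwarz and Jensen manipulations mirroring the proof of Theorem~\ref{thm:Bernstein}.
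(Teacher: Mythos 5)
Your proposal is correct and takes essentially the same route as the paper: bound the coordinate sum via Cauchy--Schwarz/Jensen, apply the coordinate Bernstein condition conditionally on $\w_t$, and use Jensen in $\beta$ and over $t$, which reduces to the same inequality that the paper reaches as \eqref{eqn:bernsteinstart} in the proof of Theorem~\ref{thm:Bernstein} (and the measurability bookkeeping you flag works exactly as you describe, since $\w_t$ is independent of $f_t$). The only immaterial difference is the final step, where you solve the self-bounding inequality by a case distinction while the paper resolves it through repeated use of the variational identity \eqref{eqn:linearize}.
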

So when can we expect the coordinate Bernstein condition to hold? If the
covariances between the coordinates of the gradients are zero, then the
ordinary Bernstein condition reduces to the coordinate Bernstein
condition with the same $B$ and~$\beta$, but this is a very strong
assumption that seems of limited practical use. The following result
captures how this assumption may be significantly relaxed while still
obtaining the same dependence on~$\beta$, albeit at the cost of a worse
constant~$B$. It considers the case that the losses are of the form
$f_t(\w) = h_t(\w^\top \x_t)$ for a convex function $h_t$, with
$(\x_1,h_1),\ldots,(\x_T,h_T)$ independent, identically distributed. Let
$h_t'(z)$ denote the (sub)derivative of $h_t$ at $z$.
\begin{theorem}\label{thm:generaltocoordbernstein}
  Suppose that $0 < L_- \leq |h_t'(\w^\top \x)| \leq L_+$ for all $\w
  \in \U$ and that $\E[\x \x^\top] \succeq C \diag(\E[\x \x^\top])$ for
  some $C > 0$. Then
  \[
    \ex_f \sbr*{
      \nabla f(\w) \nabla f(\w)^\top
    }
    \succeq
      \frac{C L_-^2}{L_+^2}
      \ex_f \sbr*{
      \diag([\nabla f(\w)]_1^2, \ldots, [\nabla f(\w)]_d^2)
    }
  \]
  for all $\w \in \U$, and consequently the $(B,\beta)$-Bernstein
  condition \eqref{eqn:bernstein} implies the coordinate
  $(B',\beta)$-Bernstein condition \eqref{eqn:coordbernstein} with a
  constant $B' = \frac{L_+^2}{C L_-^2} B$ instead of $B$.
\end{theorem}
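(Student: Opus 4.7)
The plan is to reduce everything to a single PSD comparison between $\E[\nabla f(\w) \nabla f(\w)^\top]$ and $\E[\diag([\nabla f(\w)]_1^2,\ldots,[\nabla f(\w)]_d^2)]$, after which the Bernstein implication is a one-line sandwich.

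First I would exploit the chain rule: since $f(\w) = h(\w^\top \x)$, we have $\nabla f(\w) = h'(\w^\top \x) \x$, so
\[
  \nabla f(\w) \nabla f(\w)^\top = r(\w,\x) \, \x \x^\top,
  \qquad
  \diag([\nabla f(\w)]_i^2) = r(\w,\x)\,\diag(\x\x^\top),
\]
where $r(\w,\x) \df h'(\w^\top \x)^2 \in [L_-^2, L_+^2]$. Both expectations can thus be written as integrals of a positive scalar weight times a PSD matrix.

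Next I would prove the PSD inequality by sandwiching the scalar $r$ with its uniform bounds, applying the hypothesis $\E[\x\x^\top] \succeq C\,\diag(\E[\x\x^\top])$ in the middle. Concretely, since $r \geq L_-^2$ pointwise, $\E[r\,\x\x^\top] \succeq L_-^2\,\E[\x\x^\top]$, which by assumption is $\succeq L_-^2 C\,\diag(\E[\x\x^\top]) = L_-^2 C\,\E[\diag(\x\x^\top)]$. Since $r \leq L_+^2$ pointwise, $\E[r\,\diag(\x\x^\top)] \preceq L_+^2\,\E[\diag(\x\x^\top)]$, so multiplying by $\tfrac{C L_-^2}{L_+^2}$ gives the chain
\[
  \E\bigl[\nabla f(\w)\nabla f(\w)^\top\bigr]
  \;\succeq\; L_-^2 C\,\E[\diag(\x\x^\top)]
  \;\succeq\; \frac{C L_-^2}{L_+^2}\,\E\bigl[\diag([\nabla f(\w)]_i^2)\bigr],
\]
which is the claimed PSD inequality.

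Finally, for the Bernstein implication I would sandwich both sides with the vector $(\w - \u^*)$. The left-hand quadratic form is exactly what appears in the general Bernstein condition \eqref{eqn:bernstein}, while the right-hand quadratic form equals $\sum_i (w_i - u_i^*)^2 \E[[\nabla f(\w)]_i^2]$, which is the left-hand side of the coordinate Bernstein condition \eqref{eqn:coordbernstein}. Dividing by $C L_-^2 / L_+^2$ and applying \eqref{eqn:bernstein} yields \eqref{eqn:coordbernstein} with the stated constant $B' = \tfrac{L_+^2}{C L_-^2} B$. There is no real obstacle here; the only thing to keep straight is the direction of the $r$-sandwich ($L_-^2$ lower-bounds the PSD matrix we want to exceed, $L_+^2$ upper-bounds the diagonal matrix we want to dominate), and the observation that the assumption $\E[\x\x^\top] \succeq C\,\diag(\E[\x\x^\top])$ is exactly a pointwise-$\x$-free statement, which is why it survives multiplication by the data-independent constants $L_\pm^2$ but cannot directly be applied to $\E[r\,\x\x^\top]$ without first stripping $r$ off.
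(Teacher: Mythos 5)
Your proposal is correct and follows essentially the same argument as the paper: the identical chain of PSD inequalities $\E[h'^2\,\x\x^\top] \succeq L_-^2\,\E[\x\x^\top] \succeq C L_-^2\,\diag(\E[\x\x^\top]) \succeq \tfrac{C L_-^2}{L_+^2}\,\E[\diag([\nabla f(\w)]_i^2)]$, followed by left- and right-multiplication with $(\w-\u^*)$ to pass from \eqref{eqn:bernstein} to \eqref{eqn:coordbernstein} with $B' = \tfrac{L_+^2}{C L_-^2}B$. No gaps.
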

The condition $\E[\x \x^\top] \succeq C \diag(\E[\x \x^\top])$
expresses that the (uncentered) covariances between features should be weak.\footnote{The highest $C \ge 0$ satisfying the condition is given by the smallest eigenvalue of the (uncentered) correlation matrix, i.e. $\lambda_{\min}\del*{\diag(\E[\x \x^\top])^{-1/2} \E[\x \x^\top] \diag(\E[\x \x^\top])^{-1/2}}$.} In
particular, it is satisfied with $C=1$ if all pairs of features have
covariance zero. The conditions on $h_t$ are satisfied by the logistic
loss $h_t(z) = \log(1+e^{-y_t z})$ when the margins $y_t \w^\top \x_t$
are uniformly bounded. For the hinge loss $h_t(z) = \max\{1-y_t z,0\}$
they are also satisfied if the margins are strictly less than $1$, but
we get $L_- = 0$ for $y_t \w^\top \x_t > 1$.
\begin{proof} \textbf{(Theorem~\ref{thm:generaltocoordbernstein})}
The main inequality is established as follows:
\begin{multline*}
\ex_f \sbr*{
  \nabla f(\w) \nabla f(\w)^\top
}
=
\ex_f \sbr*{
  h_t'(\w^\top \x)^2 \x \x^\top
}
\succeq
\ex_f \sbr*{
  L_-^2 \x \x^\top
}
\succeq 
C L_-^2 \diag(\ex_f \sbr*{
  \x \x^\top
})\\
\succeq 
\frac{C L_-^2}{L_+^2} \diag(\ex_f \sbr*{
  h_t'(\w^\top \x)^2 \x \x^\top
})
=
\frac{C L_-^2}{L_+^2}
\ex_f \sbr*{
\diag([\nabla f(\w)]_1^2, \ldots, [\nabla f(\w)]_d^2)
}.
\end{multline*}
Multiplying both sides of the inequality by $(\w-\u^*)^\top$ on the left
and $(\w-\u^*)$ on the right, we see that the left-hand side of
\eqref{eqn:bernstein} dominates the left-hand side of
\eqref{eqn:coordbernstein} up to a factor of $\frac{C L_-^2}{L_+^2}$,
which is responsible for the difference between $B$ and $B'$.
\end{proof}

\section{Experiments}\label{sec:experiments}

The goal of this experiments section is to quantify the performance of
the proposed MetaGrad variants in comparison with existing algorithms
for Online Convex Optimization. The corresponding Python code is available
from GitHub \citep{metagradjournalcode}. We set things up as follows.

\subsection{Setup}
We describe the data we used, the specific prediction task we considered, and the algorithms we evaluated. We also discuss the choice of domain and hyper-parameters.

\subsubsection{Data} We evaluate on real-world regression and binary
classification data sets from the standard
\href{https://www.csie.ntu.edu.tw/~cjlin/libsvmtools/datasets/}{LIBSVM
library} \citep{chang2011libsvm}. A summary of the data sets can be
found in Table~\ref{tbl:sumdat} in Appendix~\ref{app:expresults}. We
have included data sets of dimension up to 300, so that MetaGrad Full is
tractable. We exclude the \texttt{mushrooms} data set, because it is
linearly separable. This makes the best offline parameters $\u^*$
non-unique for the hinge loss and have infinite norm for the logistic
loss, which is incompatible with our parameter tuning below. The
resulting seventeen data sets have sample sizes $T$ ranging from 252 to
581\,012 and dimensions $d$ ranging from 6 to 300. When available we used
the normalised version of the data set with features in $[-1,1]$.

\subsubsection{Task}
  Each data set is a sequence of labelled examples $(\x_t, y_t)$ for $t
  = 1, \ldots, T$. We define our task to be sequential prediction of the
  labels $y_t$ from the features $\x_t$ using a linear model $\hat y_t =
  \w_t^\top \x_t$. We use a linear model with intercept, which we
  implement by appending a constant $1$ to each feature vector. For the
  classification data with $y_t \in \set{-1,1}$, we consider both the hinge loss $f_t(\w) =
\max\{0,1-y_t \w^\top \x_t\}$ and the logistic loss $f_t(\w) =
\log\del*{1 + e^{-y_t \w^\top \x_t}}$. For the regression data with $y_t
\in \mathbb R$, we consider the absolute loss $f_t(\w) = \abs*{y_t
- \w^\top \x_t}$ and the squared loss $f_t(\w) = \del*{y_t - \w^\top \x_t}^2$.

\subsubsection{Methods}
We compare 9 methods: two popular versions of Online Gradient Descent,
the diagonal version of AdaGrad, and six versions of MetaGrad. We
include the Online Gradient Descent scheme $\w_{t+1} =
\argmin_{\w \in \U} \w^\top \grad_t + \frac{1}{2 \eta_t}
\norm{\w-\w_t}^2 $ with time-decreasing learning rate $\eta_{t} =
\frac{\sigma}{\sqrt{t} \, \max_{s \le t} \norm{\grad_s}_2}$ (abbreviated
as OGDt) and with the gradient-norm-adaptive learning rate $\eta_{t} =
\frac{\sigma}{\sqrt{\sum_{s=1}^t \|\grad_s\|_2^2}}$ (abbreviated as
OGDnorm). In both cases, $\sigma$ is a hyperparameter. Diagonal AdaGrad
can be viewed as OGDnorm applied to each coordinate separately, and
predicts with iterates $\w_{t+1} = \argmin_{\w \in \U} \w^\top \grad_t +
\sum_i \frac{1}{2 \eta_{t,i}} (w_i- w_{t,i})^2$ with separate learning
rates per dimension $\eta_{t,i} = \frac{\sigma}{\sqrt{\sum_{s=1}^t g_{s,
i}^2}}$. Note that for both AdaGrad and Gradient Descent we use the
standard mirror descent version, as opposed to the
FTRL/primal-dual version.
The six versions of MetaGrad are the full version presented in
Section~\ref{sec:metagradfull} (abbreviated as MGFull), the coordinate
version presented in Section~\ref{sec:coordgrad} (abbreviated as MGCo),
and the Frequent Directions sketching version presented in
Section~\ref{sec:sketching} for $m = 2$, $m = \min \{11, d+1\}$, $m =
\min \{26, d+1\}$, and $m = \min \{51, d+1\}$ (abbreviated as MGF$m$).
Note that in each case the number of directions maintained is $m-1$, so
this corresponds to effective dimensions $1, 10, 25$ and $50$, or $d$
when $d$ is smaller.

\subsubsection{Domain}
Each of our algorithms requires a choice of domain $\U$.
While algorithm and domain are independent in principle, in practice
computational convenience is paramount, and only the convenient default
domain choice is prevalent for each algorithm. In this sense one may
think of the choice of algorithm as importing (additional)
regularisation through its associated domain. For the two versions of
Gradient Descent we use the $\ell_2$-norm ball $\U = \{\w: \|\w\|_2 \leq
D_2\}$. For the two diagonal algorithms, AdaGrad and MGCo, we use the
$\ell_\infty$-norm ball $\U = \{\w: \|\w\|_\infty \leq D_\infty\}$. For
the other versions of MetaGrad we use the time-varying domain $\U_t = \{\w: |\w^\top \x_t| \leq C\}$ from
\eqref{eqn:luodomain}, recalling its major benefit that projections on
$\U_t$ can be computed efficiently (see the discussion in
Section~\ref{sec:sketching}). Having fixed the domain shape, we still
have to fix the domain sizes $D_2$, $D_\infty$ and $C$. We note
that this is part of the art of employing machine learning in practice.
To simulate the availability of weak prior knowledge about the
appropriate domain bound, we first compute the unconstrained optimizer of the
cumulative loss $\u^* = \argmin_{\u \in
\mathbb R^d} \sumT f_t(\u)$. We then set the size bounds to fit the comparator up to
the small factor $3$, i.e.\ setting $D_2 = 3 \|\u^*\|_2$, $D_\infty = 3
\|\u^*\|_\infty$ and $C = 3 \max_t |\x_t^\top\u^* |$. We overprovision
the domain by the factor $3 > 1$ to prevent possibly beneficiary effects
that may kick in when the comparator lies on the boundary of the domain
\citep{ftl.boundary}.

\subsubsection{Hyperparameter Tuning}

We now discuss tuning the hyperparameter $\sigma$ that is present in all
methods. To keep the playing field level and convey the same tuning
advantage to all algorithms, we provide the optimal theoretical tuning
of $\sigma$ for all methods, even though $\norm{\u^*}$ is unknown in
practice. Theoretical guidance on this optimal setting comes in two
types, depending on the makeup of the algorithm, and in particular
control of a telescoping term in its regret bound (see e.g.\
\citealt{Zinkevich2003} or \citealt[Corollary~11]{adagrad}). For all
versions of MetaGrad, the optimal setting $\sigma = \norm{\u^*}$ is the
norm of the comparator itself (see
Sections~\ref{sec:metagradfullpractical} and
\ref{sec:metagradcoordpractical}). For algorithms in the mirror descent
family (including OGD and AdaGrad), the optimal theoretical setting is
$\sigma = \max_{w \in \U} \norm{\w - \u^*}/\sqrt{2}$ (see e.g.\
\citealt[Corollary~11]{adagrad}, \citealt[Theorem~2]{OrabonaPal2018}).
For our norm-ball  domain of radius $3 \norm{\u^*}$ this yields $\sigma
= \sqrt{8} \norm{\u^*}$. We also study the effect of tuning $\sigma$
with hindsight in Appendix~\ref{app:hypertune}.

\subsection{Experimental Results}

Table~\ref{tab:regret table} in Appendix~\ref{app:expresults} lists the
regrets of all $9$ algorithms on all $17$ data sets measured in $2$ loss functions each.

\subsubsection{Best Overall Algorithm}

\begin{table}
  \centering
\begin{tabular}{lrrr}
  \hline
Algorithm & \# best & \# better than OGDt & MedianRatio \\
  \hline
AdaGrad & 0 & 0 & 3.54 \\
  OGDnorm & 0 & 4 & 1.41 \\
  OGDt & 1 & 34 & 1.00 \\
  MGCo & 12 & 33 & 0.32 \\
  MGF2 & 2 & 31 & 0.31 \\
  MGF11 & 14 & 31 & 0.27 \\
  MGF26 & 15 & 33 & 0.27 \\
  MGF51 & 17 & 33 & 0.25 \\
  MGFull & 21 & 33 & 0.25 \\
   \hline
\end{tabular}
\caption{Comparison of algorithms with OGDt. The MedianRatio column
contains the median ratio of the regret of each algorithm over that of
OGDt. Columns ``\# best'' and ``\# better than OGDt'' count cases where
the algorithm is at most one regret unit above the best algorithm or OGDt,
respectively.
}\label{tbl:sumresults}
\end{table}

\begin{figure}[htbp]
    \centering
    \includegraphics[width = \textwidth]{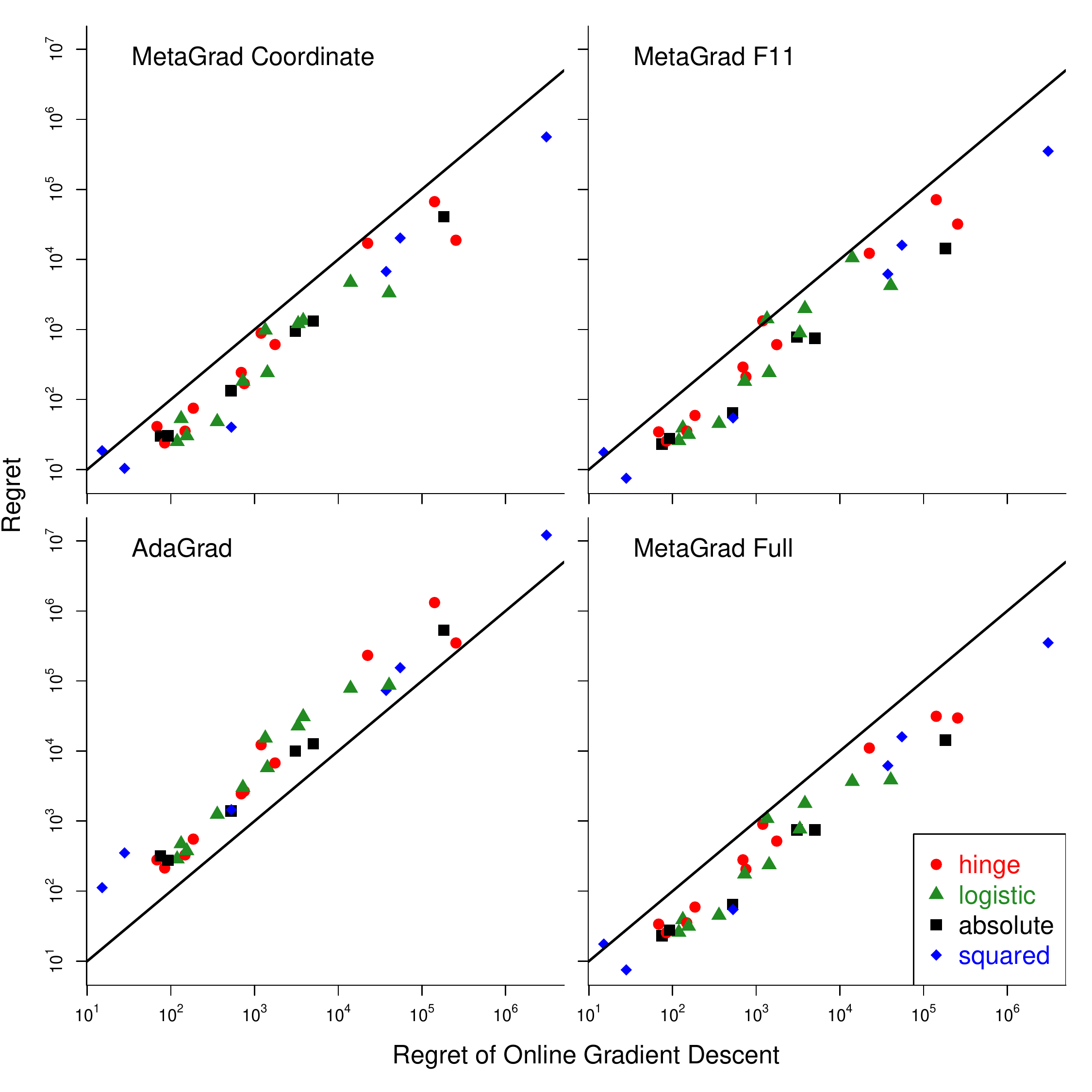}
    \caption{Comparison of the logarithm of the regret of three versions of MetaGrad and AdaGrad and the logarithm of the regret of OGDt. Each marker represents a combination of data set and loss function.}
    \label{fig:MGvsGD}
\end{figure}

We first try to answer the question which algorithm is best. To this
end, we present a summary of the results in Table~\ref{tbl:sumresults}.
The table shows how often each algorithm achieves the lowest loss on our
17 data sets as measured in either of the two relevant loss functions
(for some data sets ties result in several algorithms being counted as best).
It further lists how often each algorithm beats OGDt, and what the median
ratio of regrets is between each algorithm and OGDt. We choose OGDt as the baseline, as it is empirically best
among our competitors AdaGrad, OGDnorm and OGDt. We see that versions
of MetaGrad, especially MetaGrad Full, are often the best algorithm
overall (there is only 1 of 34 cases where no MetaGrad version is the
winner). Moreover, the experiments corroborate the intuition that the
performance of the MetaGrad sketching versions improves as more
dimensions are retained. The table suggests the recommendation to use
sketched MetaGrad in practice, retaining as many dimensions as are computationally affordable.

\subsubsection{Predicting the Winner}
The next question is if we can observe any patterns in the circumstances
for which each algorithm shines. To gain insight here, we compare in
Figure~\ref{fig:MGvsGD} the regrets of MGCo, MGF11, AdaGrad and MGFull
each with OGDt. There does not seem to be
a visually discernible pattern allowing us to predict the advantage over
OGDt (vertical position) from the OGDt regret on the data set (horizontal
position) or the loss function (color). We looked into the blue diamond (i.e.\ square loss) at the far left in each plot, on which OGDt dominates. This data set is \texttt{mg}. One feature that stands out for this set is that its optimal unconstrained coefficient vector assigns a large coefficient to the constant $1$ feature we added to implement the intercept, while using much smaller (three orders of magnitude) coefficients for all other features. This results in an essentially sparse optimal weight vector. It is not clear to us how OGDt is especially able to exploit this. For sure its regret bound does not give a hint.

Comparing MetaGrad Full to MGF11 in Figure~\ref{fig:MGvsGD}, we see that most data set markers are very similarly positioned, except for two (one green, one red) that moved slightly upwards. This is the \texttt{covtype} data set. For this set we indeed see the loss rise dramatically when we sketch to smaller sizes or use the coordinatewise approximation. Apparently, for this data set all features are important, and the intrinsic orientation is not the coordinate basis.
Overall, the coordinatewise version of MetaGrad is close to the performance of the Full version of MetaGrad (the median regret ratio is $1.09$), which suggests that on most data sets the correlations between the features are of little importance.

\subsubsection{Surprises}

To our surprise, AdaGrad has the worst performance of all algorithms.
Upon closer review of the literature we observe that the hyperparameter
($\sigma$) of AdaGrad is often optimized in hindsight based on the data.
In Appendix~\ref{app:hypertune}, 
we tested by how much we could improve the performance of all methods by
tuning $\sigma$ in hindsight. We find there that the benefits are
especially large for AdaGrad. For instance, on
\texttt{w8a} with the logistic loss, we find that we can tune
AdaGrad such that the regret improves from $86921$ to $1147$,
improving upon the theoretically recommended tuning of AdaGrad by an
astounding factor $76$, as well as beating the best theoretically
tuned algorithm
(MGCo) by a factor $2.9$. See Appendix~\ref{app:hypertune} for a
further study of this phenomenon. However, such post-hoc tuning is
not available in sequential decision-making applications. We take this
as a clear motivation to study the effects of tuning, and develop
algorithms that tune themselves.

A second surprise is that OGDt beats OGDnorm (median regret ratio $1.41$).
Worst-case regret bounds indicate that the reverse should occur. There
exist tighter ``luckiness'' analyses for stochastic cases in the
literature \citep{GaillardStoltzVanErven2014,bernfast,JMLR:v20:18-869}, but
these are not sharp enough to explain the difference between OGDnorm and
OGDt. Moreover, these analyses require conditions for which it seems
implausible that a large majority of data sets should satisfy them to the same degree, so these analyses cannot explain why the dominance of OGDt is so consistent.

\paragraph{Conclusion}

Overall, we see that MetaGrad outperforms AdaGrad and Online Gradient Descent consistently across a range of real-world data sets. We conclude that MetaGrad is the best choice for sequential scenarios where safety requirements dictate tuning for theoretical guarantees, as we have studied here.

\subsection{Additional Experiments with Hypertuning}
\label{sec:additionalexperiments}

In Appendix~\ref{app:hypertune} we provide additional experiments to
investigate the best performance that can be achieved in principle by
each of the $9$ algorithms, by tuning the hyperparameter $\sigma$ in
hindsight for each data set. For all methods, $\sigma$ directly
influences the effective learning rate $\eta$, so this hypertuning
provides a loophole to optimize $\eta$ for the data. Although such
post-hoc tuning is impossible in a fully online setting, these
experiments give insight into whether the theoretical tunings are good
advice in practice for non-adversarial data. In general, all methods
gain from hypertuning, but some more than others.
A striking difference with the previous experiments is that AdaGrad,
OGDnorm and MetaGrad all achieve very similar performance when they use
hypertuning. It therefore seems to matter more if the hyperparameters
are optimally tuned to the data set, than which algorithm is chosen.
This suggests that the empirical superiority of MetaGrad in
the primary experiments may be attributed to its ability to better
adapt to the optimal learning rate $\eta$. 

\section{Conclusion and Possible Extensions}
\label{sec:conclusion}

We provide a new adaptive method, MetaGrad, which is robust
to general convex losses but simultaneously can take advantage of
special structure in the losses, like curvature in the loss function or if the data come
from a fixed distribution. The main new technique is to consider
multiple learning rates in parallel: each learning rate $\eta$ has its
own surrogate loss \eqref{eq:surrogate} and there is a single controller
method that aggregates the predictions of $\eta$-experts corresponding
to the different surrogate losses.

An important feature of the controller is that its contribution to the
final regret is only the log of the number of experts, and since the
number of experts is $O(\log T)$ this leads to an additional $O(\log
\log T)$ term that is typically dominated by other terms in the bound.
It is therefore also cheap to add more experts for possibly different
surrogate losses. To make the proof go through, a sufficient requirement
on any such surrogates is that they replace the term $\big(\eta (\u -
\w_t)^\top \grad_t\big)^2$ in \eqref{eq:surrogate} by an upper bound.
This possibility is exploited by \citet{WangLuZhang2020}, who add extra
experts with surrogates that contain $\big(\eta G \|\u -
\w_t\|_2\big)^2$ instead, where $G$ is a known upper bound on
$\|\grad_t\|$.\footnote{To make a Lipschitz-adaptive version of their
approach, we might replace the constant $G$ by the quantity
$\|\grad_t\|$ that it upper bounds.} Since
these surrogates are quadratic in all directions, and not just in the
direction of $\grad_t$, they are better suited for strongly convex
losses, which then leads to an even more adaptive
extension of MetaGrad that also gets the optimal rate $O(\log T)$ for
strongly convex losses, without any dependence on $d$. The price of this
extension is that it doubles the number of experts, which adds a
negligible constant $\log 2$ to the regret, but doubles the run-time of
the algorithm.

If we are willing to increase the number of experts, then another
appealing extension would be to adapt to the $\sigma$ hyperparameter.
This is possible by adding multiple copies of each $\eta$-expert with
different values for $\sigma$. If $\sigma$ ranges over a set of
candidate values $\mathcal{S} := \{\sigma_1,\ldots,\sigma_p\}$, then our
overhead compared to the best choice of $\sigma$ from $\mathcal{S}$ is
an additional small constant $\log p$ in the regret, but our run-time
multiplies by~$p$, so we would still need to keep $p$ relatively small.
For example, we might take an exponentially spaced grid $\mathcal{S} :=
\{2^i \in [\sigma_\textnormal{min},\sigma_\textnormal{max}] \mid i \in
\mathbb Z\}$, so that $p \leq \ceil{\origlog_2
\sigma_\textnormal{max}/\sigma_\textnormal{min}}$.

Another way to extend MetaGrad is to replace the exponential weights
update in the controller by a different experts algorithm.
\citet{zhang19:_dual_adapt} use this to extend MetaGrad for the case
that the optimal parameters $\u$ vary over time, as measured in terms of
the adaptive regret. See also \citet{Neuteboom2020}, who provides a
similar extension of the closely related Squint algorithm for adaptive
regret.

As a final possible extension, we mention the sliding window variant
of Full Matrix AdaGrad \citep{agarwal2018case}. The same sliding window
idea could be used to base the covariance matrix $\Sigma_t^\eta$ in our
Algorithm~\ref{alg:MetaGradExpert} only on the $k$ most recent gradients.
This has both computational advantages, because $\Sigma_t^\eta$ then
becomes a matrix of fixed rank $d+k$, and it could be beneficial for
non-convex optimization when older covariance information needs to be
discarded.

\paragraph{Acknowledgements}

The authors would like to thank Rapha\"el Deswarte and Zakaria Mhammedi
for collaborations on earlier versions of MetaGrad
\citep{deswarte2018linear,lipschitz.metagrad} and the anonymous
reviewers for insightful comments on the manuscript. This research was
supported by the Netherlands Organization for Scientific Research (NWO):
grant numbers VI.Vidi.192.095 for Van Erven and TOP2EW.15.211 for Van
der Hoeven.

\appendix

\section{Extra Material Related to Fast Rate Conditions}
\label{app:MoreFastRateExamplesAndProofs}

In this section we gather extra material related to the fast rate
examples from Sections~\ref{sec:fastRateExamples} and
\ref{sec:metagradcoordanalysis}. We first provide simulations. Then we
present the proofs of Theorems~\ref{thm:curvedfunctions},
\ref{thm:Bernstein} and \ref{thm:coordBernstein}. And finally we give an
example in which the unregularized hinge loss satisfies the Bernstein
condition.

\subsection{Simulations: Logarithmic Regret without Curvature}
\label{app:simulations}

We provide two simple simulation examples to illustrate the sufficient
conditions for Theorems~\ref{thm:curvedfunctions} and
\ref{thm:Bernstein}, and to show that the resulting fast rates are not
automatically obtained by previous methods for general functions. Both
our examples are one-dimensional, and have a stable optimum (that good algorithms will converge
to); yet the functions are based on absolute values, which are neither
strongly convex nor smooth, so the gradient norms do not vanish near the
optimum. As our baseline we include AdaGrad \citep{adagrad}, because it
is commonly used in practice
\citep{WordRepresentations,NeuralNetworkReview} and because, in the
one-dimensional case, it coincides with OGD with an adaptive tuning of the
learning rate that is applicable to general convex functions. See the
description of AdaGrad/OGDnorm in Section~\ref{sec:experiments} for a full
description.

In the first example, we consider offline convex optimization of the
fixed function $f_t(w) \equiv f(w) = \abs{w-\frac{1}{4}}$, which
satisfies the directional derivative condition \eqref{eqn:curvedfunctions} because it is convex. In the
second example, we look at stochastic optimization with convex functions
$f_t(w) = \abs{w - x_t}$, where the outcomes $x_t = \pm \half$ are
chosen i.i.d.\ with probabilities $0.4$ and $0.6$. These probabilities
satisfy \eqref{eqn:bernstein} with $\beta = 1$. Their values are by no
means essential, as long we avoid the worst case where the probabilities
are equal. In both examples, the domain is $\U = [-1,1]$. We tune
AdaGrad with hyperparameter $\sigma = \max_{w,u \in \U} |w-u|/\sqrt{2} =
\sqrt{2}$ and MetaGrad with $\sigma = \max_{u \in \U} |u| = 1$.

Figure~\ref{fig:killer} graphs the results. We see that in both cases
the regret of AdaGrad follows its $O(\sqrt{T})$ bound, while MetaGrad
achieves an $O(\ln T)$ rate, as predicted by
Theorems~\ref{thm:curvedfunctions} and~\ref{thm:Bernstein}. This shows
that MetaGrad achieves a type of adaptivity that is not achieved by
AdaGrad.

\begin{figure}[t]
\centering
\subfigure[{Offline: $f_t(u) = \abs{u-1/4}$} ]{
\includegraphics[width=.45\textwidth]{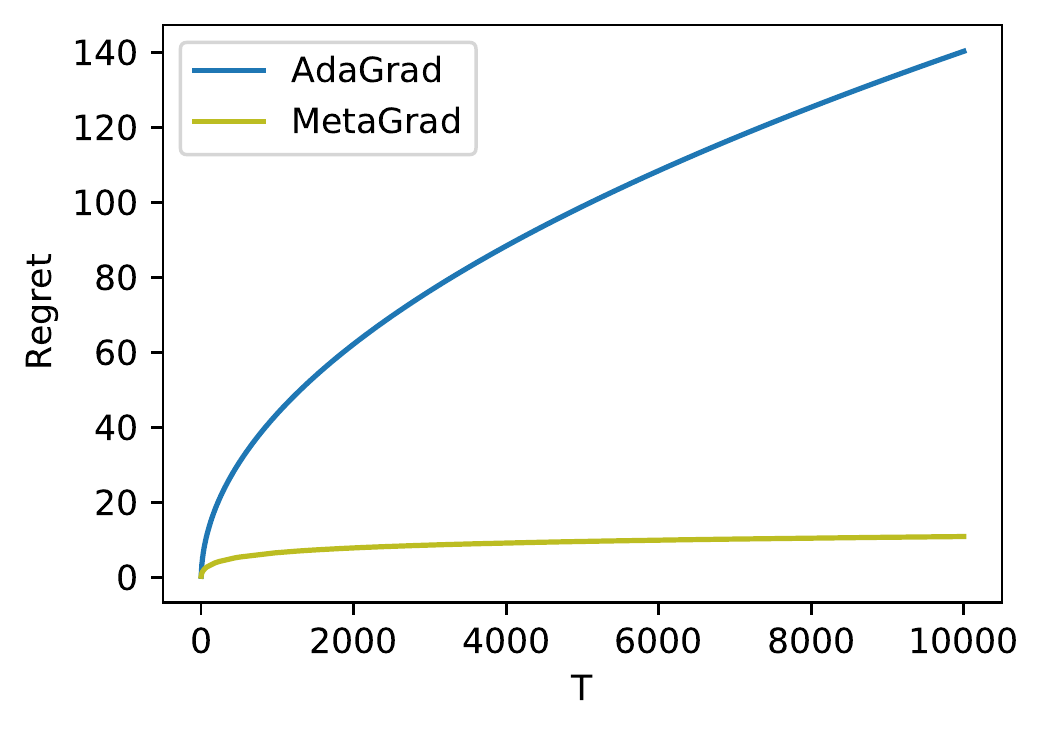}
}
\subfigure[{Stochastic Online: $f_t(u) = \abs{u- x_t}$ where $x_t = \pm
\half$ i.i.d.\ with probabilities $0.4$ and $0.6$.}]{
\includegraphics[width=.45\textwidth]{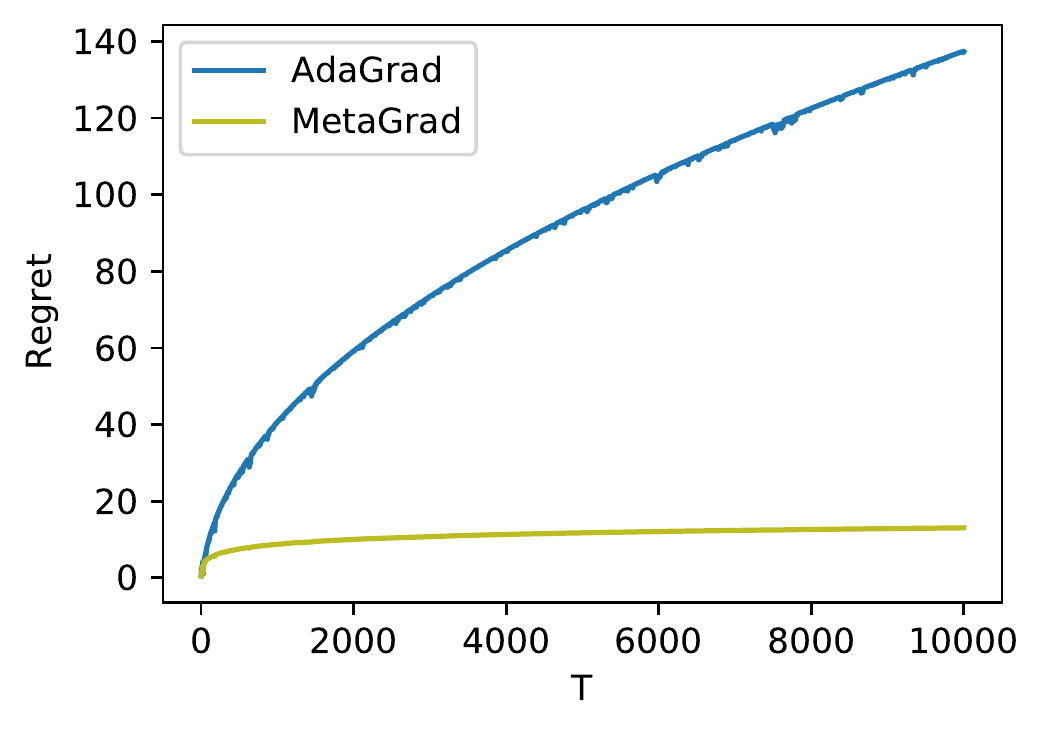}
}
\caption{Examples of fast rates on functions without curvature. MetaGrad
incurs logarithmic regret $O(\log T)$, while AdaGrad incurs
$O(\sqrt{T})$ regret, matching its bound.}
\label{fig:killer}
\end{figure}

\subsection{Proof of Theorem~\ref{thm:curvedfunctions}}

\begin{proof}
By \eqref{eqn:curvedfunctions}, applied with $\w = \w_t$, and
\eqref{eqn:roughmainbound}, there exists a $C > 0$ (depending on~$a$)
such that, for all sufficiently large $T$,
\begin{align*}
  R_T^\u
    &\leq a \Rtrick_T^\u - b V_T^\u
    \leq C\sqrt{ V_T^\u\, d \ln T } + C d \ln T - b V_T^\u\\
    &\leq \frac{\gamma}{2} C V_T^\u + \del*{\frac{1}{2\gamma}+1} C d \ln
    T - b V_T^\u
    \qquad \text{for all $\gamma > 0$,}
\end{align*}
where the last inequality is based on $\sqrt{xy} = \min_{\gamma > 0}
\frac{\gamma}{2} x + \frac{y}{2\gamma}$ for all $x,y > 0$. The result
follows upon taking $\gamma = \frac{2b}{C}$.
\end{proof}

\subsection{Proofs of Theorems~\ref{thm:Bernstein} and
\ref{thm:coordBernstein}}

\begin{proof} \textbf{(Theorem~\ref{thm:Bernstein})}
By \eqref{eqn:roughmainbound} there exists a constant $C > 0$ such that,
for all sufficiently large~$T$,
\[
  \E\sbr*{\Rtrick_T^{\u^*}}
    \leq C \E\sbr*{\sqrt{ V_T^{\u^*}\, d \ln T }} + C d \ln T.
\]
Abbreviating $\rtrick_t^\u = (\w_t - \u)^\top \grad_t$, we see that
$\Rtrick_T^{\u^*} = \sum_{t=1}^T \rtrick_t^{\u^*}$, $V_T^{\u^*} =
\sum_{t=1}^T (\rtrick_t^{\u^*})^2$ and the
Bernstein condition with $\w = \w_t$ becomes
\[
  \E[(\rtrick_t^{\u^*})^2 \mid \w_t]
    \leq B \E[\rtrick_t^{\u^*} \mid \w_t]^\beta.
\]
Combining the above with multiple applications of Jensen's inequality,
the expected linearized regret is at most
\begin{align}
  \E\sbr*{\Rtrick_T^{\u^*}}
&\leq C \sqrt{ \E\sbr*{V_T^{\u^*}}\, d \ln T} + C d \ln T \notag\\ 
&\leq C \sqrt{ B \sum_{t=1}^T
\E_{\w_t}\sbr*{\del*{\E\sbr*{\rtrick_t^{\u^*}|\w_t}}^\beta} \,
d \ln T} + C d \ln T  \notag\\
&\leq C \sqrt{ B \sum_{t=1}^T \del*{\E\sbr*{\rtrick_t^{\u^*}}}^\beta \, d \ln T} + C d \ln T.
\label{eqn:bernsteinstart}
\end{align}
In the following, we will repeatedly use the fact that
\begin{equation}\label{eqn:linearize}
  x^\alpha y^{1-\alpha}
   = c_\alpha \inf_{\gamma > 0} \del*{\frac{x}{\gamma} +
   \gamma^\frac{\alpha}{1-\alpha}y}
  \qquad \text{for any $x,y \geq 0$ and $\alpha \in (0,1)$,}
\end{equation}
where $c_\alpha = (1-\alpha)^{1-\alpha} \alpha^\alpha$. Applying this
first with $\alpha = 1/2$, $x = B d \ln T$ and $y = \sum_{t=1}^T
\del*{\E\sbr{\rtrick_t^{\u^*}}}^\beta$, we obtain
\begin{equation*}
  \sqrt{ B \sum_{t=1}^T \del*{\E\sbr{\rtrick_t^{\u^*}}}^\beta \, d \ln T}
  \leq 
  c_{1/2}\gamma_1 \sum_{t=1}^T \del*{\E\sbr{\rtrick_t^{\u^*}}}^\beta +
  \frac{c_{1/2}}{\gamma_1} B d \ln T
  \qquad \text{for any $\gamma_1 > 0$.}
\end{equation*}
If $\beta = 1$, then $\sum_{t=1}^T \del*{\E\sbr{\rtrick_t^{\u^*}}}^\beta
= \E\sbr{\Rtrick_T^{\u^*}}$ and the result follows by taking $\gamma_1 =
\frac{1}{2C c_{1/2}}$. Alternatively, if $\beta < 1$, then we apply
\eqref{eqn:linearize} a second time, with $\alpha = \beta$, $x =
\E\sbr{\rtrick_t^{\u^*}}$ and $y=1$, to find that, for any $\gamma_2>0$,
\begin{align*}
  \sqrt{ B \sum_{t=1}^T \del*{\E\sbr{\rtrick_t^{\u^*}}}^\beta \, d \ln T}
  &\leq 
  c_\beta c_{1/2} \gamma_1 \sum_{t=1}^T
  \del*{\frac{\E\sbr{\rtrick_t^{\u^*}}}{\gamma_2} +
  \gamma_2^{\beta/(1-\beta)}} +
  \frac{c_{1/2}}{\gamma_1} B d \ln T\\
  &= 
  \frac{c_\beta c_{1/2} \gamma_1}{\gamma_2} \E\sbr{\Rtrick_T^{\u^*}}
  + c_\beta c_{1/2} \gamma_1 \gamma_2^{\beta/(1-\beta)} T
  + \frac{c_{1/2}}{\gamma_1} B d \ln T.
\end{align*}
Taking $\gamma_1 = \frac{\gamma_2}{2 c_\beta c_{1/2} C}$, this yields
\begin{equation*}
  \E\sbr{\Rtrick_T^{\u^*}}
    \leq
  \gamma_2^{1/(1-\beta)} T
  + \frac{4 C^2 c_{1/2}^2 c_\beta B d \ln T}{\gamma_2} + 2 Cd\ln T.
\end{equation*}
We may optimize over $\gamma_2$ by a third application of
\eqref{eqn:linearize}, now with $x = 4 C^2 c_{1/2}^2 c_\beta B d \ln T$,
$y = T$ and $\alpha = 1/(2-\beta)$, such that $\alpha/(1-\alpha) =
1/(1-\beta)$:
\begin{align*}
  \E\sbr{\Rtrick_T^{\u^*}}
    &\leq \frac{1}{c_{1/(2-\beta)}}
    \del*{4 C^2 c_{1/2}^2 c_\beta B d \ln T}^{1/(2-\beta)}
    T^{(1-\beta)/(2-\beta)}
    + 2 Cd\ln T\\
    &= O\del*{\del*{B d \ln T}^{1/(2-\beta)} T^{(1-\beta)/(2-\beta)}
    + d\ln T},
\end{align*}
which completes the proof.
\end{proof}

\begin{proof} \textbf{(Theorem~\ref{thm:coordBernstein})}
  We will show that \eqref{eqn:bernsteinstart} from the proof of
  Theorem~\ref{thm:Bernstein} also holds under the conditions of
  Theorem~\ref{thm:coordBernstein}. The rest of the proof then proceeds
  in the same way. To this end, we use that \eqref{eqn:roughcoordbound}
  implies the existence of a constant $C>0$ such that, for all
  sufficiently large $T$,
  \[
    \Rtrick_T^{\u^*}
      \leq  C \sum_{i=1}^d \sqrt{V_{T,i}^{u_i}\, \ln(T)}
            + C d \ln(T).
  \]
  Multiple applications of Jensen's inequality, together with the
  coordinate Bernstein condition, then imply that
  \begin{align*}
    \E\sbr*{\Rtrick_T^{\u^*}}
      &\leq C \E\sbr*{\sum_{i=1}^d \sqrt{V_{T,i}^{u^*_i}\,\ln(T)}}
            + C d \ln(T)
      = C d \E\sbr*{\sum_{i=1}^d \frac{1}{d} \sqrt{V_{T,i}^{u^*_i}\,\ln(T)}}
            + C d \ln(T)\\
      &\leq C d \sqrt{\E\sbr*{\sum_{i=1}^d \frac{1}{d}
      \del*{V_{T,i}^{u^*_i}\,\ln(T)}}}
            + C d \ln(T)\\
      &= C \sqrt{\sum_{t=1}^T \E_{\w_t}\sbr*{\sum_{i=1}^d (w_{t,i} - u^*_i)^2
      \E[g_{t,i}^2\mid \w_t]}\,d \ln(T)}
            + C d \ln(T)\\
      &\leq C \sqrt{B \sum_{t=1}^T \E_{\w_t}\sbr*{\del*{\E[\rtrick_t^{\u^*}\mid
      \w_t]}^\beta}\,d \ln(T)}
            + C d \ln(T)\\
      &\leq C \sqrt{B \sum_{t=1}^T \del*{\E[\rtrick_t^{\u^*}]}^\beta\,d \ln(T)}
            + C d \ln(T).
  \end{align*}
  This establishes the same inequality as in \eqref{eqn:bernsteinstart},
  and the remainder of the proof is the same as for
  Theorem~\ref{thm:Bernstein}.
\end{proof}

\subsection{Unregularized Hinge Loss Example}
\label{app:hingeLossExample}

As shown by \citet{bernfast}, the Bernstein
condition is satisfied in the following classification task:
\begin{lemma}[Unregularized Hinge Loss
Example]\label{lem:hingeLossExample}
  Suppose that $(\X_1,Y_1),(\X_2,Y_2),\ldots$ are i.i.d.\ with $Y_t$
  taking values in $\{-1,+1\}$, and let $f_t(\u) = \max\{0,1 - Y_t
  \u^\top \X_t\}$ be the \emph{hinge loss}. Assume that both $\U$ and
  the domain for $\X_t$ are the $d$-dimensional unit ball. Then the
  $(B,\beta)$-Bernstein condition is satisfied with $\beta = 1$ and $B =
  \frac{2\lambdamax}{\|\vmu\|_2}$, where $\lambdamax$ is the maximum
  eigenvalue of $\E\sbr*{\X \X^\top}$ and $\vmu = \E\sbr{Y\X}$, provided
  that $\|\vmu\|_2 > 0$.

  In particular, if $\X_t$ is uniformly distributed on the sphere and
  $Y_t = \sign(\ip{\bar{\u}}{\X_t})$ is the noiseless classification of
  $\X_t$ according to the hyperplane with normal vector $\bar{\u}$, then $B
  \leq \frac{c}{\sqrt{d}}$ for some absolute constant $c > 0$.
\end{lemma}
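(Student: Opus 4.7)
The plan is to exploit the key observation that Cauchy--Schwarz gives $|Y u^\top X| \le \|u\|_2 \|X\|_2 \le 1$ for all $u \in \U$ and all $X$ in the unit ball, so the hinge loss is \emph{always active}. Consequently, $f(u) = 1 - Y u^\top X$ is linear on~$\U$, its (sub)gradient is $\nabla f(u) = -Y X$ everywhere, and since $Y^2 = 1$ we get $\nabla f(w) \nabla f(w)^\top = X X^\top$. Taking expectations, $\E[\nabla f(w)] = -\vmu$ and $\E[\nabla f(w)\nabla f(w)^\top] = \E[X X^\top]$, both independent of~$w$.

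First I would pin down the stochastic optimum. Since $\E[f(u)] = 1 - u^\top \vmu$ is linear, minimising over the unit ball yields $u^* = \vmu/\|\vmu\|_2$ (using $\|\vmu\|_2 > 0$). Plugging the two formulas above into the linearised Bernstein condition \eqref{eqn:bernstein}, the two sides become
\begin{align*}
\text{LHS} &= (w - u^*)^\top \E[X X^\top] (w - u^*) \le \lambdamax \|w - u^*\|_2^2,\\
\text{RHS} &= B\bigl((u^* - w)^\top \vmu\bigr)^\beta = B\bigl(\|\vmu\|_2 - w^\top \vmu\bigr)^\beta.
\end{align*}
So to get $\beta = 1$ it suffices to establish the bound $\lambdamax\|w - u^*\|_2^2 \le B\bigl(\|\vmu\|_2 - w^\top \vmu\bigr)$ for some constant~$B$.

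The core inequality here is a straightforward consequence of $\|u^*\|_2 = 1$ and $\|w\|_2 \le 1$:
\[
\|w - u^*\|_2^2 = \|w\|_2^2 - 2 w^\top u^* + 1 \le 2 - 2 w^\top \vmu/\|\vmu\|_2 = \frac{2}{\|\vmu\|_2}\bigl(\|\vmu\|_2 - w^\top \vmu\bigr).
\]
Combined with the LHS bound, this yields Bernstein with $\beta = 1$ and $B = 2\lambdamax/\|\vmu\|_2$, as claimed.

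For the second part (uniform $X$ on the sphere and $Y = \sign(\bar u^\top X)$), rotational symmetry gives $\E[XX^\top] = \frac{1}{d} \I$, so $\lambdamax = 1/d$. The same symmetry argument, applied after rotating so that $\bar u = e_1$, shows $\vmu = \E[\sign(X_1) X] = \E[|X_1|]\, e_1$, hence $\|\vmu\|_2 = \E[|X_1|]$. Thus $B = 2/(d\,\E[|X_1|])$, and the result $B \le c/\sqrt d$ follows from the standard fact that $\E[|X_1|] = \Theta(1/\sqrt{d})$ for $X$ uniform on the sphere in $\reals^d$ (a one-line verification using the beta-distributed marginal, or the simple anti-concentration bound $\E[|X_1|] \ge c'\sqrt{\E[X_1^2]} = c'/\sqrt d$). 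The only subtle point is this last lower bound on $\E[|X_1|]$; apart from this quick distributional calculation, the whole lemma is a straight chain of inequalities made possible by the ``always active'' observation.
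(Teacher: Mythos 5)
Your proposal is correct and follows essentially the same route as the paper's proof: the observation that the hinge loss is linear on the unit ball with gradient $-Y\X$, the identification $\u^* = \vmu/\|\vmu\|_2$, the bound $\lambdamax\|\w-\u^*\|_2^2$ combined with $\|\w-\u^*\|_2^2 \le 2(1-\w^\top\u^*)$, and the symmetry computations giving $\lambdamax = 1/d$ and $\|\vmu\|_2 = \E[|X_1|]$. The only divergence is the final estimate $\E[|X_1|] \ge c'/\sqrt{d}$, which the paper establishes via the Gaussian representation $\X = \Z/\|\Z\|$ with explicit constants; your beta-marginal calculation works too, but be aware that the parenthetical shortcut $\E[|X_1|] \ge c'\sqrt{\E[X_1^2]}$ is not a general inequality and itself requires a distribution-specific argument (e.g.\ H\"older together with $\E[X_1^4] = 3/(d(d+2))$).
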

Thus the version of the Bernstein condition that implies an $O(d \log T)$
rate is always satisfied for the hinge loss on the unit ball, except
when $\|\vmu\|_2 = 0$, which is very natural to exclude, because it
implies that the expected hinge loss is $1$ (its maximal value) for all
$\u$, so there is nothing to learn. It is common to add
$\ell_2$-regularization to the hinge loss to make it strongly convex,
but this example shows that that is not necessary to get logarithmic
regret.

For completeness, we repeat the proof of
Lemma~\ref{lem:hingeLossExample} from \citet{bernfast}:
\begin{proof} \textbf{(Lemma~\ref{lem:hingeLossExample})}
  Since, by assumption, $\u$ and $\X$ have length at most $1$, the hinge
  loss simplifies to $f(\u) = 1 - Y \u^\top \X$ with gradient $\nabla
  f(\u) = -Y \X$. This implies that
  \begin{equation}
    \u^* \df \argmin_\u \E\sbr*{f(\u)} = \frac{\vmu}{\|\vmu\|},
  \end{equation}
  and
  \begin{align*}
    (\w - \u^*)^\top \ex &\sbr*{\nabla f(\w) \nabla f(\w)^\top}(\w - \u^*)
      = (\w - \u^*)^\top \ex \sbr*{\X \X^\top}(\w - \u^*)\\
      &\leq \lambdamax (\w - \u^*)^\top (\w - \u^*)
      \leq 2\lambdamax (1 - \w^\top \u^*)\\
      &= \frac{2\lambdamax}{\|\vmu\|} (\w -\u^*)^\top (-\vmu)
      = \frac{2\lambdamax}{\|\vmu\|} (\w -\u^*)^\top \E\sbr*{\nabla
      f(\w)},
  \end{align*}
  which proves the first part of the lemma

  For the second part, we first observe that $\lambdamax = 1/d$. Then,
  to compute $\|\vmu\|$, assume without loss of generality that
  $\|\bar{\u}\| = 1$, in which case $\bar{\u} = \u^*$. Now symmetry of
  the distribution of $\X$ conditional on $\X^\top \u^*$ gives
  \begin{equation*}
    \E\sbr*{Y \X \mid \X^\top \u^*}
      = \sign(\X^\top \u^*) \E\sbr*{\X \mid \X^\top \u^*}
      = \sign(\X^\top \u^*) \X^\top \u^* \u^*
      = |\X^\top \u^*| \u^*.
  \end{equation*}
  By rotational symmetry, we may further assume without loss of
  generality that $\u^* = \e_1$ is the first unit vector in the standard
  basis, and therefore
  \begin{equation*}
   \|\vmu\| = \|\E\sbr*{|\X^\top \u^*|} \u^*\| =  \E\sbr*{|X_1|}.
  \end{equation*}
  If $\Z = (Z_1,\ldots,Z_d)$ is multivariate Gaussian $\normal(0,I)$.
  Then $\X = \Z/\|\Z\|$ is uniformly distributed on the sphere, so
  \begin{align*}
    \E\sbr{|X_1|}
      = \E\sbr*{\frac{|Z_1|}{\|\Z\|}}
      \geq \frac{1}{4 \sqrt{d}} \Pr\del*{|Z_1| \geq \half \wedge \|\Z\|
      \leq 2\sqrt{d}}.
  \end{align*}
  Since $\Pr\del*{|Z_1| < \half} \leq 0.4$ and $\Pr\del*{\|\Z\| \geq
  2\sqrt{d}} \leq \frac{1}{4d} \E\sbr*{\|\Z\|^2} = \frac{1}{4}$, we
  have
  \begin{equation*}
    \Pr\del*{|Z_1| \geq \half \wedge \|\Z\|
        \leq 2\sqrt{d}}
    \geq 1 - 0.4 - \frac{1}{4} = 0.35,
  \end{equation*}
  from which the conclusion of the second part follows with $c =
  8/0.35$.
\end{proof}

\subsection{Bernstein for Linearized Excess Loss}\label{sec:bnst}
Let $f : \U \to \reals$ be a convex function drawn from distribution $\pr$ with stochastic optimum $\u^* = \argmin_{\u \in \U} \E_{f \sim
\pr}[f(\u)]$. For any $\w \in \U$, we now show that the Bernstein
condition for the excess loss $X \df f(\w)-f(\u^*)$ implies the
Bernstein condition with the same exponent $\beta$ for the linearized
excess loss $Y \df (\w-\u^*)^\top \nabla f(\w)$. These variables satisfy
$Y \ge X$ by convexity of $f$ and $Y \le C \df 2 D_2 G_2$. 

\begin{lemma}
For $\beta \in (0,1]$, let $X$ be a $(B,\beta)$-Bernstein random variable:
\[
\ex[X^2] \le B \ex[X]^\beta.
\]
Then any bounded random variable $Y \le C$ with $Y \ge X$ pointwise satisfies the $(B',\beta)$-Bernstein condition
\[
\ex[Y^2] \le B' \ex[Y]^\beta
\]
for $B' = \max \set*{
B,
\frac{2}{\beta}
C^{2-\beta}
}
$.
\end{lemma}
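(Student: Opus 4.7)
My plan is to decompose $Y = Y_+ - Y_-$ with $Y_+ := \max(Y,0)$ and $Y_- := \max(-Y,0)$, so that $Y^2 = Y_+^2 + Y_-^2$ (the cross term vanishes since $Y_+ Y_- = 0$). I then control the two halves of $\E[Y^2]$ by different tools. For the negative part, $Y \ge X$ pointwise yields $Y_- \le X_-$, hence
\[
\E[Y_-^2] \;\le\; \E[X_-^2] \;\le\; \E[X^2] \;\le\; B\,\E[X]^\beta \;\le\; B\,\E[Y]^\beta,
\]
where the last inequality uses $0 \le \E[X] \le \E[Y]$ together with monotonicity of $x \mapsto x^\beta$ on $\mathbb R_+$.

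For the positive part I would first establish the pointwise inequality
\[
Y^2 \;\le\; \tfrac{2}{\beta}\, C^{2-\beta}\, Y_+^\beta + Y_-^2,
\]
which reduces to $Y^2 = Y_-^2$ on $\{Y \le 0\}$ and follows on $\{0 < Y \le C\}$ from $Y^{2-\beta} \le C^{2-\beta} \le \tfrac{2}{\beta} C^{2-\beta}$ (using $\beta \le 2$, so $\tfrac{2}{\beta} \ge 1$). Taking expectations, Jensen's inequality for the concave function $x \mapsto x^\beta$ on $\mathbb R_+$ gives $\E[Y_+^\beta] \le \E[Y_+]^\beta$, and $\E[Y_+] = \E[Y] + \E[Y_-]$ together with Cauchy--Schwarz $\E[Y_-] \le \sqrt{\E[Y_-^2]}$ relates this back to the Bernstein-controlled quantity $\E[Y_-^2]$.

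Combining the two pieces, the final step is to balance the mixed term coming from expanding $(\E[Y] + \E[Y_-])^\beta$ against the bound $\E[Y_-^2] \le B \E[Y]^\beta$. I expect the hard part to be this balancing step: Young's inequality with the conjugate pair $(p,q) = (2/\beta,\, 2/(2-\beta))$ is the natural tool, and the factor $2/\beta$ in the claim arises precisely from this choice of exponents. The $\max$ (rather than sum) structure of $B' = \max\{B,\, \tfrac{2}{\beta} C^{2-\beta}\}$ should then follow by tuning the Young's scaling parameter so that whichever of $B\,\E[Y]^\beta$ or $\tfrac{2}{\beta} C^{2-\beta}\,\E[Y]^\beta$ is larger cleanly absorbs the cross term contributed by the other.
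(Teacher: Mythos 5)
Your two preparatory steps are fine---$\mathbb{E}[Y_-^2]\le \mathbb{E}[X_-^2]\le B\,\mathbb{E}[Y]^\beta$ and the pointwise bound $Y_+^2\le C^{2-\beta}Y_+^\beta$---but the entire difficulty of the lemma sits in the ``balancing step'' that you leave unproved, and that step cannot be completed from the quantities you have retained. After Jensen and Cauchy--Schwarz your bound reads $\mathbb{E}[Y^2]\le \tfrac{2}{\beta}C^{2-\beta}\bigl(\mathbb{E}[Y]+\mathbb{E}[Y_-]\bigr)^\beta+\mathbb{E}[Y_-^2]$, and the only control you have on $\mathbb{E}[Y_-]$ is $\mathbb{E}[Y_-]\le\sqrt{\mathbb{E}[Y_-^2]}\le\sqrt{B}\,\mathbb{E}[Y]^{\beta/2}$. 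The exponent $\beta/2$ is too weak by a factor of two: when $\mathbb{E}[Y]$ is small, the resulting term of order $\mathbb{E}[Y]^{\beta^2/2}$ dominates $\mathbb{E}[Y]^\beta$, and no Young-type split can absorb it, since pairing $\mathbb{E}[Y_-]^\beta$ against $\mathbb{E}[Y_-^2]$ with the exponents $(2/\beta,\,2/(2-\beta))$ leaves behind a \emph{constant} remainder rather than one of order $\mathbb{E}[Y]^\beta$. A concrete witness: take $\beta=1$, $B=C=1$, and $Y=X$ equal to $-\sqrt{\epsilon}/2$ or $\sqrt{\epsilon}/2+2\epsilon$ with probability $1/2$ each; then $X$ is $(1,1)$-Bernstein and the lemma's right-hand side is $2\epsilon$, yet your intermediate upper bound $2C\,\mathbb{E}[Y_+]+\mathbb{E}[Y_-^2]$ is of order $\sqrt{\epsilon}$, so no subsequent manipulation of that bound can recover the claim.

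The missing idea is to avoid splitting $Y$ at zero altogether and instead compare $X$ and $Y$ through a single monotone function. The paper's proof linearizes the concave power via $z^\beta=c_\beta\inf_{\gamma>0}\bigl(z/\gamma+\gamma^{\beta/(1-\beta)}\bigr)$ with $c_\beta=(1-\beta)^{1-\beta}\beta^\beta$, picks $\gamma$ optimal for $z=\mathbb{E}[Y]$, and observes that $x\mapsto x^2-c_\beta B'x/\gamma$ is decreasing on $(-\infty,C]$ precisely when $B'\ge\tfrac{2}{\beta}C^{2-\beta}$ (using $\mathbb{E}[Y]\le C$). The pointwise ordering $X\le Y\le C$ then transfers directly to $\mathbb{E}[Y^2]-B'\mathbb{E}[Y]^\beta\le\mathbb{E}[X^2]-B'\mathbb{E}[X]^\beta\le 0$, the last inequality using $B'\ge B$ and the Bernstein condition for $X$; the case $\beta=1$ follows by a limit. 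Note in particular that the factor $2/\beta$ does not arise from $2/\beta\ge 1$ as harmless slack in a pointwise bound, as in your sketch, but as the monotonicity threshold that makes this comparison legitimate.
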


\begin{proof}
For $\beta \in (0,1)$ we will use the fact that
\[
  z^\beta
  ~=~ 
  c_\beta \inf_{\gamma > 0} \del*{\frac{z}{\gamma} +
   \gamma^\frac{\beta}{1-\beta}}
  \qquad \text{for any $z \geq 0$,}
\]
with $c_\beta =   (1-\beta)^{1-\beta} \beta^\beta$.
For $\gamma = \del*{\frac{1-\beta}{\beta} \ex[Y]}^{1-\beta}$ we therefore have
\begin{align}
\notag
\ex[X^2] - B' \ex[X]^\beta
&~\ge~
\ex[X^2] - B' c_\beta \del*{\frac{\ex [X]}{\gamma}  + \gamma^{\frac{\beta}{1-\beta}}}
~\ge~
\ex[Y^2] - B' c_\beta \del*{\frac{\ex [Y]}{\gamma}  + \gamma^{\frac{\beta}{1-\beta}}}
\\
\label{eq:tolimitme}
&~=~
\ex[Y^2] - B' \ex[Y]^\beta
,
\end{align}
where the second inequality holds because $x^2 - c_\beta B' x/\gamma$ is a decreasing function of $x \le C$ for $\gamma  \le \frac{c_\beta B'}{2 C}$, which is satisfied by the choice of $B'$.
This proves the lemma for $\beta \in (0,1)$. The claim for $\beta=1$ follows by taking the limit $\beta \to 1$ in \eqref{eq:tolimitme}.
\end{proof}

\section{Controller Regret Bound (Proof of Lemma~\ref{lem:controllerbd})}\label{sec:controllerbdpf}
We prove the lemma in two parts.

\subsection{Decomposing the Surrogate Regret}
Fix a comparator point $\u \in \bigcap_{t=1}^T \U_t$. We will first bound the surrogate regret
\[
  R_T^\eta(\u)
  ~\df~
  \sum_{t=1}^T \del*{
    \surr_t^\eta(\w_t) - \surr_t^\eta(\u)
  }
\]
for any $\eta \in \mathcal G$ not expired after $T$ rounds (see Definition~\ref{def:expired}). Note that by definition \eqref{eq:surrogate}, the surrogate loss $\surr_t^\eta(\w_t)$ of the controller is always zero, but we believe writing it helps interpretation. We will then use this surrogate regret bound to control the (non-surrogate) regret.

\bigskip\noindent
For the first half of this section, we fix a final time $T$, and a
grid-point $\eta \in \mathcal G$ that is still not expired after time
$T$ (see Definition~\ref{def:expired}). We redefine $a^\eta$ from
\eqref{eqn:wakeuptime} as follows:
\begin{definition}\label{def:tau.eta}
  We define the wakeup time of learning rate $\eta \in \mathcal G$ by
  \[
    a^\eta ~\df~ \inf \setc*{t \le T}{
      \eta > \frac{1}{2 \del*{\sum_{s=1}^{t-1} b_s \frac{B_{s-1}}{B_s} + B_{t-1}}}
    } \glb (T+1)
    .
  \]
\end{definition}
The difference with \eqref{eqn:wakeuptime} is that we now manually set to $T+1$ the
wakeup time of an $\eta$ that does not wake up during the first $T$
rounds. We do this so that $[1,a^\eta-1]$ and $[a^\eta, T]$ always
partition rounds $[1,T]$.

Our strategy will be to split the regret in three parts, which we will analyse separately.
\begin{proposition}\label{prop:threeway}
  We have
  \[
    R_T^\eta(\u)
    ~=~
    \underbrace{
      \sum_{t=1}^{a^\eta-1} \del*{
        \surr_t^\eta(\w_t) - \surr_t^\eta(\u)
      }
    }_\text{$\surr^\eta$-regret of controller w.r.t.\ $\u$}
    +
    \underbrace{
      \sum_{t = a^\eta}^T \del*{
        \surr_t^\eta(\w_t) - \surr_t^\eta(\w_t^\eta)
      }
    }_\text{$\surr^\eta$-regret of controller w.r.t.\ $\eta$-expert}
    +
    \underbrace{
      \sum_{t = a^\eta}^T \del*{
        \surr_t^\eta(\w_t^\eta) - \surr_t^\eta(\u)
      }
    }_\text{$\surr^\eta$-regret of $\eta$-expert w.r.t.\ $\u$}.
  \]
\end{proposition}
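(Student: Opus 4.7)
The statement is an algebraic identity, so the plan is simply to verify it by splitting the summation range and inserting a telescoping term. Starting from the definition
\[
R_T^\eta(\u) ~=~ \sum_{t=1}^T \bigl(\surr_t^\eta(\w_t) - \surr_t^\eta(\u)\bigr),
\]
I would split the sum at the wakeup time, writing it as $\sum_{t=1}^{a^\eta-1} + \sum_{t=a^\eta}^T$. If $a^\eta = T+1$ (i.e., the $\eta$-expert never wakes up within the first $T$ rounds), the second sum is empty by convention and the identity still holds, so this case is handled uniformly.

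For the second sum, the plan is to add and subtract $\surr_t^\eta(\w_t^\eta)$ inside each summand:
\[
\surr_t^\eta(\w_t) - \surr_t^\eta(\u) ~=~ \bigl(\surr_t^\eta(\w_t) - \surr_t^\eta(\w_t^\eta)\bigr) + \bigl(\surr_t^\eta(\w_t^\eta) - \surr_t^\eta(\u)\bigr).
\]
Summing over $t = a^\eta, \ldots, T$ and combining with the first sum produces the three-way decomposition in the statement. The interpretive labels (controller vs.\ $\eta$-expert regret) are not part of the formal claim, so nothing further needs verification.

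There is no substantive obstacle here: the proposition is a bookkeeping step that prepares the way for Lemma~\ref{lem:controllerbd}, where the three pieces will be bounded by (respectively) the tiny contribution from rounds before activation of $\eta$, the sleeping-experts regret of the controller against the $\eta$-expert, and the $\eta$-expert's own surrogate regret against $\u$. The only care needed is with the edge case $a^\eta > T$, which the redefinition of $a^\eta$ in Definition~\ref{def:tau.eta} explicitly accommodates by capping at $T+1$ so that the two index ranges partition $\{1,\ldots,T\}$.
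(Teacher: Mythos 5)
Your proposal is correct and matches the paper's (one-line) argument: the paper likewise notes that the capped definition of $a^\eta$ makes $\w_t^\eta$ defined on all of $[a^\eta,T]$, so the sums can be split and the telescoping term $\surr_t^\eta(\w_t^\eta)$ inserted. Your handling of the edge case $a^\eta = T+1$ is exactly the point of Definition~\ref{def:tau.eta}, so nothing is missing.
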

\begin{proof}
  The choice of $a^\eta$ makes all $\w_t^\eta$ defined. We can hence merge the sums.
\end{proof}
We think of the three sums as follows. The first sum is ``startup nuisance'', and it will turn out to be tiny. The second sum is controlled by the controller, and it only depends on its construction. The third sum is controlled by the $\eta$-experts, and it only depends on their construction.

We will now proceed to bound the three parts above. First, we reduce to
the clipped surrogate losses \eqref{eq:clipsurr} at almost negligible cumulative cost using the clipping technique of \cite{cutkosky2019artificial}.
\begin{lemma}[Clipping in the controller is
cheap]\label{lem:controller.cheap.clipping}
  \begin{align*}
    &
    \underbrace{
      \sum_{t=1}^{a^\eta-1} \del*{
        \surr_t^\eta(\w_t) - \surr_t^\eta(\u)
      }
    }_\text{$\surr^\eta$-regret of controller w.r.t.\ $\u$}
    +
    \underbrace{
      \sum_{t = a^\eta}^T \del*{
        \surr_t^\eta(\w_t) - \surr_t^\eta(\w_t^\eta)
      }
      }_\text{$\surr^\eta$-regret of controller w.r.t.\ $\eta$-expert}
    \\
    &~\le~
    \underbrace{
      \sum_{t=1}^{a^\eta-1} \del*{
        \clipsurr_t^\eta(\w_t) - \clipsurr_t^\eta(\u)
      }
    }_\text{$\clipsurr^\eta$-regret of controller w.r.t.\ $\u$}
    +
    \underbrace{
      \sum_{t = a^\eta}^T \del*{
        \clipsurr_t^\eta(\w_t) - \clipsurr_t^\eta(\w_t^\eta)
      }
    }_\text{$\clipsurr^\eta$-regret of controller w.r.t.\ $\eta$-expert}
    + \eta B_T
  \end{align*}
\end{lemma}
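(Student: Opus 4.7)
The plan is to note that both sums on each side have their $\w_t$-term equal to zero, since $\surr_t^\eta(\w_t) = \clipsurr_t^\eta(\w_t) = 0$ by the definitions in \eqref{eq:surrogate} and \eqref{eq:clipsurr} (both surrogates vanish at $\u = \w_t$). Consequently, the inequality to prove collapses to the single bound
\[
\sum_{t=1}^T \del*{\clipsurr_t^\eta(\v_t) - \surr_t^\eta(\v_t)} ~\le~ \eta B_T,
\]
where $\v_t = \u$ for $t < a^\eta$ and $\v_t = \w_t^\eta$ for $t \ge a^\eta$. So the task reduces to a per-round, pointwise comparison of the two quadratic surrogates.

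For the pointwise step, I would write $\alpha_t := \eta (\v_t - \w_t)^\top \grad_t$ and $\rho_t := B_{t-1}/B_t \in [0,1]$, so that $\surr_t^\eta(\v_t) = \alpha_t + \alpha_t^2$ and $\clipsurr_t^\eta(\v_t) = \rho_t \alpha_t + \rho_t^2 \alpha_t^2$. Direct expansion gives
\[
\clipsurr_t^\eta(\v_t) - \surr_t^\eta(\v_t)
~=~ -(1-\rho_t)\alpha_t - (1-\rho_t^2)\alpha_t^2
~\le~ -(1-\rho_t)\alpha_t
~\le~ (1-\rho_t)\,|\alpha_t|,
\]
where dropping the non-positive quadratic term $-(1-\rho_t^2)\alpha_t^2$ is legitimate since $\rho_t \le 1$.

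Next, I want to control $|\alpha_t|$ by the per-round range bound $b_t$. Here the semi-norm $\|\grad_t\|_t = \max_{\w \in \U_t} |(\w - \w_t)^\top \grad_t|$ is the right tool: since $\v_t \in \U_t$ in either case (by the assumption $\u \in \bigcap_t \U_t$ for $t < a^\eta$, and by the explicit projection step in Algorithm~\ref{alg:MetaGradExpert} for $t \ge a^\eta$), we have $|\alpha_t| \le \eta \|\grad_t\|_t \le \eta b_t$. Plugging this back in,
\[
\clipsurr_t^\eta(\v_t) - \surr_t^\eta(\v_t)
~\le~ \eta (1-\rho_t) b_t
~=~ \eta (B_t - B_{t-1}) \frac{b_t}{B_t}
~\le~ \eta (B_t - B_{t-1}),
\]
using $b_t \le B_t$ in the last step.

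The conclusion is then an immediate telescope: $\sum_{t=1}^T (B_t - B_{t-1}) = B_T - B_0 = B_T$, whence the total overhead is $\eta B_T$ as claimed. The only potentially awkward point is the uniform treatment of $\v_t$ across the two sums — one has to spell out that both $\u$ and $\w_t^\eta$ lie in $\U_t$ so that the semi-norm $\|\cdot\|_t$ controls $|\alpha_t|$ — but no real obstacle arises; the clipping cost decouples cleanly round by round.
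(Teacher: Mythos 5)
Your proposal is correct and follows essentially the same route as the paper's proof: both reduce the claim to a per-round comparison of the clipped and unclipped surrogate evaluated at the comparator point (either $\u$ or $\w_t^\eta$), drop the nonpositive difference of quadratic terms, bound the linear difference by $\eta \frac{B_t - B_{t-1}}{B_t} b_t \le \eta (B_t - B_{t-1})$ using the range bound and $b_t \le B_t$, and telescope to $\eta B_T$. Your explicit observation that $\surr_t^\eta(\w_t) = \clipsurr_t^\eta(\w_t) = 0$ is only a cosmetic repackaging of the same cancellation the paper performs in its displayed algebra.
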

\begin{proof}
  For any $\u \in \U_t$ (which includes the case $\u = \w_t^\eta$), we may use the definition of the range bound \eqref{eqn:rangebound}, the surrogate loss \eqref{eq:surrogate} and its clipped version \eqref{eq:clipsurr} to find
  \begin{align*}
    &
    \del*{
    \surr_t^\eta(\w_t) - \surr_t^\eta(\u)
    }
    - \del*{
    \clipsurr_t^\eta(\w_t) - \clipsurr_t^\eta(\u)
      }
    \\
    &~=~
      \eta \frac{B_t - B_{t-1}}{B_t} (\w_t - \u)^\top \grad_t
      -
      \underbrace{
      \eta^2
      \frac{B_t^2 - B_{t-1}^2}{B_t^2}
      \big((\u-\w_t)^\top \grad_t\big)^2
      }_{\ge 0}
    \\
    &~\le~
      \eta \frac{B_t - B_{t-1}}{B_t} b_t
      ~\le~
      \eta \del*{B_t - B_{t-1}}
      .
  \end{align*}
Summing over rounds completes the proof.
\end{proof}
Next we deal with the clipped surrogate regret. We first handle the case of the early rounds before $a^\eta$. The key idea is that when $\eta$ has not yet woken up, it is very small. Since the surrogate loss scales with $\eta$, it is small as well, even in sum.

\begin{lemma}\label{lem:early.rounds}
  For any $\eta$ and any $\u \in \bigcap_{s=1}^{a^\eta-1} \mathcal W_s$
  \[
    \underbrace{
      \sum_{t=1}^{a^\eta-1} \del*{
        \clipsurr_t^\eta(\w_t) - \clipsurr_t^\eta(\u)
      }
    }_\text{$\clipsurr^\eta$-regret of controller w.r.t.\ $\u$}
    ~\le~
    \frac{1}{2}
    .
  \]
\end{lemma}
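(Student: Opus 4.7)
\textbf{Proof plan for Lemma~\ref{lem:early.rounds}.}

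The plan is to exploit two facts: first, that $\clipsurr_t^\eta(\w_t)=0$ by definition \eqref{eq:clipsurr}, so the regret reduces to $-\sum_{t=1}^{a^\eta-1}\clipsurr_t^\eta(\u)$; second, that for $t<a^\eta$ the learning rate $\eta$ has not yet satisfied the wakeup condition in Definition~\ref{def:tau.eta}, so $\eta$ times the cumulative ``size'' of the clipped gradients is at most $1/2$.

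First I would expand
\[
\sum_{t=1}^{a^\eta-1}\!\bigl(\clipsurr_t^\eta(\w_t)-\clipsurr_t^\eta(\u)\bigr)
= -\sum_{t=1}^{a^\eta-1}\eta(\u-\w_t)^\top\clipgrad_t
  -\sum_{t=1}^{a^\eta-1}\bigl(\eta(\u-\w_t)^\top\clipgrad_t\bigr)^2,
\]
and drop the non-positive quadratic term. It remains to bound the linear part by $\eta\sum_{t=1}^{a^\eta-1}|(\u-\w_t)^\top\clipgrad_t|$. Here I would use the paper's own semi-norm bound: since $\u\in\U_t$ and $\clipgrad_t=\tfrac{B_{t-1}}{B_t}\grad_t$,
\[
|(\u-\w_t)^\top\clipgrad_t|
\le \|\clipgrad_t\|_t
= \tfrac{B_{t-1}}{B_t}\|\grad_t\|_t
\le b_t\tfrac{B_{t-1}}{B_t}.
\]

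Next, I would invoke the wakeup rule at $t=a^\eta-1$, which gives
\[
\eta \le \frac{1}{2\bigl(\sum_{s=1}^{a^\eta-2} b_s \tfrac{B_{s-1}}{B_s}+B_{a^\eta-2}\bigr)}.
\]
Finally, I would add the last term into the sum using $b_{a^\eta-1}\tfrac{B_{a^\eta-2}}{B_{a^\eta-1}}\le B_{a^\eta-2}$, yielding
\[
\sum_{t=1}^{a^\eta-1} b_t\tfrac{B_{t-1}}{B_t}
\le \sum_{t=1}^{a^\eta-2} b_t\tfrac{B_{t-1}}{B_t}+B_{a^\eta-2}
\le \frac{1}{2\eta},
\]
so that $\eta\sum_{t=1}^{a^\eta-1}b_t\tfrac{B_{t-1}}{B_t}\le 1/2$, as required.

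The only non-routine step is the ``inclusion'' of the final summand $t=a^\eta-1$ into the budget implied by the wakeup condition, which the bound $b_{a^\eta-1}B_{a^\eta-2}/B_{a^\eta-1}\le B_{a^\eta-2}$ handles cleanly; everything else is bookkeeping and use of the clipped-gradient semi-norm. One small corner case to verify is $a^\eta=1$ (the sum is empty, so the bound is trivial) and $a^\eta=T+1$ when $\eta$ never wakes up — the same estimate goes through verbatim because the wakeup condition is then false for every $t\le T$.
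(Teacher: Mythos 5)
Your proposal is correct and follows essentially the same route as the paper's proof: drop the nonpositive quadratic term, bound the linear part by $\eta\, b_t B_{t-1}/B_t$ via the semi-norm bound, and use that the wakeup condition of Definition~\ref{def:tau.eta} fails for all $t < a^\eta$. The only difference is presentational: you make explicit the absorption of the final summand $t = a^\eta-1$ via $b_{a^\eta-1}B_{a^\eta-2}/B_{a^\eta-1}\le B_{a^\eta-2}$, a step the paper leaves implicit in its one-line chain of inequalities.
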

\begin{proof}
  By definition of the clipped surrogate loss $\clipsurr_t^\eta$ in \eqref{eq:clipsurr}, the range bound $b_t$ in \eqref{eqn:rangebound} and the wakeup time $a_t$ in Definition~\ref{def:tau.eta},
  \[
    \sum_{t=1}^{a^\eta-1} \clipsurr_t^\eta(\w_t) - \clipsurr_t^\eta(\u)
    ~\le~
    \sum_{t=1}^{a^\eta-1} \eta (\w_t - \u)^\top \clipgrad_t
    ~\le~
    \sum_{t : \eta \le
    \frac{1}{2 \del*{\sum_{s=1}^{t-1} b_s \frac{B_{s-1}}{B_s} + B_{t-1}}}} \eta b_t \frac{B_{t-1}}{B_t}
  ~\le~
  \frac{1}{2}
  .
\]
\end{proof}

In the next subsection we deal with the middle sum in Proposition~\ref{prop:threeway}. This part only depends on the construction of the controller. We deal with the final sum in the section after that.

\subsection{Controller surrogate regret bound}

The controller is a specialists algorithm, which sometimes resets. We call the time segments between resets epochs. In every epoch, the controller guarantees a certain specialists regret bound w.r.t.\ any $\eta$-expert in its grid.

The $\eta$-expert that we need can be active during several epochs. Our strategy, following \citet{lipschitz.metagrad}, will be the following. We incur the controller regret in the last and one-before-last epochs. We further separately prove, using the reset condition, that the total regret in all earlier epochs is tiny.
\begin{lemma}\label{lem:controller.epoch.bd}
  Consider an epoch starting at time $\tau+1$ and fix any later time $t$ in that same epoch. Fix any grid point $\eta \in \mathcal G$ not expired after $t$ rounds (meaning $\eta \le \frac{1}{2 B_{t-1}}$). Then the MetaGrad controller guarantees
  \[
    \underbrace{
      \sum_{s \in (\tau, t] :s \ge a^\eta}  \del*{
        \clipsurr_t^\eta(\w_t) - \clipsurr_t^\eta(\w_t^\eta)
      }
    }_\text{specialist $\clipsurr^\eta$-regret of controller w.r.t.\ $\eta$-expert on $(\tau, t]$}
    ~\le~
    \ln \ceil*{2 \origlog_2 \del*{\sum_{s=1}^{t-1}
        \frac{b_s }{B_s} + 1}}_+.
  \]
\end{lemma}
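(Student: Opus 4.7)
The plan is to adapt the standard specialist exponential-weights analysis (in the spirit of Squint~\citep{squint}) to the tilted controller predictions and clipped surrogate losses used here. I would organize the argument into three parts: a one-step mix-loss inequality coming from the tilted average, a single-expert telescoping within the epoch, and an upper bound on the final weight by counting distinct $\eta$-experts alive during the epoch.

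First I would establish the per-round inequality
\[
  \sum_{\eta' \in \activeset_s} p_s(\eta') e^{-\clipsurr_s^{\eta'}(\w_s^{\eta'})} ~\le~ \Phi_s, \qquad \Phi_s ~:=~ \sum_{\eta' \in \activeset_s} p_s(\eta').
\]
Set $x_s^{\eta'} := \eta'(\w_s^{\eta'} - \w_s)^\top \clipgrad_s$. Membership $\eta' \in \activeset_s$ forces $\eta' \le 1/(2 B_{s-1})$, while the clipping combined with the semi-norm definition of $b_s$ gives $|(\w_s^{\eta'} - \w_s)^\top \clipgrad_s| \le \|\grad_s\|_s \cdot B_{s-1}/B_s \le B_{s-1}$, so $|x_s^{\eta'}| \le 1/2$. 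On this range the elementary inequality $e^{-x - x^2} \le 1 - x$ holds, yielding $e^{-\clipsurr_s^{\eta'}(\w_s^{\eta'})} \le 1 - x_s^{\eta'}$. Summing the $p_s(\eta')$-weighted versions and using that the tilted prediction rule on line~\ref{line:tilted_ewa} makes $\sum_{\eta'} p_s(\eta') \eta' (\w_s^{\eta'} - \w_s)^\top \clipgrad_s = 0$ delivers the stated inequality.

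Next, feeding this bound into the denominator of the exponential-weights update on line~\ref{line:expw} shows $p_{s+1}(\eta) \ge p_s(\eta) e^{-\clipsurr_s^\eta(\w_s^\eta)}$ for each $\eta \in \activeset_s$. Put $a' := \max(a^\eta, \tau+1)$: at this time $p_{a'}(\eta) = 1$, either because of the post-reset initialization (if $a^\eta \le \tau$) or the fresh activation rule (if $a^\eta > \tau$). Telescoping while $\eta$ stays active yields $p_{t+1}(\eta) \ge \exp\!\del*{-\sum_{s \in (\tau, t]:\, s \ge a^\eta} \clipsurr_s^\eta(\w_s^\eta)}$, and since $\clipsurr_s^\eta(\w_s) = 0$ this is exactly $\sum_{s \in (\tau, t]:\, s \ge a^\eta} (\clipsurr_s^\eta(\w_s) - \clipsurr_s^\eta(\w_s^\eta)) \le \log p_{t+1}(\eta)$.

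Finally, I would bound $p_{t+1}(\eta) \le \Phi_{t+1}$, and then bound $\Phi_{t+1}$ by counting distinct $\eta$-values that appear in some $\activeset_s$ within the epoch. The exponential update leaves $\Phi$ invariant on the currently active set, retirements only decrease it, and each freshly activated $\eta$-expert adds exactly $1$; hence $\Phi_{t+1} \le |\bigcup_{s \in (\tau, t+1]} \activeset_s|$. Since both the ``floor'' $R_s := \sum_{s'=1}^{s-1} b_{s'} B_{s'-1}/B_{s'} + B_{s-1}$ and the ``ceiling'' $B_{s-1}$ are non-decreasing in $s$, this union is contained in $\mathcal G \cap (1/(2R_{t+1}), 1/(2B_\tau)]$, which has at most $\ceil*{\origlog_2(R_{t+1}/B_\tau)}_+$ dyadic grid points. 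The main remaining obstacle is to show $R_{t+1}/B_\tau$ is of order $(\sum_{s=1}^{t-1} b_s/B_s + 1)^2$, for which I would use the no-reset hypothesis \eqref{eqn:resetcondition} to control $B_t/B_\tau \le \sum_{s=1}^{t-1} b_s/B_s + 1$ and then apply this estimate once more inside the sum defining $R_{t+1}$, producing the factor of~$2$ inside the ceiling.
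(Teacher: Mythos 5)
Your first two steps are correct and are essentially the paper's own argument made self-contained: the paper invokes the specialists bound of Freund et al.\ plus the prod bound $e^{x-x^2}\le 1+x$, which is exactly your observation that the tilted prediction makes $\sum_{\eta'} p_s(\eta')\eta'(\w_s^{\eta'}-\w_s)^\top\clipgrad_s=0$, so the weighted sum of $e^{-\clipsurr_s^{\eta'}(\w_s^{\eta'})}$ is at most $\Phi_s$ and the update preserves mass on $\activeset_s$; the range check $|\eta'(\w_s^{\eta'}-\w_s)^\top\clipgrad_s|\le 1/2$ is also right. (One small caveat: if a reset is triggered after round $t$, line~\ref{line:expw} is not executed, so you should telescope to the pre-reset ``virtual'' weight, or equivalently stop at the total mass over $\activeset_t$; similarly $p_{t+1}(\eta)\le\Phi_{t+1}$ needs $\eta\in\activeset_{t+1}$, which can fail if $\eta$ expires after round $t$.)

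The genuine gap is in your final counting step. By passing to $\Phi_{t+1}$ and the union of active sets through round $t+1$, you bring in the lower endpoint of $\activeset_{t+1}$, i.e.\ $R_{t+1}=\sum_{s=1}^{t}b_s B_{s-1}/B_s+B_t$, and to recover the stated bound you claim $B_t/B_\tau\le\sum_{s=1}^{t-1}b_s/B_s+1$ from the no-reset hypothesis. But being in the epoch only guarantees that no reset was triggered after rounds $\tau+1,\dots,t-1$, which via \eqref{eqn:resetcondition} controls $B_{t-1}$ (giving $B_{t-1}\le B_\tau\sum_{s=1}^{t-1}b_s/B_s$); it says nothing about $b_t$ or $B_t$. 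In fact $t$ may be the \emph{last} round of the epoch, with a reset triggered after round $t$ precisely because $B_t$ dwarfs everything seen before; this is exactly how the lemma is used in Lemma~\ref{lem:overall.controller.regret}, where it is applied to the one-before-last epoch ending at $\tau_2$. In that case $B_t/B_\tau$ can be arbitrarily large while $\sum_{s=1}^{t-1}b_s/B_s+1$ stays bounded, so your estimate is false and the claimed bound does not follow (and even absent a reset, your route only yields the sum up to $t$, not $t-1$). The fix is to stop one step earlier, as the paper does: bound the telescoped weight by the mass over $\activeset_t$, which is preserved by the round-$t$ update, so that only the lower endpoint of $\activeset_t$ enters the count --- this involves $b_s,B_s$ for $s\le t-1$ only, because activity at round $t$ is predictable from the past --- and then apply $B_{t-1}\le B_\tau\sum_{s=1}^{t-1}b_s/B_s$ from the reset condition to obtain the factor $2$ inside the ceiling.
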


Note that it is not important what the $\eta$-experts do at this point,
the only feature that we use in the proof is that $\w_t^\eta \in \U_t$
for each active $\eta$. Also, note that the right-hand side is $O(\ln
\ln T)$. 

\begin{proof}
  We first observe that Algorithm~\ref{alg:controller}, as far as it maintains the weights $p_t(\eta)$ between resets, implements Specialists Exponential Weights \citep[called SBayes by][]{fssw-se-97}. In our particular case it is applied to specialists $\eta \in \mathcal G$, loss function $\eta \mapsto \surr_t^\eta(\w_t^\eta)$, active set $\activeset_t \subseteq \mathcal G$ and uniform (improper) prior on $\mathcal G$. The specialists regret bound \citep[Theorem~1]{fssw-se-97} directly yields\footnote{Our improper prior does not cause any trouble here, because renormalizing the prior, in hindsight, to the finite set of $\eta$-experts that were ever active preserves the algorithm's output and hence its regret bound.}
\[
  \sum_{s \in (\tau, t] :s \ge a^\eta} - \ln \ex_{p_t(\eta)} \sbr*{
    e^{- \clipsurr_t^\eta(\w_t^\eta)}
  }
  ~\le~
  \ln \card*{
    \bigcup_{s \in (\tau, t]} \mathcal A_s
  }
  +
  \sum_{s \in (\tau, t] :s \ge a^\eta}  \clipsurr_t^\eta(\w_t^\eta)
  .
\]
Algorithm~\ref{alg:controller} further chooses the controller iterate
\[
  \w_t ~=~ \frac{\ex_{p_t(\eta)} \sbr*{\eta \w_t^\eta}}{\ex_{p_t(\eta)} \sbr*{\eta }}
\]
which we claim ensures that
\[
  0 ~\le~
  - \ln \ex_{p_t(\eta)} \sbr*{
    e^{-\clipsurr_t^\eta(\w_t^\eta)}
  }
  .
\]
To see why, we use the definition \eqref{eq:clipsurr} of clipped loss and gradient to obtain $(\w_t-\w_t^\eta)^\top \clipgrad_t \ge -B_{t-1}$, and we further use that $p_t$ is supported on $\activeset_t$, which implies that $\eta \le \frac{1}{2 B_{t-1}}$. Together these license\footnote{Here we motivate our controller algorithm using the loss function $\eta \mapsto \clipsurr_t^\eta(\w_t^\eta)$. One can alternatively base it on the loss function $\eta \mapsto - \ln \del*{1 + \eta (\w_t - \w_t^\eta)^\top \clipgrad_t}$ \citep[These two versions are called Squint and iProd respectively by][]{squint}. As the second is always smaller (by the prod bound), using it would give a strictly tighter theorem here. We do not see a way to ultimately harvest this gain, as we would still need to invoke the prod bound at a later point in the analysis to express our regret bound in second-order form. We chose to present the ``Squint-style'' version here as we believe it is the more intuitive of the two.}
the ``prod bound'' ($e^{x-x^2} \le 1+x$ for $x \ge -\frac{1}{2}$) yielding
\[
- \ln \ex_{p_t(\eta)} \sbr*{
    e^{-\clipsurr_t^\eta(\w_t^\eta)}
  }
  ~\ge~
  - \ln \ex_{p_t(\eta)} \sbr*{
    1 + \eta (\w_t-\w_t^\eta)^\top \clipgrad_t
  }
  ~=~
  0.
\]
Inserting $\surr_t^\eta(\w_t) = 0$, this implies
\[
  \sum_{s \in (\tau, t] :s \ge a^\eta}  \del*{
    \clipsurr_t^\eta(\w_t)
    - \clipsurr_t^\eta(\w_t^\eta)
  }
  ~\le~
  \ln \card*{
    \bigcup_{s \in (\tau, t]} \mathcal A_s
  }
  .
\]
It remains to bound the maximum number of active grid-points during any
epoch. Recall from \eqref{eqn:activeexperts} that the active set at any time $t$ is
\[
  \activeset_t
  ~=~
  \intoc*{
    \frac{1}{2 \del*{\sum_{s=1}^{t-1} b_s \frac{B_{s-1}}{B_s} + B_{t-1}}},
    \frac{1}{2 B_{t-1}}
  }
  \cap \mathcal G
  .
\]
Both endpoints are decreasing with $t$. Since our epoch starts at time $\tau +1$, the maximal $\eta$ active in the epoch is
\[
  \max \setc[\Big]{\eta \in \mathcal G }{ \eta \leq \frac{1}{2 B_{\tau}}}.
\]
As we consider the part of the epoch up to time $t \geq \tau +1$, the
smallest $\eta$ active in the epoch is
\[
  \min \setc[\bigg]{\eta \in \mathcal G }{\eta \geq \frac{1}{2 \del*{\sum_{s=1}^{t-1} b_s
        \frac{B_{s-1}}{B_s} + B_{t-1}}}
  }
  .
\]
And since $\mathcal G$ is exponentially spaced with base $2$, the maximum number of $\eta$ that could possibly have been active is
\begin{align*}
  \ceil*{\origlog_2 \frac{\del*{\sum_{s=1}^{t-1} b_s \frac{B_{s-1}}{B_s} +
  B_{t-1}}}
              {B_{\tau}}}
  &\leq
  \ceil*{\origlog_2 \frac{B_{t-1}\del*{\sum_{s=1}^{t-1}
  \frac{b_s }{B_s} + 1}}
              {B_{\tau}}}\\
  &\leq
  \ceil*{\origlog_2 \del*{\del*{\sum_{s=1}^{t-1} \frac{b_s}{B_s}}\del*{\sum_{s=1}^{t-1}
  \frac{b_s }{B_s} + 1}}}\\
  &\leq
  \ceil*{2 \origlog_2 \del*{\sum_{s=1}^{t-1}
  \frac{b_s }{B_s} + 1}}_+,
\end{align*}
where the second inequality holds because of the reset condition
\eqref{eqn:resetcondition}. All together, we conclude that our prior
costs for the improper (uniform on $\mathcal G$) prior are upper bounded
by
\begin{equation}\label{eqn:priorcosts}
  \ln \card*{
    \bigcup_{s \in (\tau, t]} \mathcal A_s
  }
  ~\leq~
  \ln \ceil*{2 \origlog_2 \del*{\sum_{s=1}^{t-1}
  \frac{b_s }{B_s} + 1}}_+.
\end{equation}

\end{proof}

We now have a specialists regret bound that we can apply to each epoch.

\begin{lemma}[Total regret in far past is tiny]
  \label{lem:tiny.past}
  Consider two consecutive epochs, starting after $\tau_1 < \tau_2$, and let $\eta$ be not expired after $\tau_1$ rounds. Then
  \[
    \sum_{s \in [1,\tau_1], s \ge a^\eta}
    \del*{
      \clipsurr_s^\eta(\w_s)
      -\clipsurr_s^\eta(\w_s^\eta)
    }
    ~\le~
    \eta B_{\tau_2}.
  \]
\end{lemma}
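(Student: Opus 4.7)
The plan is to bound each per-round contribution by a manifestly non-negative quantity of the form $\eta b_s B_{s-1}/B_s$, and then combine these into the target $\eta B_{\tau_2}$ using the reset condition \eqref{eqn:resetcondition} at $\tau_2$. The argument is essentially arithmetic; the only subtlety is identifying the correct per-round surrogate bound that matches the quantity appearing in the reset condition.

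First I would evaluate the per-round clipped surrogate regret. Since $\clipsurr_s^\eta$ is centred so that $\clipsurr_s^\eta(\w_s)=0$, the definition \eqref{eq:clipsurr} gives
\[
  \clipsurr_s^\eta(\w_s) - \clipsurr_s^\eta(\w_s^\eta)
  ~=~ -\eta (\w_s^\eta - \w_s)^\top \clipgrad_s - \bigl(\eta(\w_s^\eta - \w_s)^\top \clipgrad_s\bigr)^2
  ~\le~ -\eta(\w_s^\eta - \w_s)^\top \clipgrad_s,
\]
dropping the non-positive quadratic. Since the $\eta$-expert projects onto $\U_s$ we have $\w_s^\eta\in\U_s$, and combined with $\clipgrad_s=(B_{s-1}/B_s)\grad_s$ and the definition $b_s\ge \|\grad_s\|_s=\max_{\w\in\U_s}|(\w-\w_s)^\top\grad_s|$ of the effective range, each per-round term is bounded by $\eta b_s B_{s-1}/B_s$, which is non-negative.

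Next I would sum over $s\in[1,\tau_1]$ with $s\ge a^\eta$. Because the per-round upper bound is non-negative, extending the sum to all $s\in[1,\tau_1]$ only weakens the bound; using $B_{s-1}\le B_{\tau_1}$ for $s\le\tau_1$ then yields
\[
  \sum_{s\in[1,\tau_1],\,s\ge a^\eta}\!\!\bigl(\clipsurr_s^\eta(\w_s)-\clipsurr_s^\eta(\w_s^\eta)\bigr)
  ~\le~ \eta\sum_{s=1}^{\tau_1}\frac{B_{s-1}}{B_s}b_s
  ~\le~ \eta B_{\tau_1}\sum_{s=1}^{\tau_1}\frac{b_s}{B_s}.
\]

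Finally, because a reset occurs at $\tau_2$ relative to the previous reset at $\tau_1$, the reset condition \eqref{eqn:resetcondition} gives $B_{\tau_2}>B_{\tau_1}\sum_{s=1}^{\tau_2}b_s/B_s\ge B_{\tau_1}\sum_{s=1}^{\tau_1}b_s/B_s$, which combined with the previous display yields the claim. The step I expected to be the main obstacle was finding a per-round upper bound that matches the quantity in the reset condition. Using the tight semi-norm $b_s$ (rather than a crude Hölder bound based on the diameter and $\|\grad_s\|_2$) and exploiting the sign of the quadratic term turn out to deliver exactly the right expression.
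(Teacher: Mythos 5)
Your proposal is correct and follows essentially the same route as the paper's own (one-display) proof: use $\clipsurr_s^\eta(\w_s)=0$, drop the non-positive quadratic, bound the linear term by $\eta b_s B_{s-1}/B_s$ via $\w_s^\eta\in\U_s$ and the definition of $b_s$, pull out $B_{\tau_1}$, and invoke the reset condition \eqref{eqn:resetcondition} at $\tau_2$. Your extra remarks (non-negativity of the per-round bound when extending the sum beyond $s\ge a^\eta$) only make explicit steps the paper leaves implicit.
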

\begin{proof}
\[
  -\sum_{s \in [1,\tau_1], s \ge a^\eta}
  \clipsurr_s^\eta(\w_s^\eta)
  ~\le~
  \eta
  \sum_{s=1}^{\tau_1} b_s \frac{B_{s-1}}{B_s}
  ~\le~
  \eta
  B_{\tau_1}
  \sum_{s=1}^{\tau_1} \frac{b_s}{B_s}
  ~\le~
  \eta
  B_{\tau_1}
  \sum_{s=1}^{\tau_2} \frac{b_s}{B_s}
  ~\le~
  \eta
  B_{\tau_2},
\]
where the last inequality is the reset condition \eqref{eqn:resetcondition} at time $\tau_2$.
\end{proof}
We are now ready to compose the previous two lemmas to obtain
the following result:
\begin{lemma}[Overall controller specialists regret
bound]\label{lem:overall.controller.regret}
  Let $\eta$ be not expired after $T$ rounds. Then
\begin{equation}\label{eqn:controllersleepingregret}
  \sum_{t= a^\eta}^T
  \del*{
    \clipsurr_t^\eta(\w_t)
    - \clipsurr_t^\eta(\w_t^\eta)
  }
  ~\le~
  \eta B_{T}
  +
  2\ln \ceil*{2 \origlog_2 \del*{\sum_{t=1}^{T-1}
    \frac{b_t }{B_t} + 1}}.
\end{equation}
\end{lemma}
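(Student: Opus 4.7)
The plan is to partition the sum $\sum_{t=a^\eta}^T \del*{\clipsurr_t^\eta(\w_t) - \clipsurr_t^\eta(\w_t^\eta)}$ according to the controller's epoch structure, using Lemma~\ref{lem:controller.epoch.bd} to pay a per-epoch price for the last two epochs and Lemma~\ref{lem:tiny.past} to collapse all earlier epochs into the single $\eta B_T$ slack term.

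Let $0 = \tau_0 < \tau_1 < \cdots < \tau_k \le T$ denote the reset times triggered by \eqref{eqn:resetcondition}, so the epochs are $(\tau_{i-1},\tau_i]$ for $i=1,\ldots,k$ plus the current (still open) epoch $(\tau_k, T]$. Assume first that $k \ge 1$. I would split the sum at $\tau_{k-1}$ and $\tau_k$ into three pieces: the ``far past'' $\{s : a^\eta \le s \le \tau_{k-1}\}$, the one-before-last epoch $(\tau_{k-1}, \tau_k]$, and the current epoch $(\tau_k, T]$. For the latter two, apply Lemma~\ref{lem:controller.epoch.bd} with $(\tau,t) = (\tau_{k-1}, \tau_k)$ and $(\tau_k, T)$ respectively; this is legal because $\eta$ is not expired after $T$ rounds, hence also not after $\tau_k$ rounds since $B_t$ is non-decreasing. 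Each application yields a $\ln \ceil*{2 \origlog_2 \del*{\sum_{s=1}^{t-1} b_s/B_s + 1}}_+$ bound, and this quantity is monotonically non-decreasing in $t$, so the sum of the two contributions is at most $2 \ln \ceil*{2 \origlog_2 \del*{\sum_{t=1}^{T-1} b_t/B_t + 1}}$, matching the second term of the target.

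For the far-past piece I apply Lemma~\ref{lem:tiny.past} once, with $(\tau_1,\tau_2) = (\tau_{k-1}, \tau_k)$. The lemma directly gives $\sum_{s \in [1,\tau_{k-1}],\, s \ge a^\eta} \del*{\clipsurr_s^\eta(\w_s) - \clipsurr_s^\eta(\w_s^\eta)} \le \eta B_{\tau_k} \le \eta B_T$ in one shot, regardless of how many earlier epochs $\tau_1,\ldots,\tau_{k-2}$ existed in between. The lemma's non-expiration precondition at $\tau_{k-1}$ again follows from non-expiration at $T$ and monotonicity of $B$.

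The corner case $k \le 1$ (zero or one reset) is easier: one or two applications of Lemma~\ref{lem:controller.epoch.bd} already suffice and the $\eta B_T$ slack is unused. The main subtlety I foresee is not any individual calculation but the bookkeeping: ensuring the alive condition $s \ge a^\eta$ is handled consistently across the three pieces and that the three pieces together cover exactly the rounds $\{a^\eta, \ldots, T\}$. Fortunately both Lemmas~\ref{lem:controller.epoch.bd} and~\ref{lem:tiny.past} already restrict their sums internally by $s \ge a^\eta$, so the pieces glue together without further work and deliver \eqref{eqn:controllersleepingregret}.
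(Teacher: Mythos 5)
Your proposal is correct and follows essentially the same route as the paper: apply Lemma~\ref{lem:controller.epoch.bd} to the last two epochs, use monotonicity of $\sum_{s<t} b_s/B_s$ to merge the two logarithmic terms, and absorb all earlier epochs at once via Lemma~\ref{lem:tiny.past} into the $\eta B_T$ slack, with degenerate cases treated separately. The only cosmetic difference is in the corner case with exactly one reset: the paper avoids invoking the epoch lemma on the very first epoch (whose proof would divide by $B_\tau=0$) by observing that $B_{t-1}=0$ there forces $\clipgrad_t=\zeros$ and hence zero clipped regret, whereas you invoke the lemma statement directly, which is harmless since it holds trivially on that epoch.
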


\begin{proof}
  We make a case distinction based on the number of epochs started by the algorithm. First, let us check the general case of $\ge 3$ epochs (at
least two normal epochs after the startup epoch). 
We apply the controller
regret bound, Lemma~\ref{lem:controller.epoch.bd}, to the last two epochs each. Suppose
these start after $\tau_1$ and $\tau_2$. For any $\eta \in \mathcal G$
that has not expired after $T$ rounds, we find
\begin{multline*}
  - \sum_{t \in (\tau_1, \tau_2],t \ge a^\eta} \clipsurr_t^\eta(\w_t^\eta)
  - \sum_{t \in (\tau_2, T],t \ge a^\eta} \clipsurr_t^\eta(\w_t^\eta)\\
  ~\le~
  \ln \ceil*{2 \origlog_2 \del*{\sum_{s=1}^{\tau_2-1}
    \frac{b_s }{B_s} + 1}}
  +
  \ln \ceil*{2 \origlog_2 \del*{\sum_{t=1}^{T-1}
    \frac{b_t }{B_t} + 1}}
  .
\end{multline*}
The regret on all epochs except the last two is bounded by
Lemma~\ref{lem:tiny.past}. So together we obtain \eqref{eqn:controllersleepingregret}.
Alternatively, suppose there are 2 epochs. Then, since we get no clipped
regret in the 1st epoch (as $B_{t-1} = 0$ throughout it, and hence $\clipgrad_t = \zeros$ and $\clipsurr_t^\eta(\cdot)=0$), we apply the controller regret bound only in the
second epoch to get
\[
- \sum_{t\in [1, T],t \ge a^\eta} \clipsurr_t^\eta(\w_t^\eta)
  \leq
  \ln \ceil*{2 \origlog_2 \del*{\sum_{t=1}^{T-1}
    \frac{b_t }{B_t} + 1}},
\]
and \eqref{eqn:controllersleepingregret} also holds. Finally, if there
is only 1 epoch, then our clipped regret is 0, so \eqref{eqn:controllersleepingregret} also holds.
\end{proof}

The proof of Lemma~\ref{lem:controllerbd} is completed by plugging in the
upper bounds from Lemmas~\ref{lem:controller.cheap.clipping},
\ref{lem:early.rounds} and~\ref{lem:overall.controller.regret}
into Proposition~\ref{prop:threeway}. 

\section{Composition Proofs of Theorems~\ref{thm:mainbound},
\ref{thm:roadnottaken} and
\ref{thm:mainsketchbound}}\label{appx:composition}

We combine the proofs of Theorems~\ref{thm:mainbound} and
\ref{thm:mainsketchbound}, which are both special cases of the 
abstract result Theorem~\ref{thm:abstractoptimize} below. The proof of
Theorem~\ref{thm:roadnottaken} is very similar in spirit, but
sufficiently different that we postpone it to the end of the section.
\begin{theorem}\label{thm:abstractoptimize}
  Suppose there exist a number $V \geq 0$ and positive semi-definite
  matrices $\F^\eta$ (possibly dependent on $\eta$) such that
  $\rk(\F^\eta) \leq r$, $\tr(\F^\eta) \leq s$ and the linearized
  regret is at most 
  \[
    \Rtrick_T^\u
    ~\le~
    \eta V 
    + \frac{
      \ln \det \del*{\I + 2 \eta^2 \sigma^2 \F^\eta}
      + \frac{1}{2 \sigma^2} \norm{\u}^2_2
      + 2 \ln \ceil*{2 \origlog_2 T}_+
      + \frac{1}{2}
    }{\eta}
    + 2 B_T
  \]
  simultaneously for all $\eta \in \mathcal{G}$ such that $\eta \leq
  \frac{1}{2 B_T}$. Then the linearized regret is both bounded by
  \[
    \Rtrick_T^\u
    ~\le~
    \frac{5}{2} \sqrt{V (\tfrac{1}{2 \sigma^2} \norm{\u}^2_2 + Z_T)} + 5
    B_T (\tfrac{1}{2 \sigma^2} \norm{\u}^2_2 + Z_T) + 2 B_T,
  \]
  where 
    $Z_T =
    r \ln \del*{1 + \frac{\sigma^2 s}{2
    B_T^2 r}}
    + 2 \ln \ceil*{2 \origlog_2 T}_+
    + \frac{1}{2}$,
  and by
  \[
    \Rtrick_T^\u
    ~\le~
    \frac{5}{2} \sqrt{\Big(V + 2 \sigma^2 s \Big)
    \Big(
    \tfrac{1}{2 \sigma^2} \norm{\u}^2_2
    + Z'_T
    \Big)
    } + 5 B_T \Big(
    \tfrac{1}{2 \sigma^2} \norm{\u}^2_2
    + Z'_T
    \Big)
    + 2 B_T,
  \]
  where $Z'_T = 2 \ln \ceil*{2 \origlog_2 T}_+ + \frac{1}{2}$.
\end{theorem}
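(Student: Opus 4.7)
The plan is to apply the abstract bound in the hypothesis, bounding the log-determinant term $\ln \det(\I + 2 \eta^2 \sigma^2 \F^\eta)$ in two different ways to obtain the two stated inequalities, and then in each case optimally tune $\eta$ on the grid $\mathcal G = \setc*{2^i}{i \in \mathbb Z}$ subject to the constraint $\eta \le 1/(2 B_T)$.

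For the first (rank-dependent) bound, I would exploit both $\rk(\F^\eta) \le r$ and $\tr(\F^\eta) \le s$. Since $\F^\eta$ has at most $r$ nonzero eigenvalues summing to at most $s$, the AM--GM inequality applied to $\prod_i (1 + 2 \eta^2 \sigma^2 \lambda_i)$ yields
\[
\ln \det \del*{\I + 2 \eta^2 \sigma^2 \F^\eta} ~\le~ r \ln\del*{1 + \tfrac{2 \eta^2 \sigma^2 s}{r}}.
\]
Since only $\eta \le 1/(2 B_T)$ matters, this is further bounded by $r \ln\del*{1 + \sigma^2 s/(2 B_T^2 r)}$, which is precisely the $r$-logarithm appearing inside $Z_T$. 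The hypothesis then reduces to $\Rtrick_T^\u \le \eta V + K/\eta + 2 B_T$ with $K = \tfrac{1}{2 \sigma^2} \norm{\u}_2^2 + Z_T$, simultaneously for all $\eta \in \mathcal G$ with $\eta \le 1/(2 B_T)$.

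Now I would optimize $\eta$ over this grid. Let $\eta^\star = \min\del*{\sqrt{K/V},\, 1/(2 B_T)}$ and pick $\hat \eta \in \mathcal G$ as the largest power of two with $\hat \eta \le \eta^\star$, so that $\eta^\star/2 \le \hat \eta \le \eta^\star$. When $\sqrt{K/V} \le 1/(2 B_T)$, the constraint is inactive and $\hat \eta V + K/\hat \eta \le \eta^\star V + 2 K/\eta^\star$ is a constant multiple of $\sqrt{V K}$. Otherwise $\eta^\star = 1/(2 B_T)$ and the same calculation gives a bound proportional to $V/B_T + B_T K$; the case condition $\sqrt{K/V} > 1/(2 B_T)$ rearranges to $V < 4 B_T^2 K$, allowing the $V/B_T$ term to be absorbed into $B_T K$. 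Combining the two regimes and chasing constants carefully produces the first stated inequality.

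For the second (dimension-free) bound I would instead apply the cruder inequality $\ln \det(\I + A) \le \tr(A)$ for positive semi-definite $A$, giving $\ln \det \del*{\I + 2 \eta^2 \sigma^2 \F^\eta} \le 2 \eta^2 \sigma^2 s$. Divided by $\eta$, this term is linear in $\eta$ and merges with $\eta V$, producing an effective variance $V + 2 \sigma^2 s$ and eliminating the logarithmic dependence on $s$ inside $Z_T$ (now $Z'_T$). Repeating the grid-tuning argument verbatim with $V$ replaced by $V + 2 \sigma^2 s$ and $K$ by $\tfrac{1}{2 \sigma^2} \norm{\u}_2^2 + Z'_T$ yields the second stated bound. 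The main obstacle throughout will be tracking constants carefully through the two-case grid-tuning argument to recover exactly the coefficients $5/2$ and $5$; everything else reduces to routine algebraic manipulation.
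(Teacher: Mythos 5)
Your proposal is correct and follows essentially the same route as the paper: the rank/trace bound on the log-determinant for the first claim (the paper's Lemma~\ref{lem:matrixinequalities}, proved by Jensen's inequality, which is your AM--GM step, further bounded using $\eta \le 1/(2B_T)$), the trace bound $\ln\det(\I+\A)\le\tr(\A)$ merged into the linear term for the second claim, and optimization of $\eta$ over the power-of-two grid under the constraint $\eta \le 1/(2B_T)$ via a two-case argument (the paper's Lemma~\ref{lem:optimize.eta}). The only caveat concerns constants: bounding $\hat\eta V \le \eta^\star V$ and $K/\hat\eta \le 2K/\eta^\star$ separately as you sketch gives $3\sqrt{VK}$ (and, in the constrained regime, $6 B_T K$) rather than $\tfrac{5}{2}\sqrt{VK}$ and $5 B_T K$; to recover the stated constants, evaluate $\eta V + K/\eta$ at the single worst grid point ($\eta^\star/2$, respectively $1/(4B_T)$), using that the map is decreasing below its unconstrained minimizer, which is exactly how the paper's lemma is proved.
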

Theorem~\ref{thm:mainbound} corresponds to the case $V = V_T^\u$
and $\F^\eta = \F_T$, such that $\Tr(\F^\eta) = \sum_{t=1}^T \|\grad_t\|_2^2$;
Theorem~\ref{thm:mainsketchbound} is obtained with $V = V_T^\u +
\frac{2 \sigma^2 m \Omega_q}{m-q}$ and $\F^\eta = (\S^\eta_T)^\top \S^\eta_T$. To bound $\tr(\F^\eta)$ we use that $(\S^\eta_T)^\top \S^\eta_T \preceq (\G^\eta_T)^\top \G^\eta_T =
\sum_{t=a^\eta}^T \grad_t \grad_t^\top \preceq \F_T$, where the first
inequality holds because $\S^\eta_T$ is the Frequent Directions
approximation of $\G_T^\eta$ \citep{GhashamiEtAl2016}. It follows
that $\tr(\F^\eta) \le \tr(\F_T) = \sum_{t=1}^T \|\grad_t\|_2^2$. We may
further use that $\rk(\F^\eta) \leq 2m$, by the dimensionality of $\S_T^\eta$. The
precondition of Theorem~\ref{thm:abstractoptimize} is established by
Theorems~\ref{thm:untuned.regret} and \ref{thm:untuned.sketchregret},
respectively, and the observation that $Q_T \leq T$.

To prove Theorem~\ref{thm:abstractoptimize} we start with a general lemma about optimizing in $\eta$:
\begin{lemma}\label{lem:optimize.eta}
  For any $X,Y>0$, 
  \begin{equation*}
    \min_{\eta \in \mathcal{G}~:~\eta \leq \frac{1}{2 B_T}} \eta X + \frac{Y}{\eta} 
      ~ \leq ~
      \frac{5}{2} \sqrt{XY} + 5 B_T Y.
  \end{equation*}
\end{lemma}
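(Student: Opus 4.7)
The plan is to choose $\eta$ as the largest power of $2$ below an appropriate ideal target, and then quantify the (at most factor-$2$) overhead from snapping to the grid via a sharp bound on $2^{-\beta}+2^\beta$ over $\beta\in[0,1)$. Specifically, I would set the constrained ``ideal'' learning rate
\[
  \eta_0 ~\df~ \min\del*{\sqrt{Y/X},\tfrac{1}{2B_T}}
\]
and then pick $\eta = 2^{\lfloor \log_2 \eta_0\rfloor} \in \mathcal{G}$, so that $\eta \leq \eta_0 \leq \tfrac{1}{2B_T}$ (hence $\eta$ is feasible) and $\eta = \eta_0\cdot 2^{-\beta}$ for some $\beta\in[0,1)$. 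The elementary fact
\[
  2^{-\beta}+2^{\beta} ~\leq~ \tfrac{5}{2}
  \qquad \text{for all }\beta\in[0,1]
\]
(since $\alpha\mapsto 2^{-\alpha}+2^{\alpha}$ is convex with endpoint values $2$ and $5/2$) is the key inequality driving the constant.

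I would then split into two cases according to which term attains the minimum defining $\eta_0$. In the ``unconstrained'' case $\eta_0=\sqrt{Y/X}\leq \tfrac{1}{2B_T}$, we have $\eta_0 X = Y/\eta_0 = \sqrt{XY}$, so
\[
  \eta X + \frac{Y}{\eta}
  ~=~ \sqrt{XY}\,2^{-\beta} + \sqrt{XY}\,2^{\beta}
  ~\leq~ \tfrac{5}{2}\sqrt{XY},
\]
which is already the first term of the claimed bound. In the ``boundary'' case $\eta_0=\tfrac{1}{2B_T}<\sqrt{Y/X}$, the defining inequality yields $X<4B_T^2Y$, so $\eta X < 2B_T Y\cdot 2^{-\beta}$ while $Y/\eta = 2B_T Y\cdot 2^{\beta}$, giving
\[
  \eta X + \frac{Y}{\eta}
  ~<~ 2B_T Y\,(2^{-\beta}+2^{\beta})
  ~\leq~ 5 B_T Y,
\]
which is bounded by the second term of the claimed bound.

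In both cases $\eta X + Y/\eta \leq \tfrac{5}{2}\sqrt{XY} + 5 B_T Y$, and since this $\eta$ is in the feasible set, the same bound holds for the minimum over the constrained grid, completing the proof. No step looks especially delicate: the only thing to be careful about is verifying that in the boundary case the feasibility constraint $\eta\leq \tfrac{1}{2B_T}$ is not violated (it isn't, because $\eta\leq \eta_0=\tfrac{1}{2B_T}$ by construction), and that the $5/2$ constant from the endpoint $\beta=1$ is the sharp value for the grid-rounding overhead.
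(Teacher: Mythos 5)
Your proposal is correct and follows essentially the same route as the paper's proof: snap down to the nearest grid point below the (constrained) ideal rate $\min\{\sqrt{Y/X},\tfrac{1}{2B_T}\}$, split into the cases where the constraint is slack or binding, and absorb the factor-$2$ grid rounding via the endpoint value $5/2$ of $2^{-\beta}+2^{\beta}$ (the paper phrases this as the maximum of $\eta X + Y/\eta$ over $[\hat\eta/2,\hat\eta]$ and as evaluating at the worst-case grid point $\tfrac{1}{4B_T}$, respectively). Both arguments yield exactly the constants $\tfrac{5}{2}\sqrt{XY}$ and $5B_TY$.
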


\begin{proof}
Let us denote the unconstrained optimizer of the left-hand side by $\hat
\eta = \sqrt{Y/X}$. We distinguish two cases: first, when $\hat \eta \le
\frac{1}{2 B_T}$, we upper bound the left-hand side by choosing the
closest grid point $\eta \in \mathcal G$ below~$\hat \eta$ (which, in
the worst case, is at $\hat\eta/2$) to obtain
\[
  \min_{\eta \in \mathcal{G}~:~\eta \leq \frac{1}{2 B_T}} \eta X + \frac{Y}{\eta} 
  ~\le~
  \max_{\eta \in [\hat \eta/2, \hat \eta]} \eta X + \frac{Y}{\eta} 
  = \frac{5}{2} \sqrt{XY}.
\]
In the second case, if $\hat \eta > \frac{1}{2 B_T}$, we plug in the
highest available grid point (for which the worst case is $\frac{1}{4
B_T}$) to find
\[
  \min_{\eta \in \mathcal{G}~:~\eta \leq \frac{1}{2 B_T}} \eta X + \frac{Y}{\eta} 
  ~\le~
  \frac{1}{4 B_T} X + 4 B_T Y
  ~<~
  5 B_T Y,
\]
where the second inequality follows by the assumption that $\hat \eta >
\frac{1}{2 B_T}$. In both cases the conclusion of the lemma follows.
\end{proof}

\begin{proof} \textbf{(Theorem~\ref{thm:abstractoptimize})}
We start with the first claim of the theorem. By assumption, for any
$\eta \le \frac{1}{2 B_T}$ in the grid $\mathcal G$, we have
\[
  \Rtrick_T^\u
    ~\le~
    \eta V + \frac{A^\eta}{\eta} + 2 B_T
    ~\le~
    \eta V + \frac{A}{\eta} + 2 B_T,
\]
where 
\begin{align*}
  A^\eta &=
    \ln \det \del*{\I + 2 \eta^2 \sigma^2 \F^\eta}
    + \frac{1}{2 \sigma^2} \norm{\u}^2_2
    + 2 \ln \ceil*{2 \origlog_2 T}
    + \frac{1}{2}\\
  A &= 
    r \ln\Big(1 + \frac{\sigma^2 s}{2 B_T^2 r}\Big)
    + \frac{1}{2 \sigma^2} \norm{\u}^2_2
    + 2 \ln \ceil*{2 \origlog_2 T}
    + \frac{1}{2},
\end{align*}
and $A^\eta \leq A$ follows from $\eta \leq 1/(2B_T)$, the first inequality in
Lemma~\ref{lem:matrixinequalities} below and the fact that the expression $r \ln
\del*{1+\frac{s}{r}}$ is increasing in $r \ge 0$ for all $s \ge 0$.
Lemma~\ref{lem:optimize.eta} therefore implies that
\[
  \Rtrick_T^\u ~\le~ \frac{5}{2} \sqrt{V A} + 5 B_T A + 2 B_T,
\]
which establishes the first claim of the theorem.

For the second claim of the theorem, we upper bound $A^\eta$
differently,
using the second inequality in Lemma~\ref{lem:matrixinequalities}, to
obtain
\[
  \Rtrick_T^\u
  ~\le~
  \eta V 
  + 2 \eta \sigma^2 s
  + \frac{A'}{\eta}
  + 2 B_T,
  \qquad
  \text{where}
  \qquad
  A' ~=~
  \frac{1}{2 \sigma^2} \norm{\u}^2_2
  + 2 \ln \ceil*{2 \origlog_2 T}
  + \frac{1}{2}
  .
\]
Using Lemma~\ref{lem:optimize.eta}, the second claim follows, which
completes the proof of the theorem.
\end{proof}

\begin{lemma}\label{lem:matrixinequalities}
  For any positive semi-definite matrix $\M \in \reals^{d \times d}$
  \begin{equation*}
    \log \det (\I + \M)
    ~\leq~
    \rk(\M) \log \del*{1 + \frac{\Tr(\M)}{\rk(\M)}}
    ~\leq~
    \Tr(\M),
  \end{equation*}
  where the middle term is extended by continuity to equal zero at $\M = \zeros$.
\end{lemma}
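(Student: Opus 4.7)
The plan is to diagonalize and reduce the claim to a one-variable statement about the eigenvalues. Write $\lambda_1,\ldots,\lambda_d \geq 0$ for the eigenvalues of $\M$, and let $r = \rk(\M)$ count those that are strictly positive. Since $\log\det$ and $\tr$ are both spectral functions, $\log\det(\I+\M) = \sum_{i=1}^d \log(1+\lambda_i)$ equals $\sum_{i : \lambda_i > 0} \log(1+\lambda_i)$, and $\tr(\M) = \sum_{i:\lambda_i > 0} \lambda_i$.

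For the first inequality, I would apply Jensen's inequality to the concave function $x \mapsto \log(1+x)$, averaging uniformly over the $r$ nonzero eigenvalues:
\[
  \frac{1}{r} \sum_{i : \lambda_i > 0} \log(1+\lambda_i)
  ~\le~
  \log\!\left(1 + \frac{1}{r}\sum_{i : \lambda_i > 0} \lambda_i\right)
  ~=~
  \log\!\left(1 + \frac{\tr(\M)}{r}\right),
\]
and multiplying through by $r$ gives exactly the claimed first bound. The degenerate case $\M = \zeros$ (where $r = 0$) is handled by the continuous extension stated in the lemma, since both sides tend to $0$ as $\M \to \zeros$.

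For the second inequality, I would just use the elementary bound $\log(1+x) \le x$ for $x \ge 0$, applied with $x = \tr(\M)/r$, which yields
\[
  r \log\!\left(1+\frac{\tr(\M)}{r}\right) \le r \cdot \frac{\tr(\M)}{r} = \tr(\M).
\]
There is no real obstacle: the only subtlety is handling the zero-rank case, which is built into the statement via the continuous extension, and checking that Jensen is being applied only over the nonzero eigenvalues so that the ``$r$'' in the bound is genuinely $\rk(\M)$ rather than $d$ (this is what makes the bound tight when $\M$ is low rank).
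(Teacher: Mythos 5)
Your proposal is correct and follows essentially the same route as the paper's proof: diagonalize, apply Jensen's inequality to $\log(1+x)$ averaged over the nonzero eigenvalues (which is exactly why the factor is $\rk(\M)$ rather than $d$), handle $\M = \zeros$ separately, and finish with $\log(1+x) \le x$. No gaps.
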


\begin{proof}
  If $\M = \zeros$ then all three equal zero and we are done. Otherwise,
  let $\lambda_1,\ldots,\lambda_d$ be the eigenvalues of $\M$. Then
  $(1+\lambda_1),\ldots,(1+\lambda_d)$ are the eigenvalues of $\I + \M$,
  and Jensen's inequality implies
  \begin{multline*}
    \log \det (\I + \M)
      = \sum_{i=1}^d \log (1+\lambda_i)
      = \rk(\M) \sum_{i : \lambda_i \neq 0} \frac{1}{\rk(\M)} \log
      (1+\lambda_i)\\
      \leq \rk(\M) \log \del*{1 + \sum_{i : \lambda_i \neq 0}
      \frac{\lambda_i}{\rk(\M)}}
      = \rk(\M) \log \del*{1 + \frac{\Tr(\M)}{\rk(\M)}},
  \end{multline*}
  which proves the first inequality. The second inequality follows
  by $\ln (1+x) \le x$ for all $x \ge 0$.
\end{proof}

To conclude the section, it remains to prove Theorem~\ref{thm:roadnottaken}.
\begin{proof} \textbf{(Theorem~\ref{thm:roadnottaken})}
Starting from Theorem~\ref{thm:untuned.regret}, which still holds even
though $\sigma$ depends on~$\eta$, we plug in $\sigma = 1/\sqrt{\alpha \eta}$ to
obtain
\begin{multline*}
  \Rtrick_T^\u
  ~\le~
  \eta V_T^\u
  + \frac{A}{\eta}
  + \frac{\alpha}{2} \norm{\u}^2_2
  + 2 B_T,\\
  \text{where}
  \qquad
  A ~=~ \ln \det \del*{\I + \frac{1}{B_T \alpha}  \F_T}
    + 2 \ln \ceil*{2 \origlog_2 T}_+
    + \frac{1}{2},
\end{multline*}
for all $\eta \in \mathcal{G}$ such that $\eta \leq 1/(2 B_T)$.
Lemma~\ref{lem:optimize.eta} therefore implies that
\[
  \Rtrick_T^\u ~\le~ \frac{5}{2} \sqrt{V_T^\u A} + 5 B_T A +
  \frac{\alpha}{2} \norm{\u}^2_2 + 2 B_T,
\]
and the first claim of the theorem follows upon applying
the first inequality from Lemma~\ref{lem:matrixinequalities} with $\M
= \frac{1}{B_T \alpha} \F_T$ and observing that $\tr(\F_T) =
\sum_{t=1}^T \|\grad_t\|_2^2$.

For the second claim of the theorem, we again start from
Theorem~\ref{thm:untuned.regret} and now apply the second inequality
from Lemma~\ref{lem:matrixinequalities} for $\M = \frac{2 \eta}{\alpha}
\F_T$ to obtain
\[
  \Rtrick_T^\u
  ~\le~
  \eta V_T^\u
  + \frac{Z'_T}{\eta}
  + \frac{2}{\alpha} \tr(\F_T)
  + \frac{\alpha}{2} \norm{\u}^2_2
  + 2 B_T.
\]
Using Lemma~\ref{lem:optimize.eta} and $\tr(\F_T) = \sum_{t=1}^T
\|\grad_t\|_2^2$, the second claim follows, which completes the proof of
the theorem.
\end{proof}

\section{Proofs of Corollaries~\ref{cor:roughthm} and~\ref{cor:coordroughthm}}
\label{app:fulldiagcorollaries}

\begin{proof} \textbf{(Corollary~\ref{cor:roughthm})}
  If we ignore the corner case that $B_T^2$ in the definition of $Z_T$
  is exceedingly small, then \eqref{eqn:roughbound} follows from
  \eqref{eqn:mainbound} upon bounding $\|\g_t\|_2^2 \leq G_2^2$, $B_T
  \leq 2 D_2 G_2$, and observing that $Z_T$ is increasing in $\rk(\F_T)
  \leq d$. To see that \eqref{eqn:roughbound} holds in general, even for
  very small $B_T^2$, we need to verify that $V_T^{\u} Z_T= O(V_T^\u
  d \log(D_2 G_2 T/d))$ and $B_T Z_T = O(D_2 G_2 \log(D_2 G_2 T/d))$. To
  establish the
  first of these, we reason as follows:
  \begin{align*}
    V_T^\u \rk(\F_T) &\ln \del*{1 + \frac{\sigma^2\sum_{t=1}^T \|\grad_t\|_2^2}{8
      B_T^2 \rk(\F_T)}}\\
      &\leq V_T^\u d \log\del*{1 + \frac{D_2^2 G_2^2 T}{8
      B_T^2 d}}\\
      &= V_T^\u d \log\del*{\frac{D_2^2 G_2^2 T^3}{d}}
        + V_T^\u d \log\del*{\frac{d}{D_2^2 G_2^2 T^3} +
        \frac{1}{8 B_T^2 T^2}}\\
      &\leq V_T^\u d \log\del*{\frac{D_2^2 G_2^2 T^3}{d}}
        + V_T^\u \del*{\frac{d^2}{D_2^2 G_2^2 T^3} +
        \frac{d}{8 B_T^2 T^2}}\\
      &\leq V_T^\u d \log\del*{\frac{D_2^2 G_2^2 T^3}{d}}
        + \frac{2 d^2}{T^2} +
        \frac{d}{8 T}
      = O\del*{V_T^\u d \log \frac{D_2 G_2 T}{d}},
  \end{align*}
  where the last inequality follows from $V_T^\u \leq B_T^2 T
  \leq 2 D_2^2 G_2^2 T$. To establish the second case, we
  observe that
  \begin{align*}
    B_T 
    \rk(\F_T) &\ln \del*{1 + \frac{\sigma^2\sum_{t=1}^T \|\grad_t\|_2^2}{8
      B_T^2 \rk(\F_T)}}
      \leq 
        B_T d \ln \del*{1 + \frac{D_2^2 G_2^2 T}{8 B_T^2 d}}\\
      &= B_T d \log\del*{B_T^2 + \frac{D_2^2 G_2^2 T}{8 d}}
        - 2 d B_T \log B_T\\
      &\leq 2 D_2 G_2 d \log(4 D_2^2 G_2^2 + \frac{D_2^2 G_2^2 T}{8d})
        + \frac{2d}{e}
      = O\del*{D_2 G_2 d \log\del*{\frac{D_2 G_2 T}{d}}},
  \end{align*}
  where the last inequality uses that $x \log x \geq -1/e$. This
  completes the proof of \eqref{eqn:roughbound}.
\end{proof}

\begin{proof}\textbf{(Corollary~\ref{cor:coordroughthm})}
  To see that \eqref{eqn:roughcoordbound} follows from
  \eqref{eqn:coordV}, we need to verify that $V_{T, i}^{u_i} Z_{T,i}=
  O(V_{T,i}^{u_i} \log(D_\infty G_\infty T))$ and $B_{T,i} Z_{T,i} =
  O(D_\infty G_\infty \log(D_\infty G_\infty T))$. These follow as the
  one-dimensional special cases of the analogous quantities in the proof
  of Corollary~\ref{cor:roughthm}.

  The first part of \eqref{eqn:roughhighdimcoordbound} then follows from
  \eqref{eqn:coordhighdim} upon observing that $B_{T,i} \leq 2 D_\infty
  \max_t \|\grad_t\|_1 \leq 2 D_\infty G_2 \sqrt{d}$. The second part
  follows because
  \[
    \sum_{i=1}^d \|g_{1:T,i}\|_2
      = d \sum_{i=1}^d \frac{1}{d} \sqrt{\sum_{t=1}^T g_{t,i}^2}
      \leq d \sqrt{\sum_{i=1}^d \frac{1}{d} \sum_{t=1}^T g_{t,i}^2}
      = \sqrt{d \sum_{t=1}^T \|\grad_t\|_2^2}
      \leq G_2 \sqrt{d T}
  \]
  by Jensen's inequality.
\end{proof}

\clearpage
\section{Experimental Results}\label{app:expresults}

\begin{table}[ht]
\centering
\begin{tabular}{lrrlr}
  \hline
Data set & $T$ & $d$ & Outcome & $P(y = 1)$ \\
  \hline
a9a & 32561 & 123 & binary & 0.24 \\ 
  australian & 690 &  14 & binary & 0.44 \\ 
  breast-cancer & 683 &   9 & binary & 0.35 \\ 
  covtype & 581012 &  54 & binary & 0.49 \\ 
  diabetes & 768 &   8 & binary & 0.65 \\ 
  heart & 270 &  13 & binary & 0.44 \\ 
  ijcnn1 & 91701 &  22 & binary & 0.10 \\ 
  ionosphere & 351 &  34 & binary & 0.64 \\ 
  phishing & 11055 & 68 & binary & 0.56 \\
  splice & 1000 &  60 & binary & 0.52 \\ 
  w8a & 49479 & 300 & binary & 0.03 \\ 
  abalone & 4177 &   8 & real &  \\ 
  bodyfat & 252 &  14 & real &  \\ 
  cpusmall & 8192 &  12 & real &  \\ 
  housing & 506 &  13 & real &  \\ 
  mg & 1385 &   6 & real &  \\ 
  space\_ga & 3107 &   6 & real &  \\ 
   \hline
\end{tabular}
\caption{Summary of the data sets}\label{tbl:sumdat}
\end{table}

\clearpage
\begin{table}[ht]
\resizebox{\linewidth}{!}{
\begin{tabular}{llrrrrrrrrr}
  \hline
Data set & Loss & AdaGrad & GDnorm & OGDt & MGCo & MGF2 & MGF11 & MGF26 & MGF51 & MGFull \\
  \hline
a9a & hinge & 232414 & 37708 & 22472 & 17012 & 13754 & 12230 & 11671 & 11160 & \textbf{11045} \\
   & logistic & 30910 & 7176 & 3817 & \textbf{1340} & 2249 & 1990 & 1910 & 1813 & 1783 \\
  australian & hinge & 279 & 99 & 68 & 41 & 40 & \textbf{34} & \textbf{34} & \textbf{34} & \textbf{34} \\
   & logistic & 1250 & 492 & 359 & 48 & 52 & \textbf{45} & \textbf{45} & \textbf{45} & \textbf{45} \\
  breast-cancer & hinge & 214 & 106 & 84 & \textbf{24} & 26 & 25 & 25 & 25 & 25 \\
   & logistic & 288 & 147 & 119 & \textbf{25} & \textbf{26} & \textbf{26} & \textbf{26} & \textbf{26} & \textbf{26} \\
  covtype & hinge & 1317765 & 254930 & 141706 & 66797 & 83958 & 71218 & 62087 & 31368 & \textbf{31355} \\
   & logistic & 78430 & 33935 & 14042 & 4713 & 12214 & 10516 & 8941 & 3668 & \textbf{3663} \\
  diabetes & hinge & 553 & 306 & 185 & 75 & 63 & \textbf{59} & \textbf{59} & \textbf{59} & \textbf{59} \\
   & logistic & 474 & 241 & 133 & 53 & 40 & \textbf{39} & \textbf{39} & \textbf{39} & \textbf{39} \\
  heart & hinge & 329 & 217 & 148 & \textbf{35} & 42 & \textbf{35} & \textbf{35} & \textbf{35} & \textbf{35} \\
   & logistic & 376 & 246 & 155 & \textbf{30} & 35 & 32 & 31 & 31 & 31 \\
  ijcnn1 & hinge & 12292 & 3925 & 1198 & \textbf{885} & 1633 & 1327 & 901 & 901 & 901 \\
   & logistic & 15303 & 4473 & 1344 & \textbf{976} & 1798 & 1415 & 1086 & 1086 & 1086 \\
  ionosphere & hinge & 2672 & 1102 & 753 & \textbf{169} & 252 & 211 & 206 & 205 & 205 \\
   & logistic & 5786 & 1897 & 1426 & 240 & 280 & 242 & \textbf{238} & \textbf{238} & \textbf{238} \\
  phishing & hinge & 6752 & 3162 & 1757 & 610 & 635 & 607 & 547 & \textbf{518} & \textbf{518} \\
   & logistic & 22814 & 7394 & 3320 & 1208 & 967 & 890 & 802 & \textbf{767} & \textbf{767} \\
  splice & hinge & 2451 & 777 & 694 & \textbf{243} & 303 & 290 & 277 & 288 & 280 \\
   & logistic & 3014 & 819 & 726 & 183 & 182 & 181 & 179 & 177 & \textbf{175} \\
  w8a & hinge & 349174 & 139920 & 255346 & \textbf{18789} & 34395 & 31966 & 32080 & 31823 & 29661 \\
   & logistic & 86921 & 21095 & 40519 & \textbf{3324} & 4546 & 4230 & 4049 & 3977 & 3865 \\
  abalone & absolute & 12650 & 7395 & 5027 & 1317 & 2194 & \textbf{748} & \textbf{748} & \textbf{748} & \textbf{748} \\
   & squared & 73507 & 44166 & 37398 & 6725 & 7642 & \textbf{6179} & \textbf{6179} & \textbf{6179} & \textbf{6179} \\
  bodyfat & absolute & 319 & 98 & 75 & 30 & 24 & \textbf{23} & \textbf{23} & \textbf{23} & \textbf{23} \\
   & squared & 351 & 37 & 28 & 10 & \textbf{7} & \textbf{8} & \textbf{8} & \textbf{8} & \textbf{8} \\
  cpusmall & absolute & 533948 & 199595 & 182464 & 40537 & 22251 & 14301 & \textbf{14287} & \textbf{14287} & \textbf{14287} \\
   & squared & 12109845 & 2740512 & 3082005 & 561505 & 353329 & \textbf{351253} & 351257 & 351257 & 351257 \\
  housing & absolute & 9979 & 3557 & 3067 & 946 & 949 & 776 & \textbf{746} & \textbf{746} & \textbf{746} \\
   & squared & 154729 & 52053 & 55064 & 20191 & 16103 & \textbf{15973} & 15975 & 15975 & 15975 \\
  mg & absolute & 277 & 110 & 92 & 30 & 40 & \textbf{28} & \textbf{28} & \textbf{28} & \textbf{28} \\
   & squared & 112 & 32 & \textbf{15} & 19 & 17 & 18 & 18 & 18 & 18 \\
  space\_ga & absolute & 1393 & 908 & 523 & 133 & 259 & \textbf{65} & \textbf{65} & \textbf{65} & \textbf{65} \\
   & squared & 1451 & 534 & 528 & \textbf{40} & 75 & 55 & 55 & 55 & 55 \\
   \hline
\end{tabular}
}
\caption{The regret of each algorithm for the various data sets and loss
functions (rounded to whole numbers). Boldface indicates that the regret
is within one unit of the minimum for the row.}
\label{tab:regret table}
\end{table}

\clearpage
\global\pdfpageattr\expandafter{\the\pdfpageattr/Rotate 0}

\subsection{Hypertune Results}
\label{app:hypertune}

In this section we investigate the effect of hyperparameter tuning. Each
of the algorithms that we consider has one free parameter, $\sigma$, for
which the theory advocates tuning it in terms of the (unknown) norm of
the comparator, or the maximal distance from the comparator within the
domain. This theoretical recommendation is what we employed in our
experiments in Section~\ref{sec:experiments}. In contrast, we now ask
what performance one may reach by optimizing the $\sigma$ parameter for
the data in hand. Our approach will be to evaluate all algorithms on a discrete grid of parameter settings. For convenience of comparison between full and coordinate-wise algorithms, we parameterise our grid by the factor by which we scale the theoretically optimal tuning from Section~\ref{sec:experiments}. We include in our grid exponentially small factors $2^j$ for $j=-7, \dots, -3$, followed by a linear grid running from $2^{-3}$ to $3$ with steps of size $1/8$, resulting in a grand total of $28$ grid points. We visualise the entire performance profile for four selected data sets in Figure~\ref{fig:hypertune.plots}. There we see that the optimal tuning for $\sigma$ can be either higher or lower than the theoretical recommendation, and whether it should be higher or lower can be different for different algorithms even on the same data set.

\begin{figure}
  \centering
  \subfigure[\texttt{covtype} with logistic loss]{%
    \includegraphics[width=.5\textwidth]{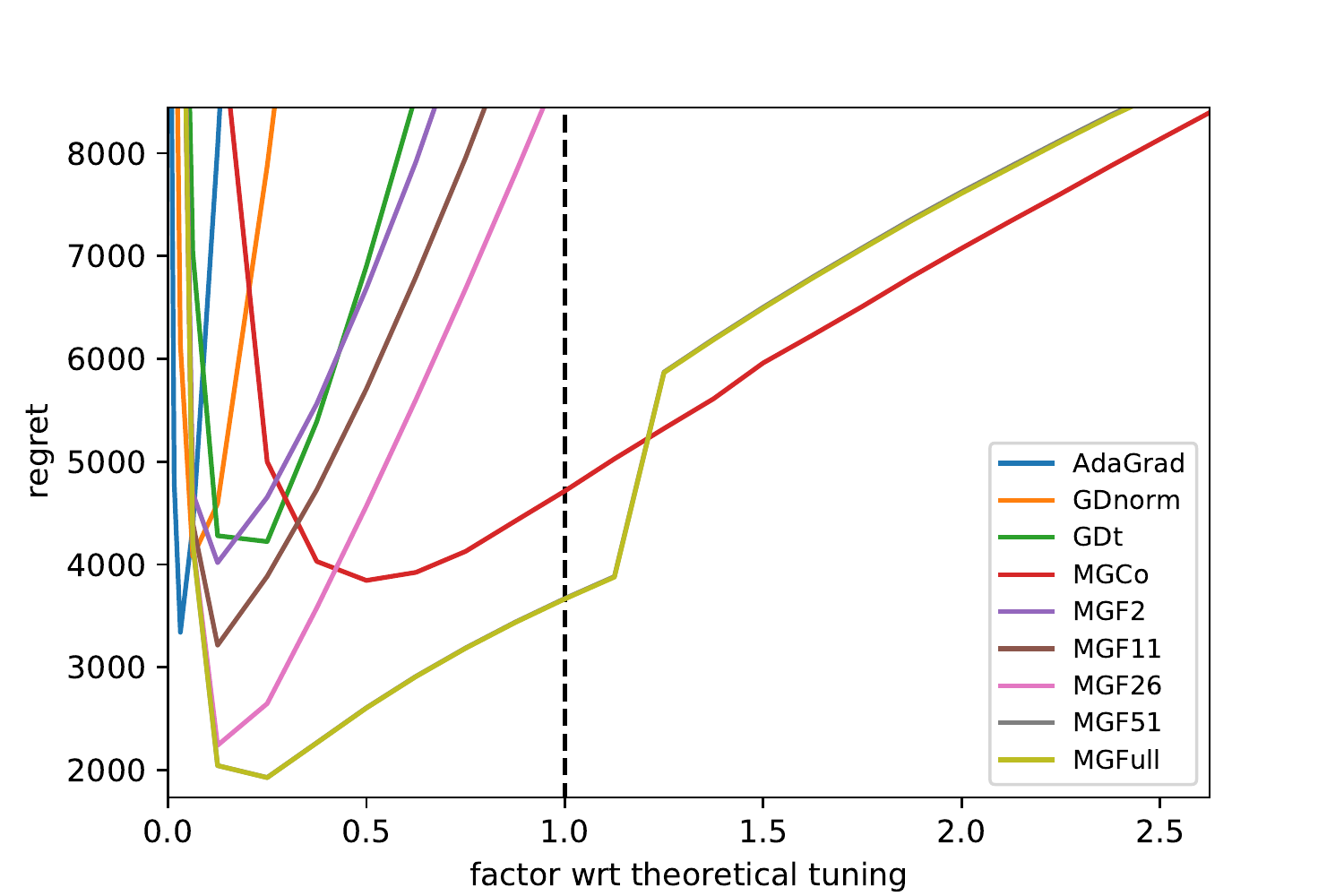}
  }%
  \subfigure[\texttt{housing} with squared loss]{%
    \includegraphics[width=.5\textwidth]{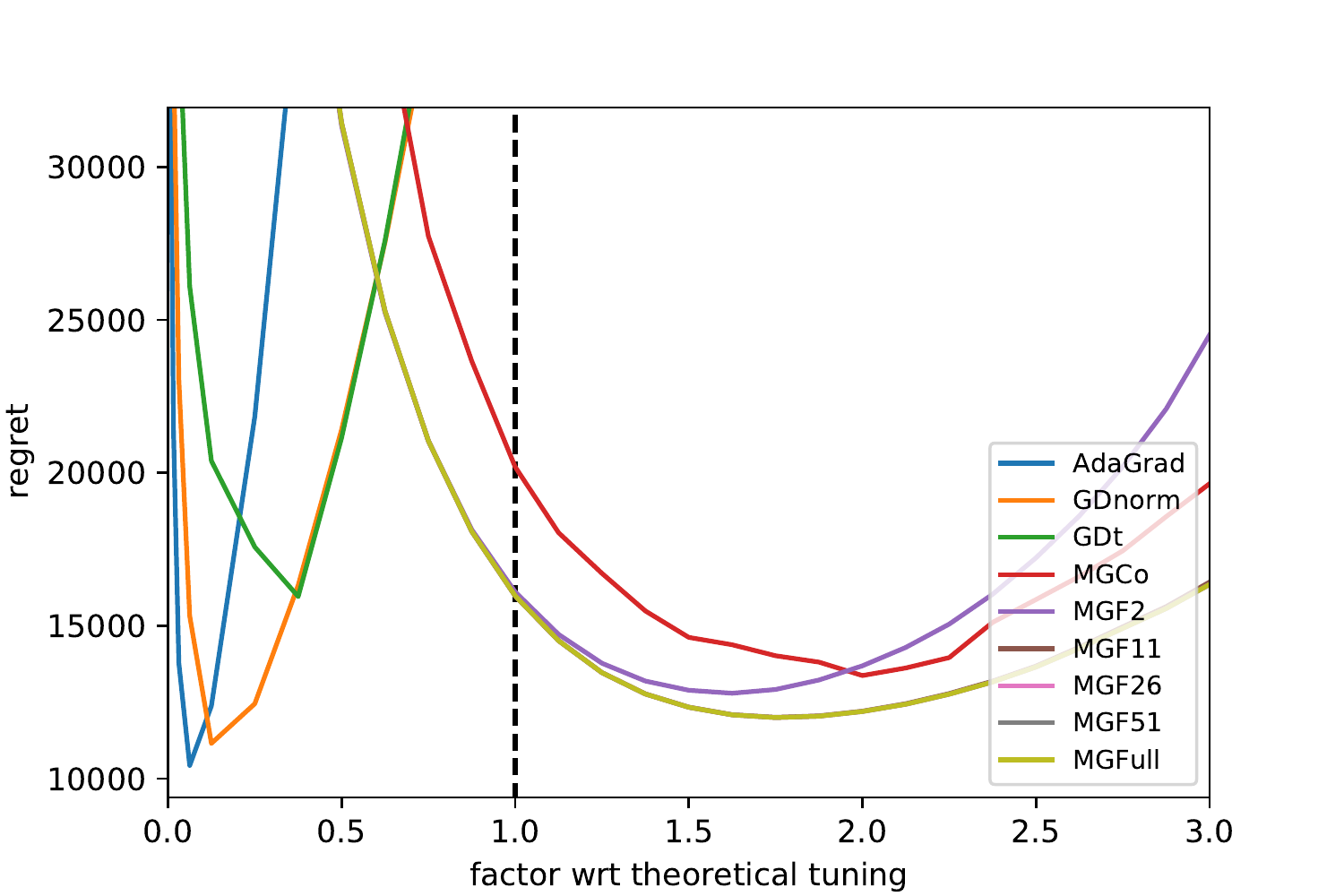}
  }
  \subfigure[\texttt{australian} with hinge loss]{%
    \includegraphics[width=.5\textwidth]{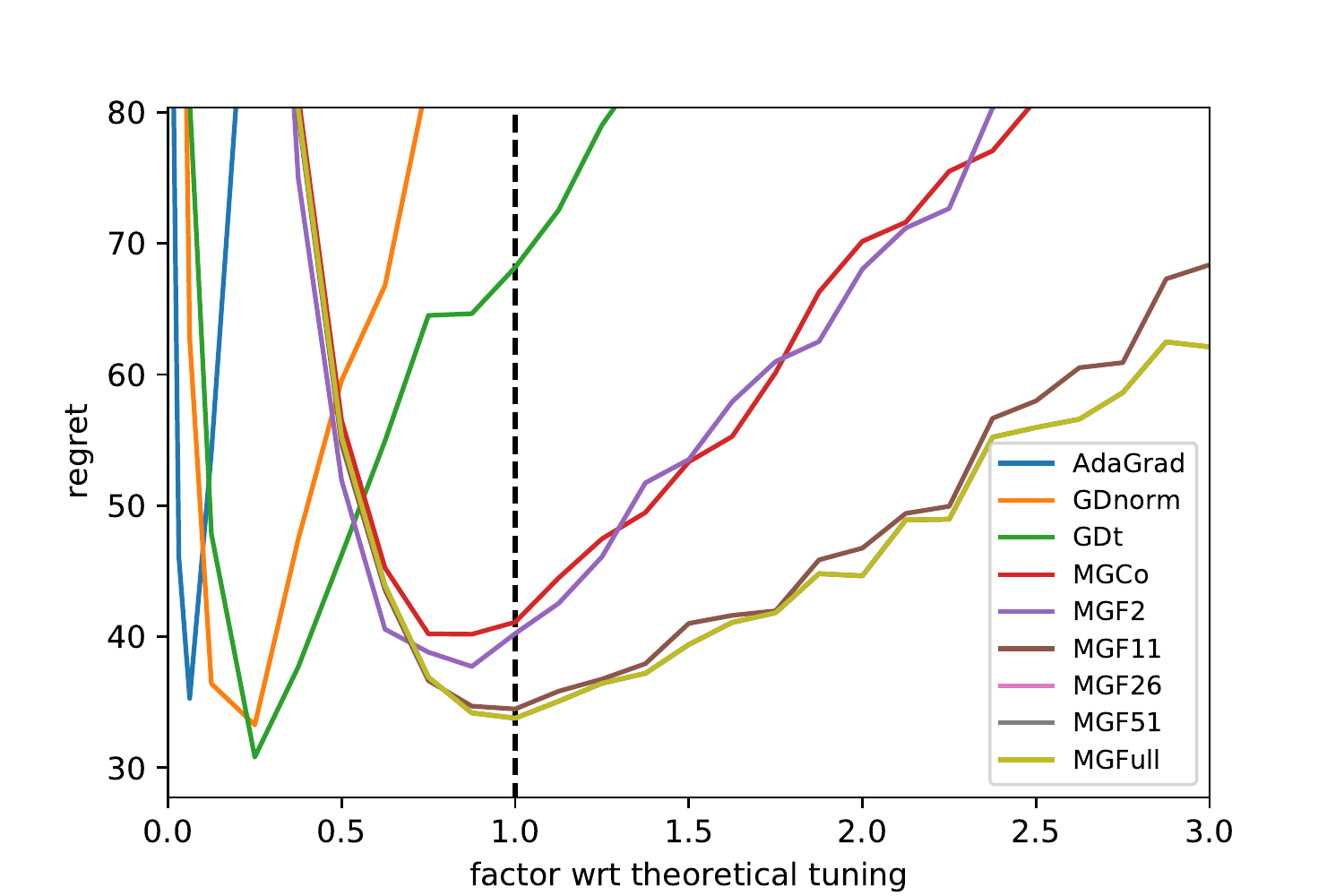}
  }%
  \subfigure[\texttt{space\_ga} with absolute loss]{%
    \includegraphics[width=.5\textwidth]{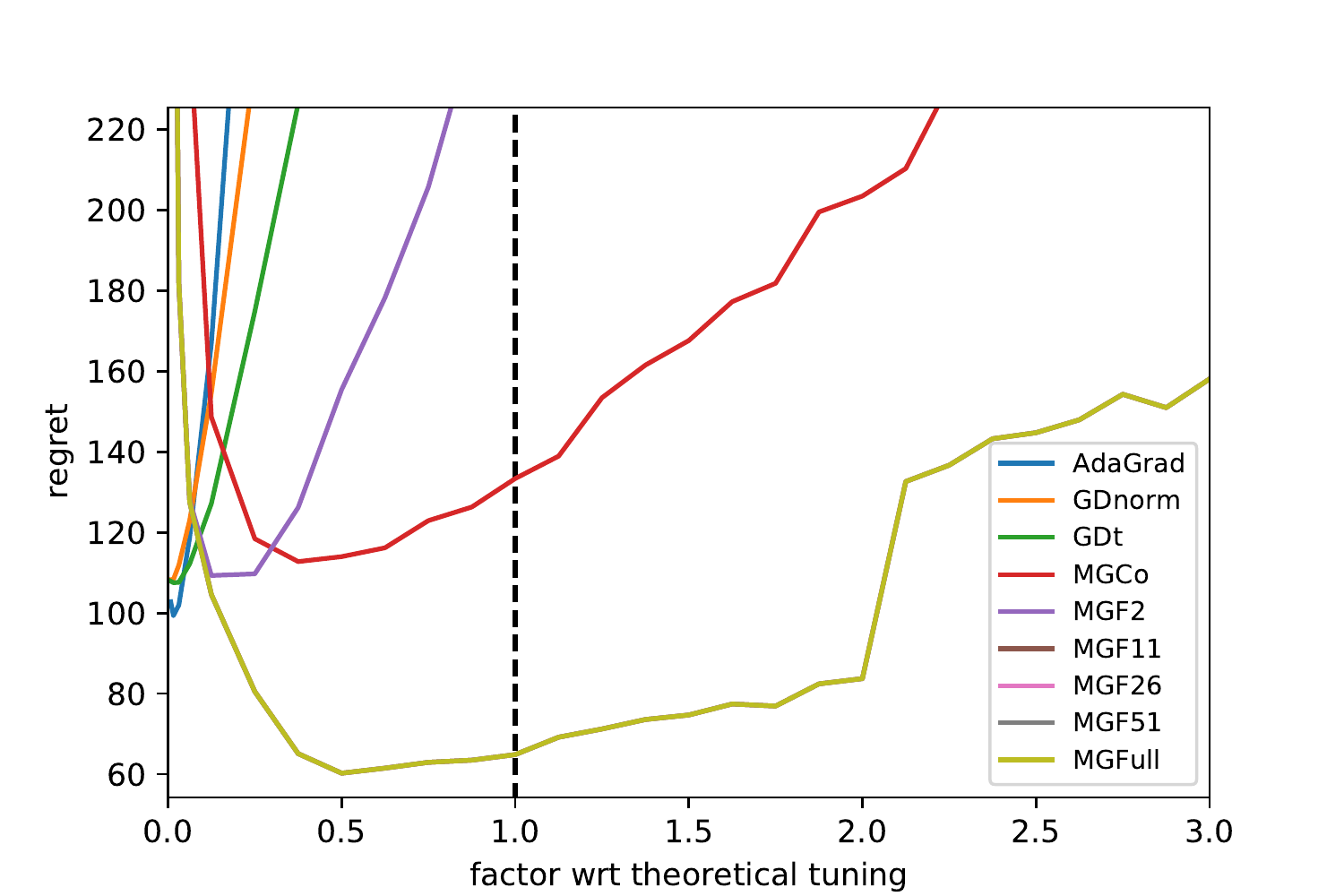}
  }%
  \caption{Performance of all algorithms as a function of the tuning
  parameter $\sigma$ on four selected data sets. We have parameterised
  $\sigma$ by a factor times its theoretically optimal tuning. The
  dotted line indicates the standard tuning (corresponding to factor
  $1$), at which the results from Table~\ref{tab:regret table} are
  reproduced. Note that with different $\sigma$, the algorithms produce
  different iterates $\w_t$, and as a result see different gradients. For MetaGrad, these further affect the set of active experts that are maintained by the master. These effects make the curves interestingly non-smooth.}\label{fig:hypertune.plots}
\end{figure}

We evaluate our algorithms on all data sets. (Recall that a helpful
summary of the properties of each data set can be found in
Table~\ref{tbl:sumdat}). First, in
Table~\ref{tab:hypertune.regret.table}, which parallels
Table~\ref{tab:regret table}, we present the hypertuned regret for each
algorithm on each data set. These results are subsequently summarised by
Table~\ref{tab:hypertune.overview}, which is the hypertuned analogue of
Table~\ref{tbl:sumresults}. Here we compare all algorithms to AdaGrad
instead of OGDt, as it has the best hypertuned performance among prior
existing algorithms. (Interestingly, the theoretical prediction that
OGDnorm dominates OGDt does materialise for the hypertuned regret, while
it did not under the bonafide tuning of Section~\ref{sec:experiments}.)
We can conclude from Table~\ref{tab:hypertune.overview} that AdaGrad,
OGDnorm and MetaGrad, in either full or sketched forms, all have very
similar performance. As discussed in
Section~\ref{sec:additionalexperiments}, this suggests that the
empirical superiority of MetaGrad in the experiments from
Section~\ref{sec:experiments} may be attributed to its ability to better
adapt to the optimal learning rate $\eta$.
We should also remember that the hypertuned performance is not a-priori
indicative of practical results. It is an interesting challenge to
develop new methods that achieve as much of this hypertuned performance
in practice as possible, but which also come with corresponding
theoretical guarantees. In this quest MetaGrad constitutes a solid first step.

\begin{table}[ht]
\centering
\begin{tabular}{lrrr}
  \hline
Algorithm & \# best & \# better than AdaGrad & MedianRatio \\
  \hline
   AdaGrad & 14 & 34 & 1.00 \\ 
  OGDnorm & 8 & 18 & 0.99 \\ 
  OGDt & 8 & 14 & 1.07 \\ 
  MGCo & 1 & 6 & 1.15 \\ 
  MGF2 & 1 & 10 & 1.09 \\ 
  MGF11 & 6 & 18 & 1.01 \\ 
  MGF26 & 4 & 18 & 1.01 \\ 
  MGF51 & 7 & 18 & 1.01 \\ 
  MGFull & 9 & 18 & 1.01 \\ 
   \hline
\end{tabular}
\caption{Comparison of algorithms with AdaGrad, with the $\sigma$
hyperparameter optimized in hindsight for the data. The MedianRatio column
contains the median ratio of the regret of each algorithm over that of
AdaGrad. Columns ``\# best'' and ``\# better than AdaGrad'' count cases where
the algorithm is at most one regret unit above the best algorithm or AdaGrad,
respectively.
}\label{tab:hypertune.overview}
\end{table}

\clearpage
\begin{table}[ht]
  \centering
\resizebox{\linewidth}{!}{%
\begin{tabular}{lllllllllll}
  \hline
   Data set & Loss & AdaGrad & OGDnorm & OGDt & MGCo & MGF2 & MGF11 & MGF26 & MGF51 & MGFull \\
  \hline
a9a & hinge & 1512 & \textbf{484} & 504 & 664 & 627 & 592 & 588 & 583 & 585 \\ 
   & logistic & \textbf{304} & 412 & 472 & 527 & 512 & 484 & 478 & 473 & 473 \\ 
  australian & hinge & 35 & 33 & \textbf{31} & 40 & 38 & 34 & 34 & 34 & 34 \\ 
   & logistic & \textbf{25} & 26 & \textbf{24} & 33 & 36 & 34 & 34 & 34 & 34 \\ 
  breast-cancer & hinge & \textbf{20} & \textbf{20} & \textbf{19} & 23 & 23 & 22 & 22 & 22 & 22 \\ 
   & logistic & 21 & \textbf{19} & 26 & 23 & 24 & 24 & 24 & 24 & 24 \\ 
  covtype & hinge & 8070 & 6382 & 6205 & 9095 & 6811 & 5648 & 5067 & \textbf{4939} & \textbf{4939} \\ 
   & logistic & 3339 & 4077 & 4222 & 3844 & 4017 & 3214 & 2240 & 1927 & \textbf{1926} \\ 
  diabetes & hinge & \textbf{58} & 73 & 76 & 71 & 62 & 59 & 59 & 59 & 59 \\ 
   & logistic & \textbf{36} & 50 & 55 & 49 & 40 & 39 & 39 & 39 & 39 \\ 
  heart & hinge & \textbf{34} & 35 & \textbf{33} & 35 & 35 & \textbf{33} & \textbf{34} & \textbf{34} & \textbf{34} \\ 
   & logistic & \textbf{28} & \textbf{28} & 30 & 30 & 30 & 29 & 29 & 29 & 29 \\ 
  ijcnn1 & hinge & \textbf{419} & 550 & 542 & 597 & 751 & 640 & 502 & 502 & 502 \\ 
   & logistic & \textbf{500} & 663 & 782 & 804 & 1021 & 823 & 715 & 715 & 715 \\ 
  ionosphere & hinge & 106 & \textbf{103} & 110 & 110 & 110 & 108 & 108 & 108 & 108 \\ 
   & logistic & 106 & \textbf{98} & 111 & 106 & 104 & 103 & 103 & 103 & 103 \\ 
  phishing & hinge & \textbf{290} & 471 & 433 & 378 & 326 & 311 & 301 & 303 & 303 \\ 
   & logistic & \textbf{258} & 457 & 492 & 423 & 345 & 335 & 331 & 330 & 330 \\ 
  splice & hinge & 210 & 200 & 211 & 179 & \textbf{175} & 177 & 180 & 178 & 177 \\ 
   & logistic & 150 & 147 & 174 & 137 & 139 & 138 & 137 & \textbf{136} & \textbf{136} \\ 
  w8a & hinge & 3299 & 1458 & 2545 & 935 & 875 & 875 & 875 & \textbf{873} & \textbf{873} \\ 
   & logistic & 1147 & 1123 & 2764 & 1224 & 1159 & 1133 & 1124 & 1121 & \textbf{1117} \\ 
  abalone & absolute & 1038 & 1040 & 1033 & 1211 & 1131 & \textbf{692} & \textbf{692} & \textbf{692} & \textbf{692} \\ 
   & squared & 6204 & 6950 & 7627 & 6698 & 7127 & \textbf{6179} & \textbf{6179} & \textbf{6179} & \textbf{6179} \\ 
  bodyfat & absolute & 23 & \textbf{18} & \textbf{17} & 28 & 24 & 23 & 23 & 23 & 23 \\ 
   & squared & 6 & \textbf{3} & \textbf{4} & 7 & 6 & 6 & 6 & 6 & 6 \\ 
  cpusmall & absolute & 11379 & 10608 & 10577 & 15284 & 10489 & \textbf{9922} & 9976 & 9976 & 9976 \\ 
   & squared & 479645 & 478014 & 694804 & 545240 & 279054 & \textbf{278921} & 278947 & 278947 & 278947 \\ 
  housing & absolute & \textbf{666} & 794 & 795 & 866 & 894 & 776 & 746 & 746 & 746 \\ 
   & squared & \textbf{10425} & 11150 & 15954 & 13368 & 12790 & 12002 & 11995 & 11995 & 11995 \\ 
  mg & absolute & 20 & 15 & \textbf{14} & 30 & 36 & 28 & 28 & 28 & 28 \\ 
   & squared & 9 & 6 & \textbf{4} & 13 & 13 & 12 & 12 & 12 & 12 \\ 
  space\_ga & absolute & 99 & 108 & 108 & 113 & 109 & \textbf{60} & \textbf{60} & \textbf{60} & \textbf{60} \\ 
   & squared & \textbf{40} & 43 & 43 & \textbf{40} & 45 & 45 & 45 & 45 & 45 \\
   \hline
\end{tabular}
}
\caption{The regret of each algorithm for the various data sets and loss
functions, with the $\sigma$ hyperparameter of each method optimized in
hindsight for the data. Boldface indicates the regret differs less than
$1$ from the minimum regret for the row.}
\label{tab:hypertune.regret.table}
\end{table}
\clearpage

\DeclareRobustCommand{\VAN}[3]{#3} %
\bibliography{MG,../bib}

\DeclareRobustCommand{\VAN}[3]{#2} %

\end{document}